\theoremstyle{plain}
\newtheorem{theorem}{Theorem}[section]
\newtheorem{proposition}[theorem]{Proposition}
\newtheorem{lemma}[theorem]{Lemma}
\newtheorem{corollary}[theorem]{Corollary}
\theoremstyle{definition}
\newtheorem{definition}[theorem]{Definition}
\newtheorem{assumption}[theorem]{Assumption}
\theoremstyle{remark}
\newtheorem{example}[theorem]{Example}
\newcommand{\name}{mTAF}
\newcommand{\names}{mTAFs}
\def\1{\bm{1}}
\def\rvx{{\mathbf{x}}}
\def\rvz{{\mathbf{z}}}
\def\vu{{u}}
\def\vx{{x}}
\def\vz{{z}}
\DeclareMathAlphabet{\mathsfit}{\encodingdefault}{\sfdefault}{m}{sl}
\SetMathAlphabet{\mathsfit}{bold}{\encodingdefault}{\sfdefault}{bx}{n}
\def\sR{{\mathbb{R}}}
\def\sS{{\mathbb{S}}}
\newcommand{\E}{\mathbb{E}}
\renewcommand{\E}{\mathbb{E}}
\title{Marginal Tail-Adaptive Normalizing Flows}
\icmltitlerunning{Marginal Tail-Adaptive Normalizing Flows}
\begin{document}

\twocolumn[
\icmltitle{Marginal Tail-Adaptive Normalizing Flows}



\icmlsetsymbol{equal}{*}

\begin{icmlauthorlist}
\icmlauthor{Mike Laszkiewicz}{math,cs}
\icmlauthor{Johannes Lederer}{math}
\icmlauthor{Asja Fischer}{cs}
\end{icmlauthorlist}

\icmlaffiliation{math}{Faculty of Mathematics, Ruhr University, Bochum, Germany}
\icmlaffiliation{cs}{Center of Computer Science, Bochum, Germany}

\icmlcorrespondingauthor{Mike Laszkiewicz}{Mike.Laszkiewicz@rub.de}

\icmlkeywords{Normalizing Flows, Heavy Tails, Extreme-Value Theory}

\vskip 0.3in
]



\printAffiliationsAndNotice{\icmlEqualContribution} 
\normalem
\begin{abstract}
Learning the tail behavior of a distribution is a notoriously difficult problem. By definition, the number of samples from the tail is small, and deep generative models, such as normalizing flows, tend to concentrate on learning the body of the distribution. In this paper, we focus on improving the ability of normalizing flows to correctly capture the tail behavior and, thus, form more accurate models.
We prove that the marginal tailedness of an autoregressive flow can be controlled via the tailedness of the marginals of its base distribution.
This theoretical insight leads us to a novel type of flows based on flexible base distributions and data-driven linear layers. 
An empirical analy\-sis shows that the proposed method improves on the accuracy---especially on the tails of the distribution---and is able to generate heavy-tailed data.
We demonstrate its application on a weather and climate example, in which capturing the tail behavior is essential.
\end{abstract}

\section{Introduction}
Heavy-tailed distributions are known to occur in various applications in biology, finance, climate, and many other fields. 
Quantities with a heavy-tailed distribution are, for example, the length of protein sequences in genomes \citep{tails_bio}, returns of stocks \citep{tails_stock}, or the occurence and impacts of weather and climate events \citep{climate}. 
In these applications, heavy-tailed events are often the most substantial samples and hence, ignoring them---thinking of underestimating a maximum flood level or the loss of a financial crisis---would yield to crucial model failures. 
From a theoretical point of view, heavy-tailed distributions emerge from several circumstances, including the limiting
distribution in the generalized central limit theorem, of a multiplicative process, or as the limit of
an extremal process \citep{heavytails_book}. Further, many distributions that arise from a functional relationship, such as the ratio of two standard normal distributed random variables, are heavy-tailed, highlighting their importance for models that incorporate known physical relationships among quantities.
Given the frequency of occurrence and their potential impact, developing generative models that allow to learn heavy-tailed distributions are an urgent task to solve. We approach this task by providing important theoretical groundings regarding the expressiveness of Normalizing Flows (NFs) for heavy-tailed data. 


Normalizing Flows \citep{rippel2013high, tabakturner, dinh2014nice, rezende2015variational} are a popular class of deep generative models.
Despite their success in learning tractable distributions where both sampling and density evaluation can be efficient and exact, their ability to model heavy-tailed distributions is known to be limited.
\citet{pmlr-v119-jaini20a} identified the problem that autoregressive affine NFs are unable to map a light-tailed distribution to a heavy-tailed distribution.
They propose to solve this issue by replacing the Gaussian base distribution by a multivariate $t$-distribution with one learnable degree of freedom, leading to a model referred to as Tail-Adaptive Flows (TAF).

\begin{figure*}[t]
\begin{center}
\includegraphics[width=\textwidth]{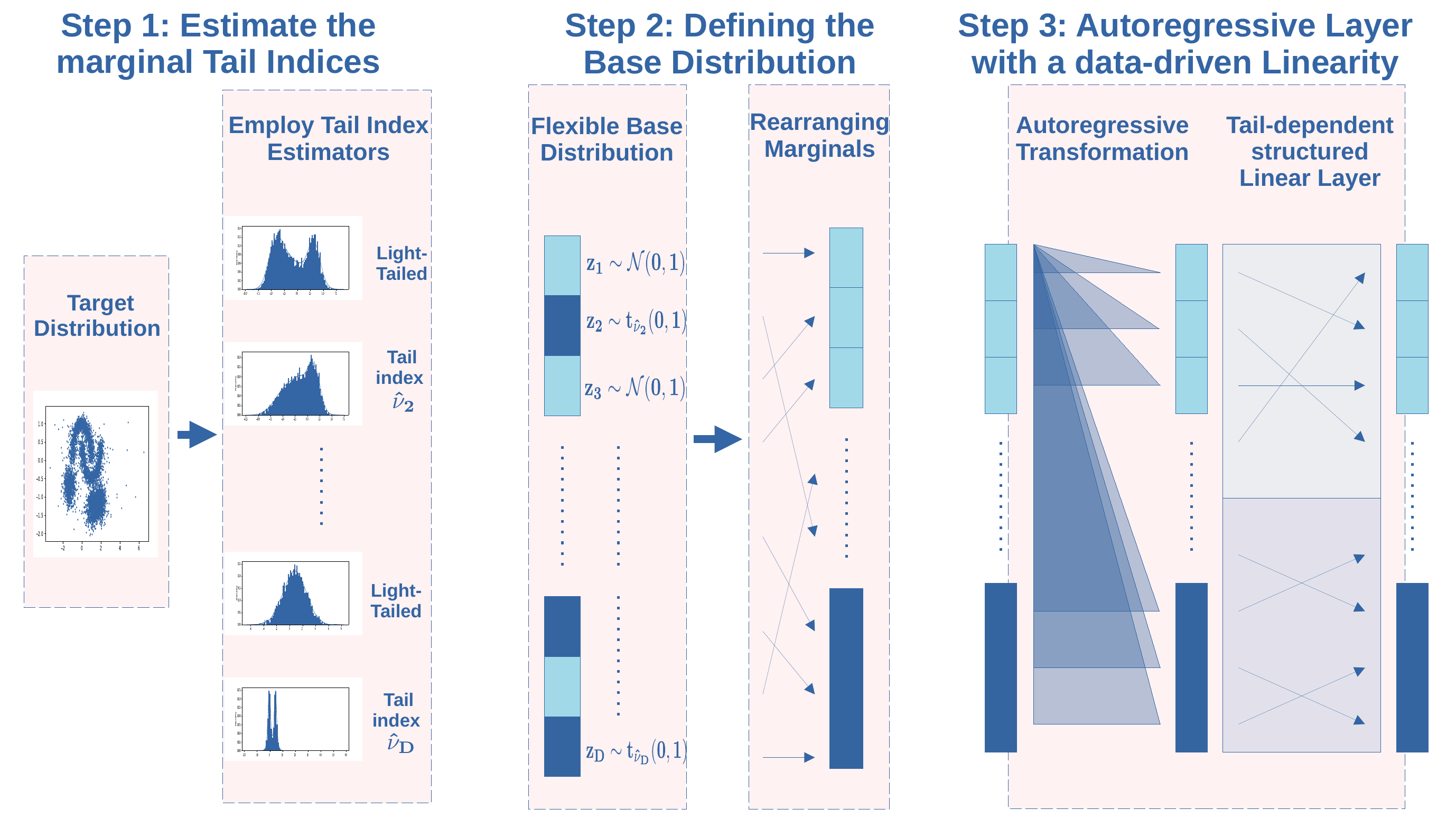}
\end{center}
\caption{An Overview of \name{}. In a first step, we apply estimators from extreme value theory to classify the marginals as heavy- or light-tailed. This classification defines a flexible base distribution consisting of marginal Gaussians and marginal $t$-distributions with flexible degree of freedom, as illustrated by Step 2 of this figure. Further, we rearrange the marginals such that the first $d_l$ marginals are light-tailed, whereas the remaining marginals are heavy-tailed. 
\name{} is then constructed using several flow-layers as visualized in Step 3: we employ a triangular mapping, followed by a 2-group permutation scheme, which can be generalized to general 2-group linearities (Section~\ref{sec:mtaf_nsf}).
At the end, we restore the original ordering using the inverse of the permutation employed in Step 2. Using Theorem~\ref{maintheo}, we prove that \names{} are marginally tail-adaptive (Corollary~\ref{corro}).\label{fig:overview}}
\end{figure*}

\paragraph{Contributions}
In this paper, we extend the work of \citet{pmlr-v119-jaini20a} in multiple ways: First, while TAF allows to model distributions with a heavy-tailed Euclidean norm, we show that modeling multivariate distributions, where some of the marginals are heavy- and some are light-tailed, still poses a problem. More precisely, we identify the problem that an autoregressive affine NF using a base distribution with solely heavy-tailed marginals (such as TAF) is only able to provide a target distribution with just heavy-tailed marginals as well. Consequently, such a NF is not capable of learning distributions with mixed marginal tail behavior. Second, to solve the problem, we derive a theoretical result that states conditions under which the marginal tailedness of the base distribution is preserved. Third, we turn these theoretical insights into a novel modification of autoregressive NFs, which allows to model the marginal tail behavior. Since the proposed model preserves marginal tailedness, we call it \textit{marginal tail-adaptive flow} (\name). The proposed method combines estimators from extreme value theory, a flexible base distribution, and a novel data-driven type of linearities as illustrated in Figure~\ref{fig:overview}. Furthermore, we notice that the autoregressive layers in Neural Spline Flows (NSFs \citep{durkan2019neural}), which are a SOTA architecture, are linear in their tails and, therefore, that we can apply all theory derived by \citet{pmlr-v119-jaini20a} and presented in this paper for neural spline layers. We introduce a simple modification on the LU-layers that ensures that NSF preserve the full marginal tailedness structure of the base distribution. Lastly, we present a generalized and more flexible version of mTAF, which we call \textit{generalized Tail-Adaptive Flow} (gTAF). Our theory is backed up by an experimental analysis demonstrating the superior performance of the proposed methods in learning the tails, especially when it comes to generating synthetic tail samples. Finally, we apply \name{} and gTAF on a climate example to generate heavy-tailed weather data.

\paragraph{Notational Conventions}
In the following, we will denote random variables by bold letters, such as $\rvx$, and its realisations by non-bold letters, $x$. We use this notation for multivariate and for univariate random variables. Further, we denote the $j$th component of $\rvx$ by $\rvx_j$, and $\rvx_{\leq j}$ or $\rvx_{<j}$ are the first $j$ or $j-1$ components of $\rvx$, respectively. We denote the random variable representing the base distribution by $\rvz$ and the random variable representing the target distribution by $\rvx$. Further, for notational convenience, we denote the probability density functions (PDFs) of $\rvx$ and $\rvz$ by $p$ and $q$. 
Finally, we assume that both random variables $\rvx$ and $\rvz$ have continuous and positive density on $\sR^D$, i.e $p(x), \, q(z)>0$ for all $x, z \in \sR^D$, where $D$ is the dimensionality of $\rvx$ and $\rvz$.

\section{Background}
In this section, we give a brief introduction to heavy-tailed distributions and
present needed background knowledge about normalizing flows. 

\subsection{Heavy-tailed Distributions} \label{sec:heavy-tails}
Heavy-tailed distributions are distributions that have heavier tails (i.e.~decay slower) than the exponential distribution. 
Loosely speaking, slowly decaying tails allow to model distributions that generate samples, which differ by a large magnitude from the rest of the samples. 
For a univariate random variable $\rvx$ we define heavy-tailedness via its moment-generating function\footnote{One can readily show that this definition is equivalent to the definition, which compares the tails of $\rvx$ to the tails of an exponential distribution. See Section~1 in \citet{heavytails_book}.}:
\begin{definition}[Heavy-Tailed Random Variables]\label{defi:heavytails}
    Consider a random variable $\rvx \in \sR$ with PDF $p$. We say that $\rvx$ is heavy-tailed if and only if
    \begin{equation*}
    \forall \lambda >0 \;: \; 
        \E_{\rvx} \bigl[ e^{\lambda \rvx} \bigr] = \infty \enspace .
    \end{equation*}
    The function $m_{p}(\lambda):= \E_{\rvx}[\exp(\lambda \rvx)]$ is known as the moment-generating function of $\rvx$. Random variables that are not heavy-tailed are said to be light-tailed.
\end{definition}
Note that this definition is, strictly speaking, merely a definition for heavy right tails. 
We say a random variable $\rvx\in\sR$ has heavy left tails if $-\rvx$ has heavy tails according to Definition~\ref{defi:heavytails}. For simplicity of derivations and w.l.o.g., we proceed with this definition but the derived results can analogously be applied to left tails.

We can assess the degree of tailedness of a distribution. While there are many equivalent notions of the so called tail index, the most straight-forward definition is via the existence of moments:
\begin{definition}[Tail Index] \label{defi:tail_index}
    A random variable $\rvx\in\sR$ with PDF $p$ is said to have tail index\footnote{Notice that the notion of a tail index is only valid for regularly varying random variables, which are a subclass of heavy-tailed random variables. For the purpose of this work, it is sufficient to consider regularly varying random variables. More details can be found in \citet{heavytails_book}.}
    $\alpha$ if it holds that 
    \begin{equation*}
        \E_{\rvx}[\vert \rvx \vert^\beta] \begin{cases}
            < \infty\; , \enspace \text{if }\beta < \alpha \; , \\
            = \infty\; , \enspace \text{if }\beta>\alpha \; .
        \end{cases}
    \end{equation*}
\end{definition}
Since the tail index is tightly related to the decay rate of the PDF, it enables us to assess the degree of heavy-tailedness of a random variable. Therefore, estimation of the tail index became an important objective in extreme value theory and statistical risk assessment \citep[see e.g.][]{embrechts2013modelling}. 
Since the existence of the moment does not depend on the ``body'' of $\rvx$ but only on the tails of $\rvx$ (see Proposition~\ref{prop:tailindex_dependsontails} in Section~\ref{sec:theoryforproofs} in the Appendix), estimating the tail index by fitting a full parametric model to all data e.g.~via likelihood maximization leads to a biased estimator. Instead, semi-parametric estimators have been developed, which aim to fit a distribution only on the tails. Popular methods for tail estimation include the Hill estimator \citep{hill1975simple}, the moment estimator \citep{moments_estimator}, and kernel-based estimators \citep{kernel_estimator}. 
In Section~\ref{sec:tailest} of the Appendix, we discuss these tail estimators and review some practical issues with these.

An example of a heavy--tailed distribution is the standardized $t$-distribution, which has parameter $\nu>0$ referred to as the degree of freedom and a density function given by
\begin{equation*}
    p(x):= \frac{ \Gamma\bigl( \frac{\nu + 1}{2} \bigr) }{\sqrt{\nu \pi} \Gamma \bigl( \frac{\nu}{2}\bigr)} \bigl( 1 + \frac{x^2}{\nu} \bigr)^{-\frac{\nu + 1}{2}} \; , \enspace  x \in \sR \; ,
\end{equation*}
where $\Gamma$ is the Gamma function.
It is known that the $t$-distribution 
has tail index $\nu$ (see e.g. \citet{Kirkby2019MomentsOS} for a detailed reference).

In the multivariate setting, there exist various definitions of heavy-tailedness.
For instance \citet{resnick} make use of a definition based on multivariate regular variation. 
\citet{pmlr-v119-jaini20a} define a multivariate random variable $\rvx$ to be heavy-tailed if the $\ell_2$-norm is heavy-tailed, a property which we refer to as $\ell_2$-heavy-tailed, and which is formally defined as follows:
\begin{definition}[$\ell_2$-Heavy-Tailed]\label{defi:ell2}
    Let $\rvx \in \sR^D$ be a multivariate random variable. Then, we call $\rvx$ $\ell_2$-heavy-tailed if $\Vert \rvx \Vert$ is univariately heavy-tailed according to Definition~\ref{defi:heavytails}, where $\Vert \cdot \Vert$ denotes the $\ell_2$-norm. Otherwise, we call $\rvx$ $\ell_2$-light-tailed.
\end{definition}

\subsection{Normalizing Flows} \label{sec:NFs}
The fundamental idea behind NFs is based on the change-of-variables formula for probability density functions (PDFs) given in the following theorem.
\begin{theorem}[Change-of-Variables]
    Consider random variables $\rvx, \rvz \in\mathbb{R}^D$ and a diffeomorphic map $T:\sR^D \rightarrow \sR^D$ such that $\rvx=T(\rvz)$. Then, it holds that the PDF of $\rvx$ satisfies 
    \begin{equation}
        \label{eq:changeofvar}
        p(\vx)= q\bigl(T^{-1}(\vx)\bigr) \bigl\vert \det J_{T^{-1}}(\vx) \bigr\vert \quad \forall \vx\in\sR^D \enspace ,
    \end{equation}
    where $J_{T^{-1}}(\vx)$ is the Jacobian of $T^{-1}$ evaluated at $\vx\in\sR^D$.  
\end{theorem}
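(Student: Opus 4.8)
The plan is to obtain the identity from the way $\rvx$ and $\rvz$ are related as push‑forwards, combined with the classical change‑of‑variables formula for Lebesgue integrals. First I would fix an arbitrary Borel set $A\subseteq\sR^D$ and use $\rvx=T(\rvz)$ together with the fact that $T$ is a homeomorphism (so $T^{-1}(A)$ is again Borel) to write
\begin{equation*}
\int_A p(x)\,\dint x \;=\; \sP(\rvx\in A) \;=\; \sP\bigl(\rvz\in T^{-1}(A)\bigr) \;=\; \int_{T^{-1}(A)} q(z)\,\dint z \enspace .
\end{equation*}

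Next, since $T$ is a diffeomorphism, $T^{-1}$ is of class $C^1$ and its differential is invertible everywhere, so the hypotheses of the multivariate substitution rule are met; applying it to the right‑hand integral with the substitution $z=T^{-1}(x)$ gives
\begin{equation*}
\int_{T^{-1}(A)} q(z)\,\dint z \;=\; \int_A q\bigl(T^{-1}(x)\bigr)\,\bigl\vert \det J_{T^{-1}}(x) \bigr\vert\,\dint x \enspace .
\end{equation*}
Comparing the two displays yields $\int_A \bigl( p(x) - q(T^{-1}(x))\,\vert\det J_{T^{-1}}(x)\vert \bigr)\,\dint x = 0$ for every Borel set $A$, hence the integrand vanishes Lebesgue‑almost everywhere.

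Finally I would upgrade this to a pointwise statement: $p$ is continuous by the standing assumption in the Notational Conventions, and $x\mapsto q(T^{-1}(x))\,\vert\det J_{T^{-1}}(x)\vert$ is continuous as a composition and product of continuous maps (using that $T^{-1}$ is $C^1$, so $J_{T^{-1}}$ is continuous). Two continuous functions that agree almost everywhere agree everywhere, which is exactly \eqref{eq:changeofvar}. The argument has no real depth; the only points that need genuine care are checking that "diffeomorphism" supplies precisely the regularity ($C^1$ inverse with nowhere‑vanishing Jacobian determinant) required to invoke the integral change‑of‑variables theorem, and noting that the jump from "equal integrals over all Borel sets" to "equal pointwise" is what makes the continuity assumption on the densities indispensable.
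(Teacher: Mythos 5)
Your argument is correct: the push-forward identity over Borel sets, the $C^1$ substitution rule justified by the diffeomorphism hypothesis, and the upgrade from almost-everywhere to pointwise equality via the standing continuity assumption on the densities constitute the standard proof of this classical theorem, which the paper itself states as background without proof. Nothing is missing, and your care about where continuity is actually needed is exactly the right point to flag.
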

This formula allows us to evaluate the possibly intractable PDF of $\rvx$ if we can evaluate both, the PDF of $\rvz$ and $T^{-1}(x)$, and efficiently calculate the Jacobian-determinant $\det J_{T^{-1}}(x)$. As $T$ maps $\rvz$ to $\rvx$, we denote the distribution of $\rvz$ and $\rvx$ as the \emph{base} and the \emph{target distribution}, respectively.

To model the PDF of $\rvx$ using NFs, it is common to set the base distribution to a standard normal distribution (i.e.,
$\rvz\sim\mathcal{N}(0, I)$) and to employ  likelihood maximization to learn a parameterized transformation 
\begin{equation*}
    T_\theta := T_\theta^{(L)} \circ \cdots \circ T_\theta^{(1)},
\end{equation*}
which, yet, remains tractable and diffeomorphic. 
Masked autoregressive flows (MAFs \citep{papamakarios2017masked}) are one popular architecture\footnote{Stricly speaking, \eqref{eq:maf} shows the transformations of an IAF \citep{kingma2016improved}. However, MAF and IAF are theoretically equivalent, the differences lie only in their architectures.}, which employ transformations $T=(T_1, \dots , T_D)^\top$ of the form
\begin{equation} \label{eq:maf}
T_j(\vz):= \mu_j(\vz_{<j}) + \exp(\sigma_j(\vz_{<j})) \vz_j \quad \text{for } j\in\{1,\dots,D\} \enspace,
\end{equation}
where $\mu_j$ and $\sigma_j$ are neural networks, which obtain the first $j - 1$ components of $\vz$ as input and output a scalar. Composing several transformations of the form~\eqref{eq:maf}, we obtain the MAF. The autoregressive form in~\eqref{eq:maf} allows us to efficiently evaluate the Jacobian-Determinant due to the triangular form of $J_T(x)$, which is why autoregressive NFs are also referred to as triangular flows. 
By shuffling the ordering of the components, i.e. applying a permutation after each autoregressive transformation, we are able to form more diverse causal dependencies, leading to more expressive models. It has been shown that replacing the permutations by more general invertible layers can further improve the estimation performance \citep{oliva2018transformation}. 
In summary MAFs consist of multiple consecutive layers $T_\theta^{(l)} \circ P^{(l)}$, where $P^{(l)}\in \sR^{D\times D}$ is some linear layer.
%
Other examples for triangular flows include NAF \citep{huang2018neural}, and NSF \citep{durkan2019neural}, where the latter substitutes the affine transformation~\eqref{eq:maf} by a rational of two splines. 
Further types of NFs include invertible ResNets \citep{jacobsen2018revnet, behrmann2019invertible, chen2019residual}, continuous flows \citep{chen2018neural, grathwohl2018ffjord}, and many more \citep{kobyzev2020normalizing}.
\paragraph{Tail-Adaptive Flows.}\label{sec:TAF}
\citet{pmlr-v119-jaini20a} investigated the ability of triangular flows to learn heavy-tailed distributions. %
The authors have shown that if a triangular affine flow transforms a $\ell_2$-light-tailed distribution, such as the multivariate Gaussian distribution, to a $\ell_2$-heavy-tailed target distribution, then $T_\theta$ cannot be Lipschitz continuous. And more explicitly, it holds the following. 
\begin{theorem}{\citep{pmlr-v119-jaini20a}}\label{theo:jaini}
    Let $\rvz$ be a $\ell_2$-light-tailed random variable and $T$ be an affine triangular flow such that $T_j(z_{\leq j})=\mu_j(z_{<j}) + \sigma_j(z_{<j}) z_j$ for all $j$. If $\sigma_j$ is bounded above and $\mu_j$ is Lipschitz for all $j$, then the transformed variable $\rvx$ is also $\ell_2$-light-tailed.  
\end{theorem}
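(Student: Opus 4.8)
The plan is to show that under the stated hypotheses the flow $T$ has at most \emph{linear growth}, i.e.\ there exist constants $a,b\ge 0$ with $\Vert T(z)\Vert \le a\Vert z\Vert + b$ for all $z\in\sR^D$, and then to transfer light-tailedness of $\Vert\rvz\Vert$ to $\Vert\rvx\Vert=\Vert T(\rvz)\Vert$ by a direct estimate of the moment-generating function. It is worth noting that one should not expect $T$ itself to be globally Lipschitz: the scale term $\sigma_j(z_{<j})\,z_j$ has partial derivatives $z_j\,\partial_k\sigma_j(z_{<j})$, which need not stay bounded as $|z_j|\to\infty$ even when $\sigma_j$ is bounded. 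The weaker linear-growth bound is exactly what is needed, and it does follow from Lipschitzness of the $\mu_j$ together with boundedness of the $\sigma_j$.

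For the growth bound I would argue componentwise, which is clean here because in the statement of Theorem~\ref{theo:jaini} the maps $\mu_j,\sigma_j$ take $z_{<j}$ (not $x_{<j}$) as input, so $T_j$ is directly a function of $z$. Fix $j$. Since $\mu_j$ is Lipschitz with some constant $L_j$, $|\mu_j(z_{<j})|\le|\mu_j(0)|+L_j\Vert z_{<j}\Vert$. Since $\sigma_j$ is bounded above and, being a nonvanishing scale factor, may be taken with $0<\sigma_j\le M$, we have $|\sigma_j(z_{<j})|\le M$, hence $|\sigma_j(z_{<j})\,z_j|\le M|z_j|$. Combining with $\Vert z_{<j}\Vert\le\Vert z\Vert$ and $|z_j|\le\Vert z\Vert$,
\begin{equation*}
|T_j(z)|\;\le\;|\mu_j(0)|+(L_j+M)\,\Vert z\Vert\;=:\;c_j+a_j\Vert z\Vert .
\end{equation*}
Summing over $j$ and bounding the $\ell_2$-norm by the $\ell_1$-norm of the coordinates gives $\Vert T(z)\Vert\le\sum_{j=1}^D|T_j(z)|\le b+a\Vert z\Vert$ with $b:=\sum_j c_j$, $a:=\sum_j a_j$. (If instead the $\mu_j,\sigma_j$ depended on $x_{<j}$, as in a MAF, the same conclusion follows by an induction on $j$ using the already-proven bounds on $T_1,\dots,T_{j-1}$.)

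With the growth bound in hand, I would conclude as follows. If $a=0$ then $\Vert T(z)\Vert\le b$ is bounded and $\Vert\rvx\Vert$ is trivially light-tailed, so assume $a>0$. Since $\rvz$ is $\ell_2$-light-tailed, by Definition~\ref{defi:ell2} and Definition~\ref{defi:heavytails} there is $\lambda_0>0$ with $\E\bigl[e^{\lambda_0\Vert\rvz\Vert}\bigr]<\infty$. For any $\lambda\in(0,\lambda_0/a]$, monotonicity of $t\mapsto e^{\lambda t}$ and the growth bound yield
\begin{equation*}
\E\bigl[e^{\lambda\Vert\rvx\Vert}\bigr]=\E\bigl[e^{\lambda\Vert T(\rvz)\Vert}\bigr]\le\E\bigl[e^{\lambda(a\Vert\rvz\Vert+b)}\bigr]=e^{\lambda b}\,\E\bigl[e^{\lambda a\Vert\rvz\Vert}\bigr]\le e^{\lambda b}\,\E\bigl[e^{\lambda_0\Vert\rvz\Vert}\bigr]<\infty .
\end{equation*}
Thus the moment-generating function of $\Vert\rvx\Vert$ is finite for some $\lambda>0$, so $\Vert\rvx\Vert$ is not heavy-tailed, i.e.\ $\rvx$ is $\ell_2$-light-tailed.

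The arguments above are essentially routine; the only conceptual point requiring care is the first paragraph's observation that one must establish linear growth rather than Lipschitz continuity, and that the boundedness hypothesis on $\sigma_j$ has to be used in the form $|\sigma_j|\le M$ (which is why positivity/nonvanishing of the scale is implicitly relied upon). Minor bookkeeping: $\mu_j(0)$ is finite because each $\mu_j\colon\sR^{j-1}\to\sR$, and Definition~\ref{defi:heavytails} only demands one admissible $\lambda$, so no uniformity over $\lambda$ is needed.
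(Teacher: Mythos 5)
This theorem is cited from Jaini et al.\ (2020); the paper itself does not reprove it, so there is no internal proof to compare against. Your argument is correct and self-contained. The linear-growth bound $\Vert T(z)\Vert \le a\Vert z\Vert + b$ is exactly the right intermediate step, and your observation that $T$ need not be globally Lipschitz (the derivative of $\sigma_j(z_{<j})\,z_j$ in a $z_{<j}$-direction scales with $z_j$) is a genuine subtlety worth noting explicitly, since it clarifies why one should aim for linear growth rather than Lipschitz continuity. One point to make explicit rather than treat as ``implicit'': you need $\sigma_j>0$ in order to turn the hypothesis ``$\sigma_j$ is bounded above'' into $|\sigma_j|\le M$. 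This is indeed part of the setting---the flow must be invertible, and the paper's Proposition~\ref{prop:allheavybad} assumes $\sigma_j>0$ explicitly---but it should be stated as an assumption being used rather than described as smuggled in. For context, where the paper invokes Theorem~\ref{theo:jaini} inside the proof of Theorem~\ref{maintheo}, it refers to a componentwise bound on $m_{\rvx_j}(\lambda)=\int e^{\lambda T_j(z_{\le j})}\,p_{\le j}(z_{\le j})\,dz_{\le j}$, which indicates the cited Jaini et al.\ proof argues marginal by marginal (one then passes to $\ell_2$-light-tailedness via $\Vert\rvx\Vert\le\sum_j|\rvx_j|$ and closure of light-tailedness under sums). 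Your route bounds $\Vert T(z)\Vert$ directly, which addresses the $\ell_2$-phrasing of the conclusion more immediately; the two are equivalent in substance.
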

Furthermore, the authors prove that any triangular mapping from an elliptical distribution to a heavier-tailed elliptical distribution must have an unbounded Jacobian-determinant.
These results illuminate that learning a heavy-tailed distribution using NFs leads to non-Lipschitz transformations and unbounded Jacobians, which inevitably affects training robustness \citep{pmlr-v130-behrmann21a}.
Motivated by these result, \citet{pmlr-v119-jaini20a} propose \emph{Tail-Adaptive Flows} (TAF), which replace the Gaussian base distribution by a multivariate $t$-distribution with one learnable degree of freedom.

\section{Learning the correct marginal Tail Behavior with \name } \label{sec:mTAFandTheory}
In this section, we present a simple extension to triangular affine flows that
allows to model distributions with a flexible tail behavior. We start by presenting our theoretical results in Section~\ref{sec:motivation}.
Motivated by these results, we propose \textit{marginally Tail-Adaptive Flow} (\name) in Section~\ref{sec:method}, which we apply to NSFs in Section~\ref{sec:mtaf_nsf}.
Lastly, we present a more flexible relaxation of \name{}, which we call \textit{generalized Tail-Adaptive Flow} (gTAF) in Section~\ref{sec:gtaf}.

\subsection{The Necessity of a flexible Base Distribution}\label{sec:motivation}
In this work, we investigate the tailedness of NFs more thoroughly through the lense of marginal tailedness, i.e. we consider the univariate tailedness of the marginal distributions of $\rvx_j$.Therefore, we introduce the following definitions:
\begin{definition}[$j$-heavy-tailed, mixed-tailed, fully heavy-tailed, equal Tail Behavior]
    We call a random variable $\rvx\in\sR^D$ $j$-heavy-tailed if its $j$th marginal $\rvx_j$ is heavy-tailed according to Definition~\ref{defi:heavytails}. Otherwise, we call $\rvx$ $j$-light-tailed. $\rvx$ is said to be mixed-tailed if there exists $j_1, j_2$ such that $\rvx$ is $j_1$-heavy-tailed and $j_2$-light-tailed. Further, we say that $\rvx$ is fully heavy-tailed if $\rvx$ is $j$-heavy-tailed for all $j\in \{1,\dots, D\}$. We define two random variables $\rvx$ and $\rvz$ to have equal tail behavior if it holds for all $j$ that 
    \begin{equation*}
        \rvx \text{ is } j\text{--heavy--tailed} \enspace \Leftrightarrow \enspace \rvz \text{ is } j\text{--heavy--tailed} \enspace .
    \end{equation*}
\end{definition}
We found the following relation to Definition \ref{defi:ell2}.
\begin{proposition}[$j$-heavy-tailedness induces $\ell_2$-heavy-tailedness]\label{prop:marg&normtail}
    Assume that $\rvx$ is $j$-heavy-tailed for any $j$. Then, $\rvx$ is also $\ell_2$-heavy-tailed. 
\end{proposition}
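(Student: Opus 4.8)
The plan is to exploit the elementary inequality $\Vert x \Vert \ge \vert x_j \vert$ valid for every $x \in \sR^D$, together with monotonicity of the exponential function and of the expectation operator. Recall that, by Definition~\ref{defi:ell2}, it suffices to show that the univariate random variable $\Vert \rvx \Vert$ is heavy-tailed in the sense of Definition~\ref{defi:heavytails}, i.e.\ that $\E_{\rvx}[e^{\lambda \Vert \rvx \Vert}] = \infty$ for every $\lambda > 0$.

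First I would fix an arbitrary $\lambda > 0$. Since $\Vert \rvx \Vert = \sqrt{\sum_{i=1}^D \rvx_i^2} \ge \sqrt{\rvx_j^2} = \vert \rvx_j \vert \ge \rvx_j$ holds pointwise, and since $t \mapsto e^{\lambda t}$ is increasing for $\lambda > 0$, we obtain the pointwise bound $e^{\lambda \Vert \rvx \Vert} \ge e^{\lambda \rvx_j}$. Taking expectations and invoking the hypothesis that $\rvx$ is $j$-heavy-tailed (so that $\rvx_j$ is heavy-tailed per Definition~\ref{defi:heavytails}) yields $\E_{\rvx}[e^{\lambda \Vert \rvx \Vert}] \ge \E_{\rvx_j}[e^{\lambda \rvx_j}] = \infty$. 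As $\lambda > 0$ was arbitrary, $\Vert \rvx \Vert$ is heavy-tailed, hence $\rvx$ is $\ell_2$-heavy-tailed.

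There is essentially no real obstacle here; the only point worth a line of care is that heavy-tailedness is quantified over \emph{all} $\lambda>0$, so the displayed inequality must be applied for each such $\lambda$ separately — which it trivially is. I would also remark that the same chain of inequalities covers the heavy-left-tail case: replacing $\rvx_j$ by $-\rvx_j$ and using $\Vert \rvx \Vert \ge \vert \rvx_j \vert \ge -\rvx_j$ gives $\E_{\rvx}[e^{\lambda \Vert \rvx \Vert}] \ge \E[e^{\lambda(-\rvx_j)}] = \infty$ whenever $-\rvx_j$ is heavy-tailed, consistent with the remark after Definition~\ref{defi:heavytails} that all results apply analogously to left tails.
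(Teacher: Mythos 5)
Your proof is correct and a bit more economical than the paper's. Both arguments hinge on the same pointwise inequality $\rvx_j \le \Vert \rvx \Vert$, but you then stay entirely within the moment-generating-function characterization of Definition~\ref{defi:heavytails}: you pass the inequality through the increasing map $t\mapsto e^{\lambda t}$ and take expectations, which immediately forces $\E[e^{\lambda \Vert \rvx\Vert}]=\infty$. The paper instead first translates the pointwise inequality into a stochastic-dominance statement for CDFs, $F_j(a)\ge F_{\Vert\rvx\Vert}(a)$, and then invokes the equivalent ``survival-function'' characterization of heavy-tailedness, $\limsup_{a\to\infty}(1-F(a))/e^{-\lambda a}=\infty$, citing Lemma~1.1 of \citet{heavytails_book}. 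Your route buys elementarity: it avoids appealing to an external equivalence result and needs only monotonicity of $\exp$ and of expectation. The paper's route is slightly more modular in that the dominance argument $F_j \ge F_{\Vert\rvx\Vert}$ makes the comparison between the two random variables explicit at the level of distributions, which can be reused for other tail-related statements; but for this particular proposition your direct MGF argument is entirely sufficient and arguably cleaner. Your final remark on left tails is also consistent with the paper's convention.
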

The proof can be found in Section~\ref{sec:theoryforproofs} in the Appendix.
The proposition shows that $j$-heavy-tailedness is a more specific notion of multivariate heavy-tailedness than $\ell_2$-heavy-tailedness, which allows a narrow inspection of the tail behavior. More precisely, the new notion allows us to differentiate between fully heavy-tailed random variables and mixed-tailed random variables, which are both $\ell_2$-heavy-tailed. 
The first result states that, under mild technical conditions, fully heavy-tailedness of the base distribution is preserved by triangular affine maps.
\begin{proposition}[Triangular affine Maps preserve fully heavy-tailedness] \label{prop:allheavybad}
Let $\rvz$ be a fully heavy-tailed random variable that satisfies Assumption~\ref{assu:copuladensity}\footnote{This Assumption can be found in Section~\ref{sec:proofs} in the Appendix.} and let $T$ be a a triangular affine map, that is, $T_j(z_j, z_{<j})=\mu_j(z_{<j}) + \sigma_j(z_{<j}) z_j$ with $\sigma_j>0$. 
Then, it holds that $T(\rvz)$ is also fully heavy-tailed.
\end{proposition}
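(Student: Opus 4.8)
The plan is to check the defining property of heavy tails (Definition~\ref{defi:heavytails}) marginal by marginal: I want to show that $\E\bigl[e^{\lambda \rvx_j}\bigr]=\infty$ for every $\lambda>0$ and every $j\in\{1,\dots,D\}$, which is precisely the statement that $\rvx=T(\rvz)$ is $j$-heavy-tailed for all $j$. Fix $\lambda>0$ and $j$, and write $\rvx_j=\mu_j(\rvz_{<j})+\sigma_j(\rvz_{<j})\rvz_j$. The case $j=1$ is immediate, since then $\mu_1\in\sR$ and $\sigma_1>0$ are constants and $\E[e^{\lambda\rvx_1}]=e^{\lambda\mu_1}\,\E[e^{\lambda\sigma_1\rvz_1}]=\infty$ by heavy-tailedness of $\rvz_1$. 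For $j\geq 2$, since $\mu_j,\sigma_j$ are continuous and $\sigma_j>0$, I would first fix a compact set $K\subseteq\sR^{j-1}$ of positive Lebesgue measure (for instance $K=[-1,1]^{j-1}$) together with constants $m\in\sR$, $s>0$ such that $\mu_j(z_{<j})\geq m$ and $\sigma_j(z_{<j})\geq s$ for all $z_{<j}\in K$.

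The crux is a one-sided lower bound obtained by localizing to $\{\rvz_{<j}\in K\}$ and to $\{\rvz_j\geq 0\}$: on this event $\rvx_j\geq m+s\,\rvz_j$. Invoking Assumption~\ref{assu:copuladensity} — which, after writing the joint density of $\rvz$ through Sklar's theorem as a copula density times the product of the marginal densities $q_i$, provides a uniform positive lower bound $\kappa$ on the relevant marginal copula density and thus lets me decouple the $\rvz_j$-direction from $\rvz_{<j}$ — I expect to arrive at
\begin{align*}
\E\bigl[e^{\lambda \rvx_j}\bigr]
&\;\geq\; e^{\lambda m}\,\E\Bigl[e^{\lambda s\,\rvz_j}\,\mathbf{1}\{\rvz_{<j}\in K,\ \rvz_j\geq 0\}\Bigr]\\
&\;\geq\; e^{\lambda m}\,\kappa\,c_K\int_0^{\infty} e^{\lambda s z}\,q_j(z)\,\mathrm{d}z,
\end{align*}
where $c_K:=\int_K\prod_{i<j}q_i(z_i)\,\mathrm{d}z_{<j}$.

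To conclude I would note that $c_K>0$ (each $q_i$ is strictly positive and $K$ has positive measure), while the remaining integral is infinite: heavy-tailedness of $\rvz_j$ means $\int_{\sR}e^{\lambda' z}q_j(z)\,\mathrm{d}z=\infty$ for every $\lambda'>0$, and the contribution of $z\leq 0$ to this integral is at most $1$, so $\int_0^\infty e^{\lambda s z}q_j(z)\,\mathrm{d}z=\infty$. Hence $\E[e^{\lambda\rvx_j}]=\infty$, and since $\lambda>0$ and $j$ were arbitrary, $\rvx$ is fully heavy-tailed. The one genuinely delicate point is the decoupling step: marginal heaviness of $\rvz_j$ alone says nothing about the law of $\rvz_j$ restricted to the slice $\{\rvz_{<j}\in K\}$, so some control on the dependence structure is unavoidable, and this is exactly what Assumption~\ref{assu:copuladensity} is designed to supply; the continuity of $\mu_j,\sigma_j$ and the exponential estimates are routine.
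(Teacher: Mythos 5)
Your overall strategy is the same as the paper's (bound the marginal moment-generating function from below, localize $z_{<j}$ to a compact set, use Sklar's theorem to split the joint density into copula density times marginals, and finish with the heavy tail of $\rvz_j$), and your $j=1$ case and the observation that the $z\le 0$ part of the integral contributes at most a constant are fine. The genuine gap is in the decoupling step: Assumption~\ref{assu:copuladensity} does \emph{not} give you a uniform positive lower bound $\kappa$ on the copula density over the region $\{z_{<j}\in K,\ z_j\ge 0\}$. It only guarantees $c_j\bigl(F_1(z_1),\dots,F_j(z_j)\bigr)\ge s\,e^{-f(z_{<j})z_j}$ for $z_j>z_j^\ast$ and $z_{<j}$ in a specific compact set $\sS$ supplied by the assumption, where $f(z_{<j})<\lambda\sigma_j(z_{<j})$. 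In particular the copula density is allowed to decay to zero exponentially fast in $z_j$ (this is exactly the Gaussian-copula situation worked out in the appendix), so your displayed inequality
\begin{equation*}
\E\bigl[e^{\lambda \rvx_j}\bigr]\;\ge\; e^{\lambda m}\,\kappa\,c_K\int_0^{\infty} e^{\lambda s z}\,q_j(z)\,\mathrm{d}z
\end{equation*}
does not follow from the stated hypothesis; it would only follow under the strictly stronger hypothesis of a bounded-below copula density (Example~\ref{exa:bounded}), which is a special case, not the assumption itself.

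The repair is essentially the paper's proof: work on the set $\sS$ and threshold $z_j^\ast$ provided by the assumption (for the given $\lambda$ and $j$), keep the factor $e^{-f(z_{<j})z_j}$ instead of replacing it by a constant, and observe that for each fixed $z_{<j}\in\sS$ the inner integral is at least $s\int_{z_j>z_j^\ast}e^{(\lambda\sigma_j(z_{<j})-f(z_{<j}))z_j}q_j(z_j)\,\mathrm{d}z_j=\infty$, because the exponent $\lambda\sigma_j(z_{<j})-f(z_{<j})$ is strictly positive pointwise and $\rvz_j$ is heavy-tailed (Lemma~\ref{prop:technicaltails}). Note that you cannot first replace $\sigma_j(z_{<j})$ by its infimum $s$ over the compact set, since the assumption only guarantees $f<\lambda\sigma_j$ pointwise, not $f<\lambda s$; the comparison must be kept pointwise in $z_{<j}$. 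The outer integral is then handled exactly as you intended: $e^{\lambda\mu_j(z_{<j})}q_{<j}(z_{<j})$ is bounded below by a positive constant on the compact set $\sS$ by continuity and positivity of the density, and $\sS$ has positive mass, so the whole expression is infinite.
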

We provide a formal proof in Section~\ref{sec:proofs} of the Appendix. Assumption~\ref{assu:copuladensity} is a mild condition on the decay rate of the copula density of $\rvz$.
In Section~\ref{sec:notesonassumption} of the Appendix, we explain this condition in more detail and give various examples. 

It is clear that permuting the marginals does not change the heavy-tailedness. Hence, by iterative application of Proposition~\ref{prop:allheavybad}, we deduce that affine triangular flows that employ permutation layers and a fully heavy-tailed base distribution are unable to model mixed tailed distributions.
Implicitly, Proposition~\ref{prop:allheavybad} states  that a Lipschitz normalizing flow as proposed by \citet{pmlr-v119-jaini20a} is not able to model mixed-tailed distributions.
The following theorem provides sufficient conditions under which a flow is able to model mixed-tailed distributions, which guides us towards a marginally tail-adaptive flow architecture.
\begin{theorem}[Learning the correct Tail Behavior] \label{maintheo}
    Consider a random-variable $\rvz$ that is $j$-light-tailed for $j\in \{1,\dots, d_l\}$ for some $d_l<D$ and $j$-heavy-tailed for $j\in\{d_l +1, \dots ,D\}$. Then, under the same conditions as in Theorem~\ref{theo:jaini} and Proposition~\ref{prop:allheavybad}, it holds that $\rvz$ and $T(\rvz)$ have the same tail behavior.
\end{theorem}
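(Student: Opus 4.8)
The plan is to handle the two marginal blocks of $\rvz$ separately, exploiting the triangular structure of $T$ to decouple them, and then to invoke Theorem~\ref{theo:jaini} for the light block and the argument behind Proposition~\ref{prop:allheavybad} for the heavy block.

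\textbf{Light block ($j\le d_l$).} Because $T$ is triangular, the first $d_l$ output coordinates $\tilde T:=(T_1,\dots,T_{d_l})$ depend only on $\rvz_{\le d_l}$, so $\tilde T$ is itself an affine triangular map $\sR^{d_l}\to\sR^{d_l}$ whose scale functions $\sigma_j$ are bounded above and whose shift functions $\mu_j$ are Lipschitz (these properties are inherited). I would first argue that $\rvz_{\le d_l}$ is $\ell_2$-light-tailed: since $\Vert\rvz_{\le d_l}\Vert\le\sum_{j\le d_l}|\rvz_j|$, a generalized-Hölder bound reduces the finiteness of $\E[e^{\lambda\Vert\rvz_{\le d_l}\Vert}]$ for small $\lambda>0$ to the finiteness of $\E[e^{d_l\lambda|\rvz_j|}]$ for each $j\le d_l$, which holds because the $j$th marginal is light-tailed (invoking the two-sided convention of Section~\ref{sec:heavy-tails}). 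Applying Theorem~\ref{theo:jaini} to $\tilde T$ then shows that $\rvx_{\le d_l}=\tilde T(\rvz_{\le d_l})$ is $\ell_2$-light-tailed, and since $\rvx_j\le|\rvx_j|\le\Vert\rvx_{\le d_l}\Vert$ this forces each marginal $\rvx_j$, $j\le d_l$, to be light-tailed.

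\textbf{Heavy block ($j>d_l$).} Fix $j>d_l$; the goal is $\E[e^{\lambda\rvx_j}]=\infty$ for every $\lambda>0$, which I would obtain from a lower bound on $\Pr(\rvx_j>t)$. Using $\sigma_j>0$, the upper bound on $\sigma_j$, and local boundedness of $\mu_j$, pick a set $A\subseteq\sR^{j-1}$ with $\Pr(\rvz_{<j}\in A)>0$ on which $\sigma_j\ge c>0$ and $\mu_j\ge -M$. On $\{\rvz_{<j}\in A\}$ one has $\rvx_j>t$ whenever $\rvz_j>(t+M)/c$ (note $\mu_j$ does not depend on $\rvz_j$, so no cancellation can occur), hence $\Pr(\rvx_j>t)\ge\Pr(\rvz_{<j}\in A,\ \rvz_j>(t+M)/c)$. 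The crucial point is to lower bound this joint probability by a non–exponentially-decaying factor times $\Pr(\rvz_j>(t+M)/c)$ despite the dependence between $\rvz_j$ and $\rvz_{<j}$: this is exactly what Assumption~\ref{assu:copuladensity} on the copula density of $\rvz$ provides, and it is the same estimate that drives the proof of Proposition~\ref{prop:allheavybad}. Since $\rvz_j$ is heavy-tailed, the affine change of variables $s=(t+M)/c$ gives $\int_0^\infty e^{\lambda t}\Pr(\rvz_j>(t+M)/c)\,dt = ce^{-\lambda M}\int_{M/c}^\infty e^{\lambda c s}\Pr(\rvz_j>s)\,ds=\infty$ for every $\lambda>0$; the copula estimate ensures the extra factor does not destroy this divergence, and combining with $\E[e^{\lambda\rvx_j}]\ge\lambda\int_0^\infty e^{\lambda t}\Pr(\rvx_j>t)\,dt$ yields $\E[e^{\lambda\rvx_j}]=\infty$, i.e. $\rvx_j$ is heavy-tailed.

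Putting the blocks together, $T(\rvz)$ is $j$-light-tailed for $j\le d_l$ and $j$-heavy-tailed for $j>d_l$, which is precisely the statement that $\rvz$ and $T(\rvz)$ have equal tail behavior. I expect the only genuinely technical step to be in the heavy block: transferring Assumption~\ref{assu:copuladensity} from the full vector $\rvz$ to the pair $(\rvz_{<j},\rvz_j)$ and making the non-exponential-decay estimate precise; the light block is essentially a direct corollary of Theorem~\ref{theo:jaini} once the restriction-to-a-sub-block observation is made. An alternative phrasing of the heavy block is simply to apply the argument underlying Proposition~\ref{prop:allheavybad} coordinatewise, noting that establishing heavy-tailedness of coordinate $j$ never uses the tail behavior of the coordinates preceding $j$.
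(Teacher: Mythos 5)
Your overall decomposition and conclusion match the paper's proof: handle the light block ($j\le d_l$) via Theorem~\ref{theo:jaini}, and the heavy block ($j>d_l$) via the argument of Proposition~\ref{prop:allheavybad}, using the triangular structure to decouple the two. Your treatment of the light block is arguably a bit more self-contained than the paper's (which refers to a specific estimate inside the proof of Theorem~\ref{theo:jaini}): by observing that $\tilde T=(T_1,\dots,T_{d_l})$ is itself a triangular flow on $\sR^{d_l}$ whose input $\rvz_{\le d_l}$ is $\ell_2$-light-tailed by the H\"older bound, you can invoke Theorem~\ref{theo:jaini} as a black box and then read off marginal light-tailedness from $|\rvx_j|\le\Vert\rvx_{\le d_l}\Vert$. (One small caveat: the H\"older step needs two-sided light tails of the $\rvz_j$, i.e.\ $\E[e^{\lambda|\rvz_j|}]<\infty$, which is slightly stronger than the paper's right-tail convention; for the paper's symmetric base marginals this is harmless, but it is worth noting.)

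In the heavy block, the precise claim you attribute to Assumption~\ref{assu:copuladensity} is not quite what it provides. The assumption lower-bounds the copula density by $s\,e^{-f(z_{<j})z_j}$, so the joint tail probability $\Pr(\rvz_{<j}\in\sS,\ \rvz_j>u)$ is bounded below by an \emph{exponentially decaying} factor times a tail integral of $q_j$, not by a ``non-exponentially-decaying factor times $\Pr(\rvz_j>u)$''. The saving grace is that Assumption~\ref{assu:copuladensity} couples $f$ to the scale: $f<\lambda\sigma$, so when the MGF is expanded the exponent $\lambda\sigma(z_{<j})-f(z_{<j})$ is strictly positive on $\sS$, and the inner integral diverges by heavy-tailedness of $\rvz_j$. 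The tail-probability reformulation can be pushed through, but only if you keep the $\sigma$-dependence explicit and choose $\sS$, $c=\inf_\sS\sigma_j$, and the $\lambda'$ in Assumption~\ref{assu:copuladensity} jointly so that the surviving exponent is positive; as written, replacing $\sigma_j$ by the crude lower bound $c$ and then demanding a non-exponential factor is the step that does not follow. Your final remark---that the heavy block is just Proposition~\ref{prop:allheavybad} applied coordinatewise, because establishing heavy-tailedness of $\rvx_j$ only uses heavy-tailedness of $\rvz_j$---is exactly the paper's argument and is the clean way to close the gap.
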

\begin{proof}
    Since the result combines Theorem~\ref{theo:jaini} and Proposition~\ref{prop:allheavybad} in an evident fashion, we just quickly present a sketch of the proof. 
    First, let us consider $j\leq d_l$. Then it holds for the moment-generating function of $\rvx_j$ that 
    \begin{align*}
        m_{\rvx_j}(\lambda) &= \int_{\sR^D} e^{\lambda T_j(z_{\leq j})} q(z)dz \\
        &= \int_{\sR^j}  e^{\lambda T_j(z_{\leq j})} p_{\leq j}(z_{\leq j})dz_{\leq j} \enspace ,
    \end{align*}
    which has been shown to be bounded for some $\lambda >0$ (see the proof of Theorem~\ref{theo:jaini} in \citet{pmlr-v119-jaini20a}). Therefore, $\rvx$ is $j$-light-tailed for all $j\leq d_l$. 
    In the case $j>d_l$, we notice\footnote{For details, we refer to the proof of Proposition~\ref{prop:allheavybad} in the Appendix.} that the proof for heavy-tailedness of $T_j(\rvz_{\leq j})$ involves just the heavy-tailedness of $\rvz_j$ and not of any other component of $\rvz_{<j}$.  Hence, if $\rvz_j$ is heavy-tailed, then $\rvx_j=T_j(\rvz_{\leq j})$ is also heavy-tailed, regardless of $\rvz_{<j}$. Therefore, $\rvx$ is $j$-heavy-tailed for all $j>d_l$, which completes the proof. Note that in general we cannot deduce the latter conclusion for light-tailed marginals, i.e. if $\rvz_j$ is light-tailed, this does not mean that $\rvx_j$ is also light-tailed. This is only the case, if all $\rvz_{<j}$ are light-tailed as well. 
\end{proof}
\subsection{Marginally Tail-Adaptive Flow (\name)} \label{sec:method}
Our main result, Theorem~\ref{maintheo}, 
prompts that if we maintain an ordering of the marginals such that the first marginals are light-tailed and the following are heavy-tailed in each flow step, 
we retain the marginal tail behavior of the base distribution in the estimated target distribution.  
This finding motivates the novel NF proposed in this paper. The proposed approach combines research findings from extreme value theory \citep{embrechts2013modelling, heavytails_book}, recent findings about normalizing flows \citep{pmlr-v119-jaini20a, alexanderson2020robust, laszkiewicz2021copulabased}, and the results presented herein. 
The proposed \names{} consists of three steps that are depicted in Figure~\ref{fig:overview} and described in the following: 

    \textbf{Step 1: Estimating the marginal tail indices and defining the marginal distributions.} 
    For each marginal, i.e.~for the marginal distribution $q_j$ of each $\rvx_j$, $j=1,\dots, D,$ 
    we assess heavy- or light-tailedness using tail estimators. If the marginal is predicted to be light-tailed, we set the corresponding marginal base distribution to be standard normal distributed $\rvz_j \sim \mathcal{N}(0, 1)$. Otherwise we set the marginal to the standardized $t$-distribution with the estimated degree of freedom, i.e.~$\rvz_j \sim t_{\hat{\nu}_j}$, where $\hat{\nu}_j$ is the Hill double-bootstrap estimator \citep{hill_bootstrap1, hill_bootstrap2}. We present all the details about the tail-assessment scheme in Section~\ref{sec:tailest}.
    
\textbf{Step 2: Defining the base distribution.}
    We construct the base distribution as the mean-field approximation of the marginals, i.e. $\rvz$ has the density $q(z):= \prod_{j=1}^D q_{j}(z_j)$ with marginal densities $q_{j}$ defined in step 1. Further, to satisfy the assumptions of Theorem~\ref{maintheo}, we need to permute the marginals such that it holds $\rvz_{j} \sim \mathcal{N}(0,1)$ for $j\leq d_l$ and 
    $\rvz_{j} \sim t_{\hat{\nu}_j}$ for $j>d_l$. We apply the same permutation to restructure our data according to the base components. To account for tail index estimation errors and for more flexible learning, one can make the tail indices (i.e. the degrees of freedom of each $t$-distribution) learnable. That is, we initialize the degree of freedom of the $j$th marginal with $\hat{\nu}_j$ but adapt the parameter together with the network parameters throughout training.
    
    \textbf{Step 3: A data-driven permutation scheme.}  
    Recall, that vanilla autoregressive flows employ a permutation step after each transformation to enhance the mixing of variables. However, purely random permutations might lead to a violation on the ordering of marginals, which is necessary to ensure Theorem~\ref{maintheo}. Therefore, we permute only within the set of heavy-tailed marginals and within the set of light-tailed marginals, to ensure the validity of Theorem~\ref{maintheo}. Within these groups one can choose any permutation scheme. We generalize this result for LU-layers in Section~\ref{sec:mtaf_nsf}. 
%

Without loss of generality, we assume that the first $d_l$ components of $\rvz$ are light-tailed and the remaining $D-d_l$ components are heavy-tailed\footnote{Otherwise we permute the marginals as described in Step 2.}. Then, the training objective is to optimize for flow parameters $\hat{\theta}$ and degrees of freedom $\hat{\nu}=[\hat{\nu}_{d_l +1}, \dots , \hat{\nu}_{D}]$ 
to maximize the log-likelihood
\begin{align*}
    L(\hat{\theta}, \hat{\nu}; X)
    &=  \sum_{j=1}^N \Biggl\{ \sum_{i=1}^{d_l} \log \pi \Bigl( T_{\hat{\theta}}^{-1}\bigl(x^{(j)}\bigr)_i \Bigr)  \\ &\hspace{-1cm}+ \sum_{i=d_l + 1}^D  \log t_{\hat{\nu}_i}\Bigl( T^{-1}_{\hat{\theta}} \bigl(x^{(j)}\bigr)_i \Bigr) - \log \det J_{T_{\hat{\theta}}}\bigl(x^{(j)}\bigr) \Biggr\} 
\end{align*}
where $X:=(x^{(1)}, \dots x^{(N)})$ is the data, and $\pi$ and $t_{\hat{\nu}}$ are the PDF of the standard normal distribution and the standard $t$-distribution with $\hat{\nu}$ degrees of freedom, respectively.

When applying our theoretical results presented in the previous section to the proposed \name, we can show that it fulfills the desired tail-preserving property, as formalized by the following corollary: 
\begin{corollary}[Marginally tail-adaptive]\label{corro} 
    Under the same assumptions as in Theorem~\ref{theo:jaini} and in Proposition~\ref{prop:allheavybad}, \names{} are marginally tail-adaptive, that is, $\rvz$ and $\rvx=T(\rvz)$ have the same tail behavior. 
\end{corollary}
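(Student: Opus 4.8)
The plan is to obtain Corollary~\ref{corro} as an essentially immediate consequence of Theorem~\ref{maintheo}, once one checks that the \name{} construction meets all of its hypotheses. First I would pin down the structure of the base distribution: by Steps~1--2, $\rvz$ has the product density $q(z)=\prod_{j=1}^D q_j(z_j)$ in which $q_j$ is the standard normal density for $j\le d_l$ and the standardized $t_{\hat\nu_j}$ density for $j>d_l$. Since the Gaussian has a finite moment-generating function and $\E[e^{\lambda\rvx}]=\infty$ for every $\lambda>0$ when $\rvx\sim t_\nu$, the variable $\rvz$ is $j$-light-tailed exactly for $j\le d_l$ and $j$-heavy-tailed exactly for $j>d_l$, which is precisely the configuration required by Theorem~\ref{maintheo}. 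I would also record that, being a mean-field distribution with these smooth positive marginals, $\rvz$ has positive continuous density on $\sR^D$ and satisfies Assumption~\ref{assu:copuladensity}, its copula being the independence copula.

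Next I would handle the layer composition. A \name{} is a map $T = Q^{-1}\circ\bigl(T^{(L)}_\theta\circ P^{(L)}\bigr)\circ\cdots\circ\bigl(T^{(1)}_\theta\circ P^{(1)}\bigr)$, where each $T^{(l)}_\theta$ is triangular affine with $\sigma_j>0$ bounded above and $\mu_j$ Lipschitz (the conditions shared by Theorem~\ref{theo:jaini} and Proposition~\ref{prop:allheavybad}), each $P^{(l)}$ is a $2$-group permutation acting only within $\{1,\dots,d_l\}$ and within $\{d_l+1,\dots,D\}$, and $Q^{-1}$ restores the original coordinate order. The argument is an induction on $l$ over the intermediate variables $\rvy^{(l)}$. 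A $2$-group permutation merely relabels coordinates inside each block, so it preserves both the ``first $d_l$ light, remaining heavy'' configuration and Assumption~\ref{assu:copuladensity}; applying Theorem~\ref{maintheo} to the triangular affine layer $T^{(l)}_\theta$ then shows that $\rvy^{(l)}$ again has this configuration. Iterating through all $L$ layers and finally applying the fixed permutation $Q^{-1}$ (again only a relabeling of marginals) yields that $\rvx=T(\rvz)$ has the same marginal tail behavior as $\rvz$; and since $\rvz$ was, by construction, chosen to match the estimated marginal tailedness of the data, this is exactly the claimed marginal tail-adaptivity.

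The step I expect to require the most care is the propagation of Assumption~\ref{assu:copuladensity} through the stack: Theorem~\ref{maintheo} (via Proposition~\ref{prop:allheavybad}) needs the copula-density condition on the \emph{input} of each triangular affine layer, not merely on the original base $\rvz$, so one must argue that a triangular affine map with bounded-above $\sigma_j$ and Lipschitz $\mu_j$ sends a distribution satisfying Assumption~\ref{assu:copuladensity} to one that still does (or restrict to the regime where this holds). Everything else---the finiteness versus infinitude of the relevant moment-generating functions for the Gaussian and $t$ marginals, and the invariance of univariate tailedness under coordinate permutations---is routine. I would therefore phrase the proof as ``combine Theorem~\ref{maintheo} with the two observations above, applied layer by layer,'' and leave the copula-density bookkeeping to the appendix alongside the proof of Proposition~\ref{prop:allheavybad}.
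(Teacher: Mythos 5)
Your proposal matches the paper's own route: the corollary is obtained by iterating Theorem~\ref{maintheo} layer by layer, observing that the within-group (2-group) permutations preserve both the ``first $d_l$ light-tailed, rest heavy-tailed'' ordering and univariate tailedness, so the base configuration propagates to $\rvx=T(\rvz)$. The one subtlety you flag---that Assumption~\ref{assu:copuladensity} must hold for the \emph{input of every} triangular layer, not just for the mean-field base $\rvz$---is exactly what the paper acknowledges in its appendix discussion: it verifies the assumption for $\rvz^{(0)}$ via the independence copula, but only \emph{conjectures} (and otherwise assumes, via the corollary's hypotheses) its propagation through the intermediate variables, so your ``restrict to the regime where this holds'' reading is the correct interpretation rather than a gap in your argument relative to the paper.
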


\subsection{Marginal Tail-Adaptive Neural Spline Flows}\label{sec:mtaf_nsf}
Recent findings on NFs lead to significant improvements of their performance, such as employing LU-Layers instead of permutations \citep{kingma2018glow, oliva2018transformation} and more expressive autoregressive layers.
One of the current SOTA NFs, which combine both improvements, are NSFs \citep{durkan2019neural}. 
In this section we apply the theory from the previous sections to NSFs with a modified version of the LU-Layers, while retaining their computational benefits. 

First, let us provide sufficient conditions under which linear layers preserve the marginal tail behavior:\footnote{The proof is presented in Section~\ref{sec:theory_LU}.}
\begin{theorem}\label{theo_LU}
Let $\rvz$ be a random variable that is $j$-light-tailed for $j\in\{1,\dots ,d_l \}$ and $j$-heavy-tailed for $j \in \{d_l +1, \dots, D\}$. Further, consider a block-diagonal invertible matrix 
\begin{equation} \label{eq_blockmatrix_main}
    W = \begin{pmatrix}
    A & 0 \\ 
    B & C   \end{pmatrix} \enspace 
\end{equation}
with $A\in \mathbb{R}^{d_l \times d_l}, \, B \in \mathbb{R}^{(D-d_l ) \times d_l }, \, C\in\mathbb{R}^{(D-d_l ) \times (D-d_l )}$ and $0$ is a zero matrix of size $d \times (D-d_l )$. 
Then, it follows that $W\rvz$ and $\rvz$ have equal tail behavior. 
\end{theorem}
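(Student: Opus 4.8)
The plan is to exploit the block lower-triangular shape of $W$ in \eqref{eq_blockmatrix_main}. Write $\rvx := W\rvz$ and split the coordinates into the light block $L := \{1,\dots,d_l\}$ and the heavy block $H := \{d_l+1,\dots,D\}$; the zero block gives $\rvx_L = A\rvz_L$ and $\rvx_H = B\rvz_L + C\rvz_H$, so the light coordinates of $\rvx$ are functions of the light coordinates of $\rvz$ only, whereas the heavy coordinates of $\rvx$ genuinely mix in the heavy coordinates of $\rvz$. I would first record two elementary one-dimensional facts about moment generating functions, each obtained from a conjugate-exponent (generalized H\"older) inequality: (i) a finite linear combination of light-tailed random variables is light-tailed; and (ii) the sum of a heavy-tailed and a light-tailed random variable is heavy-tailed, equivalently, if $\rvu+\rvv$ is heavy-tailed and $\rvv$ is light-tailed then $\rvu$ is heavy-tailed. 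For (ii) one also uses that $\{\lambda : \E[e^{\lambda\rvx}] < \infty\}$ is an interval, so that infiniteness of the moment-generating function on a right-neighbourhood of $0$ already forces $\E[e^{\lambda\rvx}]=\infty$ for all $\lambda>0$; this is also what makes the right-tail-only convention of Definition~\ref{defi:heavytails} harmless here.

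The light block is then immediate: for $i\le d_l$ the coordinate $\rvx_i = \sum_{j\le d_l} A_{ij}\rvz_j$ is a linear combination of the light-tailed variables $\rvz_1,\dots,\rvz_{d_l}$, hence light-tailed by (i), so $\rvx$ is $i$-light-tailed for every $i\le d_l$. For the heavy block, fix $i>d_l$ and write $\rvx_i = (B\rvz_L)_i + (C\rvz_H)_i$; the first term is a linear combination of light-tailed variables and hence light-tailed by (i), so by (ii) it suffices to show that $(C\rvz_H)_i = \sum_{j>d_l} C_{ij}\rvz_j$ is heavy-tailed. Because $W$ is invertible and block lower-triangular, $C$ is invertible, so its $i$-th row is non-zero: $(C\rvz_H)_i$ is a genuine linear combination of heavy-tailed variables with at least one non-zero coefficient.

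The \emph{crux}, and the only non-routine step, is that such a linear combination of heavy-tailed coordinates is again heavy-tailed. This can fail for an arbitrary joint law of $\rvz_H$ (cancellation makes the combination light when the mass of $\rvz_H$ concentrates along a direction annihilated by the $i$-th row of $C$), so here I would invoke exactly the hypothesis already carried by Proposition~\ref{prop:allheavybad}, namely Assumption~\ref{assu:copuladensity} on the decay of the copula density of $\rvz$, and re-run its conditioning argument: pick $j^\star>d_l$ with $C_{ij^\star}\neq 0$ (choosing the sign so that $C_{ij^\star}\rvz_{j^\star}$ is heavy in its right tail, which is free for the symmetric $t$-marginals used in \name{}), condition the remaining heavy coordinates into a fixed box on which $\sum_{j\neq j^\star} C_{ij}z_j$ is bounded below, use the copula-density bound to see that this box keeps a conditional mass bounded away from $0$ as $\rvz_{j^\star}\to\infty$, and lower-bound $\E[e^{\lambda(C\rvz_H)_i}]$ by a positive constant times $\E[e^{\lambda C_{ij^\star}\rvz_{j^\star}}\mathbf{1}\{\text{the other heavy coordinates lie in that box}\}] = \infty$ for every $\lambda>0$, the last equality by heavy-tailedness of $\rvz_{j^\star}$. (When the base has independent marginals, as for the mean-field base of \name{}, this collapses to the trivial remark that the moment generating function of $\sum_{j>d_l}C_{ij}\rvz_j$ factorises and the factor belonging to $j^\star$ is already infinite.) Combining the two blocks, $\rvx$ is $i$-light-tailed for $i\le d_l$ and $i$-heavy-tailed for $i>d_l$, i.e.\ $W\rvz$ and $\rvz$ have equal tail behavior, which is the claim.
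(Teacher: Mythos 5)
Your decomposition and overall strategy coincide with the paper's own proof in Section~\ref{sec:theory_LU}: there, too, the block structure is used to write the first $d_l$ coordinates of $W\rvz$ as linear combinations of the light-tailed coordinates and the remaining ones as combinations involving at least one heavy coordinate (non-degeneracy coming from invertibility of $C$), and everything is reduced to closure of light-/heavy-tailedness under scalar multiples and pairwise sums; your step (i) is proved there by splitting the integration domain and Lemma~\ref{helper_lemma} rather than H\"older, which is only a cosmetic difference. Where you genuinely diverge is the crux step for $i>d_l$. The paper asserts \emph{unconditionally} that a sum containing a heavy-tailed summand is heavy-tailed and concludes directly; the lower-bounding argument given there for this direction is in fact the shaky point (on the region $z_1<0$ one has $e^{\lambda(z_1+z_2)}\le e^{\lambda z_2}$, so the final inequality chain does not go through as written), and your cancellation remark identifies exactly why it cannot hold in full generality: with adversarial dependence inside the heavy block (e.g.\ $\rvz_{j_2}=\rvz_{j_1}+\mathcal{N}(0,1)$ and a row of $C$ proportional to $(-1,1)$) a nonzero combination of heavy coordinates is light-tailed, so some hypothesis beyond the statement of the theorem is genuinely needed. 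Your fix---rerunning the conditioning argument of Proposition~\ref{prop:allheavybad} under (a suitably reindexed version of) Assumption~\ref{assu:copuladensity}, or the trivial factorisation under the mean-field base of \name{}---is therefore a sound and arguably more careful route; what it costs is that you prove the theorem under hypotheses the statement does not carry, whereas the paper buys the unconditional statement only by glossing over this point.

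Two smaller caveats, shared with the paper but worth making explicit. First, your two phrasings of (ii) are not equivalent under the one-sided Definition~\ref{defi:heavytails}: ``$\rvu+\rvv$ heavy and $\rvv$ light $\Rightarrow$ $\rvu$ heavy'' is just the contrapositive of (i), but the implication you actually invoke, ``heavy $+$ light $=$ heavy'', additionally requires the \emph{left} tail of the light summand to be light (take $\rvv=-\rvu$ with $\rvu$ Pareto: $\rvv$ is right-light and the sum is constant). Second, (i) for combinations with negative coefficients likewise needs two-sided lightness of $\rvz_1,\dots,\rvz_{d_l}$; the paper's own scalar step sets $\lambda=\lambda^\ast/a$ and silently assumes $a>0$. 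Both points are harmless for the Gaussian and symmetric-$t$ marginals used in \name{}, but they should either be flagged or the tail convention upgraded to a two-sided one.
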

As a special case of Lemma~\ref{lemma_inversion}, we can invert a block-diagonal matrix 
\begin{equation}\label{eq_tailLU_inv}
        \begin{pmatrix}
            A & 0 \\ 
            B & C 
        \end{pmatrix}^{-1} 
        = 
        \begin{pmatrix}
            A^{-1} & 0 \\ 
            -C^{-1}BA^{-1} & C^{-1} 
        \end{pmatrix} \enspace  .
    \end{equation}
Further, the log-determinant of $W$ is given by 
\begin{equation}
    \label{eq_det_tailLU}
    \log \det(W) = \log \det(A) + \log \det(C) \enspace. 
\end{equation}
Therefore, as long as log-determinant computation and inversion of $A$ and $C$ are efficient, we can efficiently employ a block-diagional matrix as a flow layer. Luckily, both is given if we parameterize $A$ and $C$ using the LU-decomposition, whereas $B$ can be an arbitrary matrix. 

Moreover, NSFs make use of monotonic rational-quadratic splines to define the autoregressive layers. These splines are defined within some interval $[-b, b]$ and are linearily extended outside this interval. 
Hence, the autoregressive NSF layers are in fact affine linear in their tails, which in turn means that we can apply all the theory from the previous sections on NSFs as well (compare with Lemma~\ref{prop:technicaltails}). 

In summary, NSF with linear layers according to the block-form~\eqref{eq_blockmatrix_main} preserve the marginal tailedness of the base distribution. We want to highlight that even though each autoregressive NSF layer is linear outside $[-b,b]$, this does not mean that the whole flow is linear outside $[-b, b]$. This is because the (modified) LU-layers in between can map a component in and out of $[-b, b]$, leading to a non-trivial mapping outside that interval.


\subsection{Generalized Tail-Adaptive Flows}\label{sec:gtaf}
Even though being theoretically founded, we introduce a more flexible relaxation of \name{}, which we call generalized TAF (gTAF): We drop the structural restrictions on the linearities and set the base distribution to a mean-field approximation of $t$-distributions with different trainable degrees of freedom. Therefore, gTAF is a compromise between the theoretically stronger \name{} and TAF. Note that since $t_\nu \xrightarrow{\nu \rightarrow \infty} \mathcal{N}(0,1)$ we are able to approximately model heavy- as well as light-tailed marginals. Further, since LU-layers are trainable as well, gTAF is able to approximate \name{} arbitrary well by learning a structure as in~\eqref{eq_blockmatrix_main}.
\begin{theorem}\label{theo:gTAFapproxmTAF}
    Let $T:\mathbb{R}^D \rightarrow \mathbb{R}^D$ be almost surely continuous. Further, let $z=(z_1,\dots , z_D)$ be the mTAF base distribution with $d_l$ Gaussian marginals and let $z_{\nu }$ be the gTAF base distribution with marginals
    \begin{equation*}
        z_{\nu, j}\sim \begin{cases}
            t_\nu(0, 1) \quad &\text{ for  } j\leq d_l  \\ 
            z_{j} \quad &\text{ for } j>d_l
        \end{cases} \enspace . 
    \end{equation*} 
    Then, it holds that $T(z_{\nu}) \xrightarrow[]{\nu \rightarrow \infty}_D T(z)$,
    where $\rightarrow_D$ denotes convergence in distribution. 
\end{theorem}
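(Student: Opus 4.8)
The plan is to prove convergence in distribution of $z_\nu$ to $z$ first, and then transfer this through the almost surely continuous map $T$ using the continuous mapping theorem. The key observation is that, by construction, $z_\nu$ and $z$ share the same last $D-d_l$ coordinates (the heavy-tailed marginals), and in the first $d_l$ coordinates $z_\nu$ has independent $t_\nu(0,1)$ marginals while $z$ has independent $\mathcal{N}(0,1)$ marginals. Since the base distributions are mean-field (product) distributions, it suffices to establish the one-dimensional convergence $t_\nu(0,1) \to_D \mathcal{N}(0,1)$ as $\nu\to\infty$ and then invoke the fact that convergence in distribution of each coordinate of a vector with independent components implies joint convergence in distribution (e.g. via pointwise convergence of characteristic functions, which factorize). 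The fact $t_\nu(0,1)\to_D \mathcal{N}(0,1)$ is classical; I would cite it or give the one-line argument via the density converging pointwise (Scheffé's lemma) or via characteristic functions.

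First I would set up notation: write $z = (z^{(1)}, z^{(2)})$ with $z^{(1)} = (z_1,\dots,z_{d_l})$ having i.i.d.\ $\mathcal{N}(0,1)$ entries and $z^{(2)} = (z_{d_l+1},\dots,z_D)$ the heavy-tailed block; similarly $z_\nu = (z_\nu^{(1)}, z^{(2)})$ where $z_\nu^{(1)}$ has i.i.d.\ $t_\nu(0,1)$ entries and the second block is literally the \emph{same} random vector as in $z$ (so we may couple them on a common probability space, sharing $z^{(2)}$). Next I would argue that $z_\nu^{(1)} \to_D z^{(1)}$: by independence of the coordinates, the characteristic function of $z_\nu^{(1)}$ is the product of the marginal characteristic functions, each of which converges pointwise to that of a standard Gaussian as $\nu\to\infty$; hence the product converges to $\exp(-\|t\|^2/2)$, the characteristic function of $z^{(1)}$, and Lévy's continuity theorem gives the claim. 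Since the second block is held fixed (or, more carefully, since the joint characteristic function factors across the two independent blocks in both $z_\nu$ and $z$), we conclude $z_\nu \to_D z$ in $\mathbb{R}^D$.

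Finally I would apply the continuous mapping theorem in its form allowing the map to be discontinuous on a null set: $T$ is assumed almost surely continuous, meaning the set $\mathrm{Disc}(T)$ of discontinuity points satisfies $P(z \in \mathrm{Disc}(T)) = 0$ (where the probability is under the limiting law, the mTAF base distribution $z$). The continuous mapping theorem then yields $T(z_\nu) \to_D T(z)$, which is exactly the claim. The main obstacle — or really the only subtlety worth care — is pinning down the precise meaning of ``$T$ is almost surely continuous'': it must be interpreted as $T$ being continuous outside a set that is null under the limiting distribution of $z$, so that the discontinuity-set hypothesis of the continuous mapping theorem is met; once this is spelled out the rest is routine. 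One should also note in passing that $z$ has a density on $\mathbb{R}^D$ (being a product of Gaussians and standardized $t$'s, all with positive continuous densities), so ``almost surely'' with respect to Lebesgue measure and with respect to the law of $z$ coincide, which makes the hypothesis natural to state.
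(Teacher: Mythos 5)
Your proposal is correct and follows essentially the same route as the paper: establish $z_\nu \rightarrow_D z$ via pointwise convergence of characteristic functions (using independence of the marginals and the classical fact $t_\nu \rightarrow_D \mathcal{N}(0,1)$) together with L\'evy's continuity theorem, and then conclude with the continuous mapping theorem. The only cosmetic difference is that the paper reduces to one-dimensional characteristic functions via the Cram\'er--Wold device whereas you apply the multivariate L\'evy continuity theorem directly; your added care in interpreting ``almost surely continuous'' as continuity off a set that is null under the law of $z$ is a welcome clarification but not a departure in method.
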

Hence, when fixing the flow $T$, gTAF converges to the mTAF solution as the light-tailed degrees of freedom tend to $\infty$. We provide a proof in Section~\ref{sec:proofgTAFapproxmTAF}. 

\begin{table*}[t]
\caption{Average test loss, Area under log-log plot, and $\operatorname{tVaR}$ (lower is better for each metric) in the setting $\nu=2$ and $d_h\in \{1,4\}$. The copula model serves as an oracle baseline. 
\label{table:synth_metrics}}
\label{sample-table}
\begin{center}
\begin{tabular}{@{}lccccccccccc@{}}
\toprule
$d_h$ & \multicolumn{5}{c}{1} & &\multicolumn{5}{c}{4}  \\
\cmidrule{2-6}
\cmidrule{8-12}
& $L$ & $\operatorname{Area}_l$ & $\operatorname{Area}_h$ & $\operatorname{tVaR}_l$ & $\operatorname{tVaR}_h$ && $L$ & $\operatorname{Area}_l$ & $\operatorname{Area}_h$ & $\operatorname{tVaR}_l$ & $\operatorname{tVaR}_h$ \\ 
\midrule
vanilla & 10.25 & 0.25 & 4.19 & 0.60 & 29.56 & & 8.98 & 0.25 & 4.30 & 0.54 & 28.55 \\
TAF & 10.15 & 0.37 & 3.55 & 0.78 & 4.21 & & 8.69 & 0.42 & 4.05 & 0.89 & 4.36 \\
gTAF & 10.12 & 0.55 & 3.24 & 1.16& 2.67 & & 8.57 & 0.50 & 3.38 & 0.98 & 5.55\\
mTAF & 10.11 & 0.26 & 2.22 & 0.59 & 2.94 & & 8.55 & 0.25 & 2.60 & 0.57 & 6.74\\
\midrule 
copula & 9.75 & 0.20 & 1.23 & 0.45 & 2.22 & & 9.75 & 0.19 & 1.43& 0.46 & 3.49 \\
 \bottomrule
\end{tabular}
\end{center}
\end{table*}
\section{Experimental Analysis}
To investigate the benefits of the derived methods, we perform an empirical analysis on synthetic data (Section~\ref{sec:synthexp}) to understand the behavior and benefits of mTAF and gTAF in comparison to other flows models. In Section~\ref{sec:weather}, we demonstrate how we can exhibit the heavy-tailed behavior of the model to sample new extremes in weather and climate example. We provide a PyTorch implementation and the code for all experiments, which can be accessed through our public git repository\footnote{https://github.com/MikeLasz/marginalTailAdaptiveFlow}.
\subsection{Synthetic Experiments}\label{sec:synthexp}
In this series of experiments, we compare the performance of 4 different NSFs: vanilla flow (i.e. $\rvz \sim \mathcal{N}(0,I)$), TAF (i.e. $\rvz \sim t_{\hat{\nu}}(0, I)$), 
gTAF, and mTAF with fixed degrees of freedom. The data is generated by a 8-dimensional Gaussian copula with $d_h \in\{1,4\}$ heavy-tailed marginals with tail index $\nu=2$. Details about the data generation can be found in Section~\ref{sec:synth_data}. 
As an oracle baseline, we fit a Gaussian copula to the data. 
To measure the overall fit of the model, we track the negative log-likelihood loss $L$.
Since it is well-known that a good likelihood fit is not equivalent to high sampling quality \citep{theis2015note}, we take a closer on the samples in the tails of the distribution by considering the following three metrics.\footnote{All metrics are formally defined in Section~\ref{sec_suppsynthexp}.} 
\begin{enumerate}
    \item \textbf{Tail Value at Risk} ($\operatorname{tVaR}_{\alpha}$), also known as \textit{expected shortfall}, 
    which is the expectation of the quantile function given that we consider a quantile level larger than $\alpha$. By calculating the absolute difference between $\operatorname{tVaR}_\alpha$ based on the data distribution and based on synthetic samples generated by the flow, we obtain the $\operatorname{tVaR}$-difference. 
    \item \textbf{Area under log-log plot} $\operatorname{Area}$,
    which can be interpreted as a reweighted version of $\operatorname{tVaR}$ that puts more weight on the extremes.
    \item \textbf{Synthetic Tail Estimates} are the marginal tail estimators based on synthetic data generated by the flow. Ideally, we expect the NF to generate samples according to the true tail-index. 
\end{enumerate}
We measure $\operatorname{tVaR}$-differences componentwise and average over all heavy-tailed and light-tailed components to obtain $\operatorname{tVaR}_h$ and $\operatorname{tVaR}_l$, respectively. The same applies for $\operatorname{Area}$.

We fit each model $25$ times to $3$ different synthetic distributions and summarize the numeric results in Table~\ref{table:synth_metrics}\footnote{Standard deviations for one synthetic distribution are shown in Table~\ref{table:synth_metrics_stds}.}. It is no surprise that vanilla performs well for light-tailed components but suffers on capturing the tails of the heavy-tailed distributions. In the setting $d_h =1$, we observe the same behavior for TAF, which could be attributed to having only one joint tail parameter $\nu$ to model the seven light-tailed and one heavy-tailed marginal. \name{} does not always perform best but it manages to find a good balance between the fit on the light-tailed, as well as heavy-tailed components. This is demonstrated more clearly when considering the tail estimation indices of the marginals (Figure~\ref{fig:synth_tail}). 
In this figure, we summarize the estimated marginal tail behavior of the learned model in a confusion matrix. While most of the generated marginals are classified as light-tailed for vanilla and TAF, mTAF is able to recover the marginal tail behavior almost perfectly. We make similar observations in the other settings and when using MAFs instead of NSFs (Section~\ref{sec_suppsynthexp}). 
We extend our analysis by providing a $50$-dimensional example in Section~\ref{sec_suppsynthexp}.


\begin{figure}[ht]
\vskip 0.2in
\begin{center}
\centerline{\includegraphics[width=0.24\columnwidth]{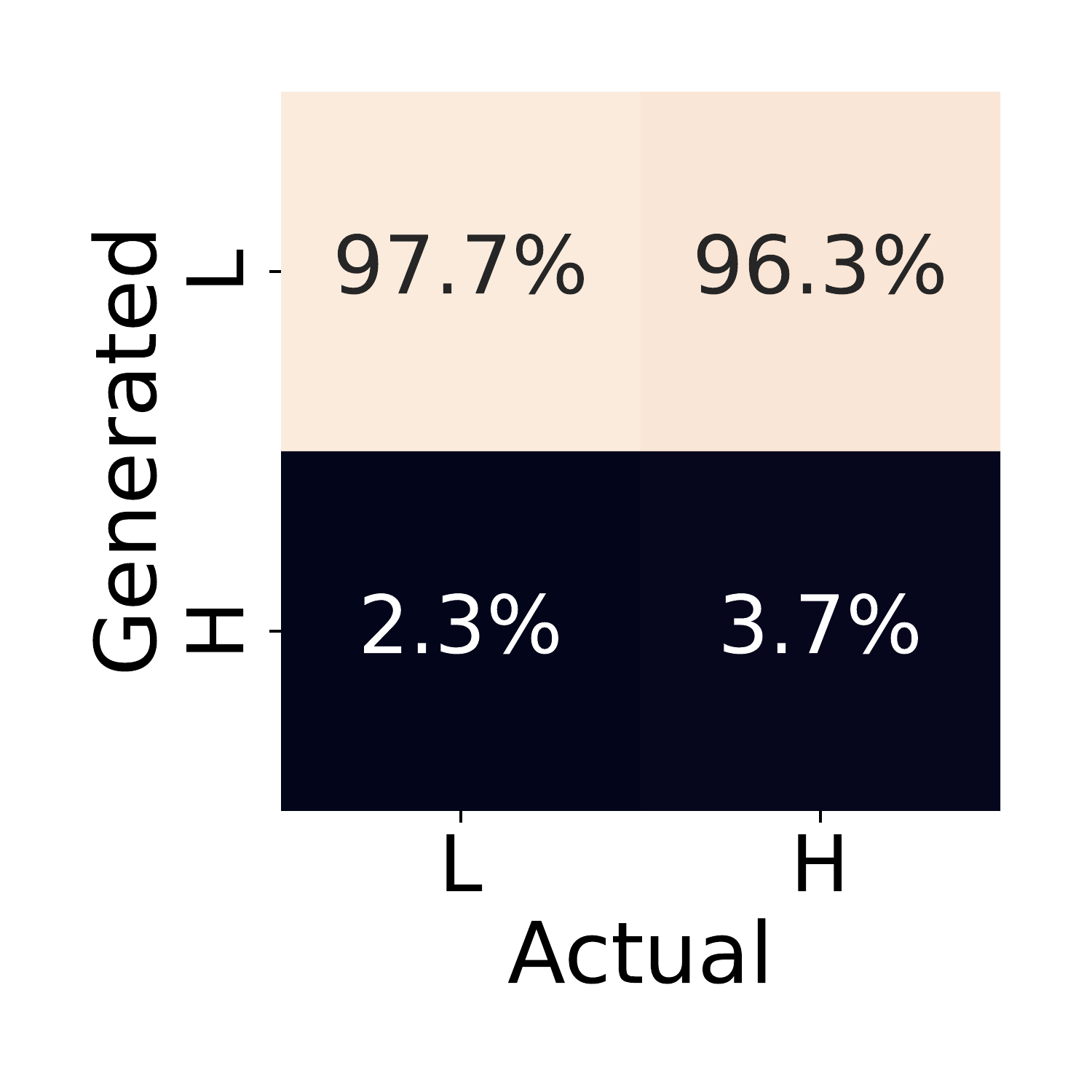} 
\includegraphics[width=0.24\columnwidth]{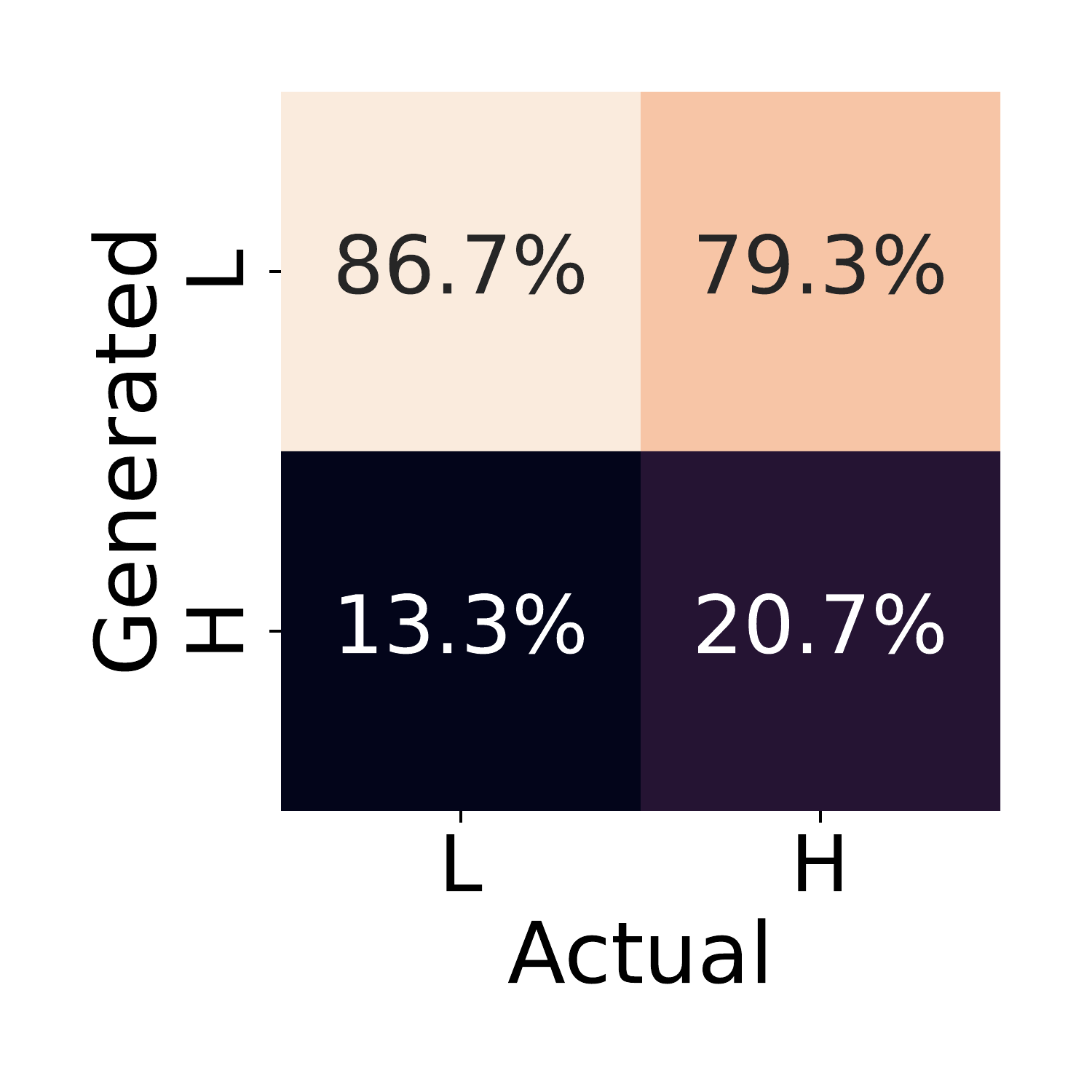}
\includegraphics[width=0.24\columnwidth]{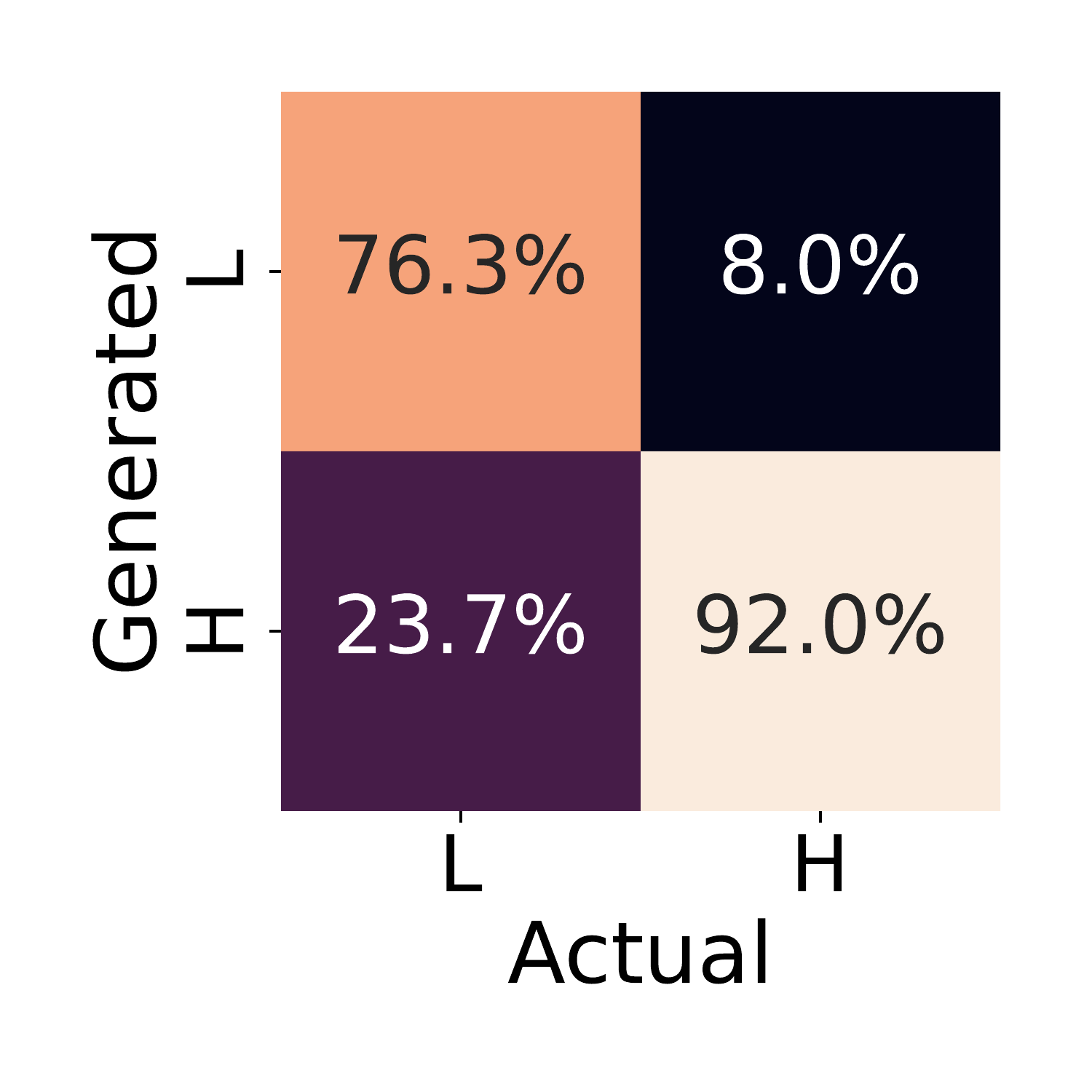}
\includegraphics[width=0.24\columnwidth]{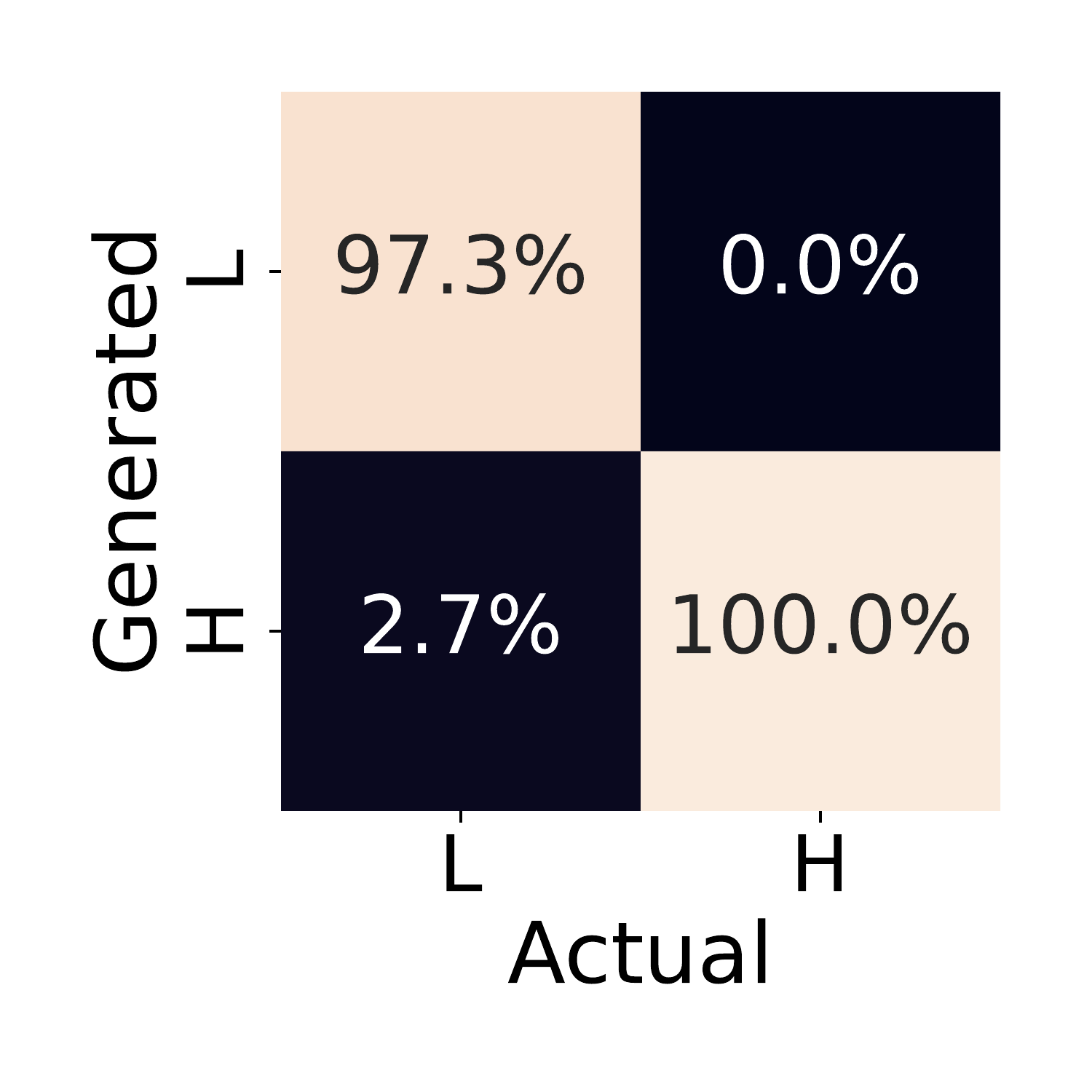}
}
\caption{Marginal tail estimation based on synthetic flow samples for $d_h =4,\, \nu=2$ of vanilla, TAF, gTAF, and mTAF (from left to right). We classify marginals whose tail estimator is less than $10$ as heavy-tailed, otherwise it is classified as light-tailed.}
\label{fig:synth_tail}
\end{center}
\vskip -0.2in
\end{figure}

\subsection{Modeling Climate Data} \label{sec:weather}
\begin{figure}[h!]
\begin{center}
\centerline{\includegraphics[width=\columnwidth]{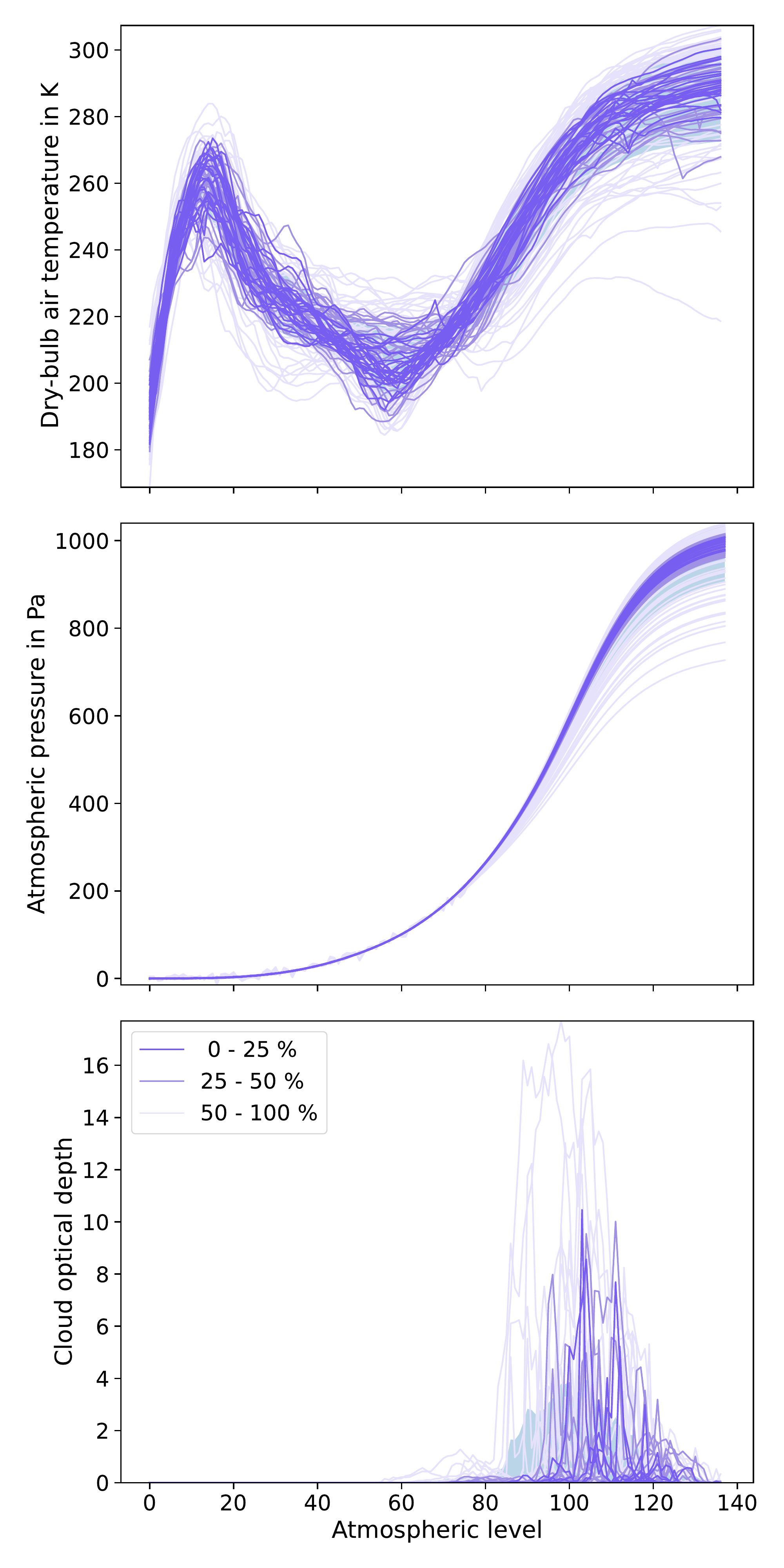}}
\caption{Synthetic flow samples using mTAF, where we clipped the lower-values of the cloud-optical depth at 0. The corresponding negative log-likelihood is $-2121.46$. The profiles are ordered using band depth statistics \citet{depth_statistics} and the shaded areas represent standard deviations.} 
\label{fig:mTAF_samps}
\end{center}
\vskip -0.2in
\end{figure}
We demonstrate the benefits of the proposed methods on an example, in which tail modeling is crucial: 
we apply mTAF to generate new data following the distribution of the EUMETSAT Numerical Weather Prediction Satellite Application Facility (NWP-SAF) dataset \citep{weather_data}. The data set consists of $25\,000$ measurements of $3$ meteorological quantities, measured on different atmospheric levels. Considering each measurement in each atmospheric level, we obtain a $412$-dimensional dataset, which we visualize in Figure~\ref{fig:weather_true} in the Appendix. We train a vanilla, TAF, gTAF, and mTAF model using NSFs to fit the weather data.
All architectural details are listed in Section~\ref{sec:supp_climate}. 
\begin{table*}
\caption{Average test loss on the NWP-SAF dataset. We average over 25 trials per model and show the standard deviations in brackets.
\label{table:loss_weather}}
\begin{center}
\begin{tabular}{@{}lcccc@{}}
\toprule
& vanilla & TAF & gTAF & mTAF \\
\midrule 
$L$ & $-2101.91 (\pm 9.44)$ & $-2110.56 (\pm 7.87)$ &$-2113.48(\pm 7.93)$ & $-2121.38 (\pm 10.91)$\\ 
 \bottomrule
\end{tabular}
\end{center}
\end{table*}

Qualitatively, all generated profile lines appear reasonable, see Figure~\ref{fig:mTAF_samps} and Section~\ref{sec:supp_climate}. From a quantitative perspective, we observe that mTAF achieves the smallest negative log-likelihood loss, as displayed in Table~\ref{table:loss_weather}. We extend the quantitative analysis by investigating $1$-dimensional random projections of the data and flow samples\footnote{Note that in contrast to the synthetic experiments, this data has much more dimensions. Quantities such as the Area under log-log plot depend on the rare tail-samples, and hence, its estimation requires many samples. This requirement is further emphasized in the high-dimensional setting, which is why we resort to these $1$-dimensional projections.}. To do so, we follow the same procedure as \citet{meyer2021copula}, i.e. we generate random weights $w_1, \dots, w_{100} \sim U([0,1]^{412})$ and generate $100$ random $1$-dimensional data sets $X_{\operatorname{flow}}^{(j)}:= \{\langle w_j, T(z_i) \rangle: \; z_i\sim \bm{z} \}$ and $X_{\operatorname{data}}^{(j)}:=\{\langle w_j, x_i \rangle: \; x_i \in X \}$. For each $j$, we can compare statistics such as the means, standard deviations, and quantiles of $X_{\operatorname{flow}}^{(j)}$ and $X_{\operatorname{data}}^{(j)}$, which we visualize in Figure~\ref{fig:weather_stats}. While all methods are able to fit the mean very well, only mTAF generates data that obeys the same standard deviation and extreme quantiles.  
\begin{figure*}[h!]
\vskip 0.2in
\begin{center}
\begin{minipage}{0.3\textwidth}
\centering
\centerline{\includegraphics[width=\columnwidth]{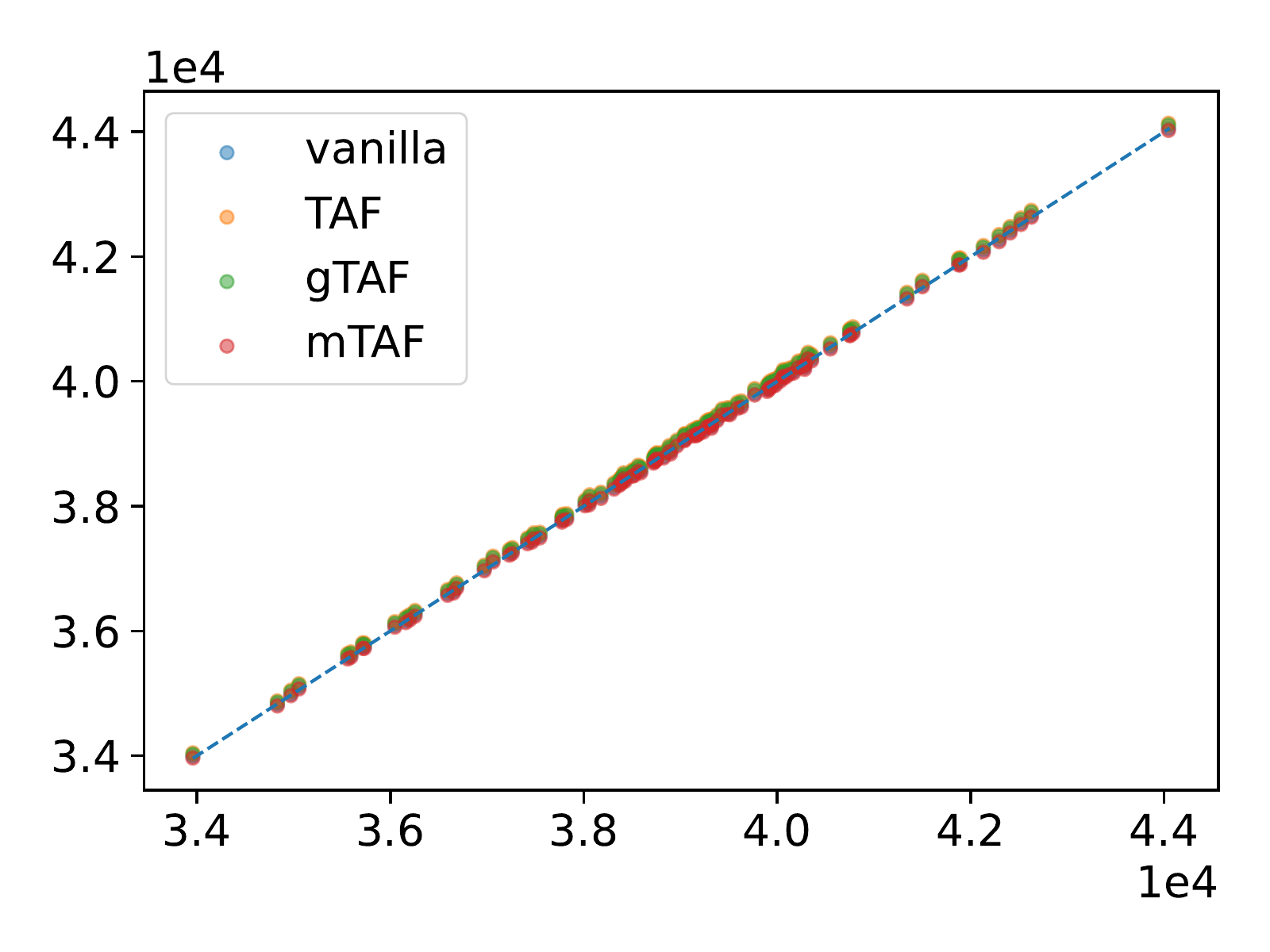}}
\subcaption{Mean.}
\label{fig:proj_mean}
\end{minipage}
\begin{minipage}{0.3\textwidth}
\centering
\centerline{\includegraphics[width=\columnwidth]{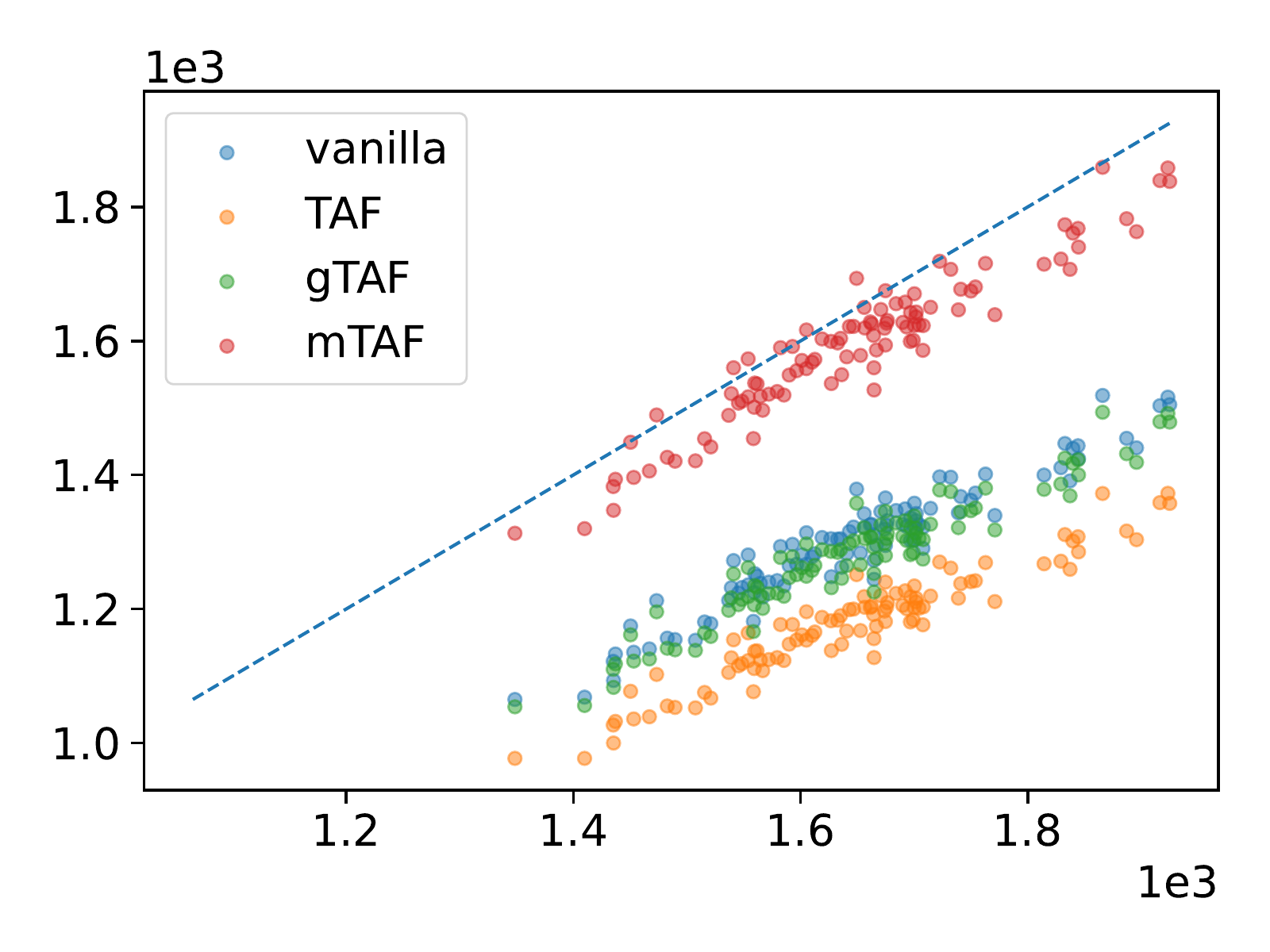}}
\subcaption{Standard Deviation}
\label{fig:synth_taildf3h4}
\end{minipage}
\begin{minipage}{0.3\textwidth}
\centering
\centerline{\includegraphics[width=\columnwidth]{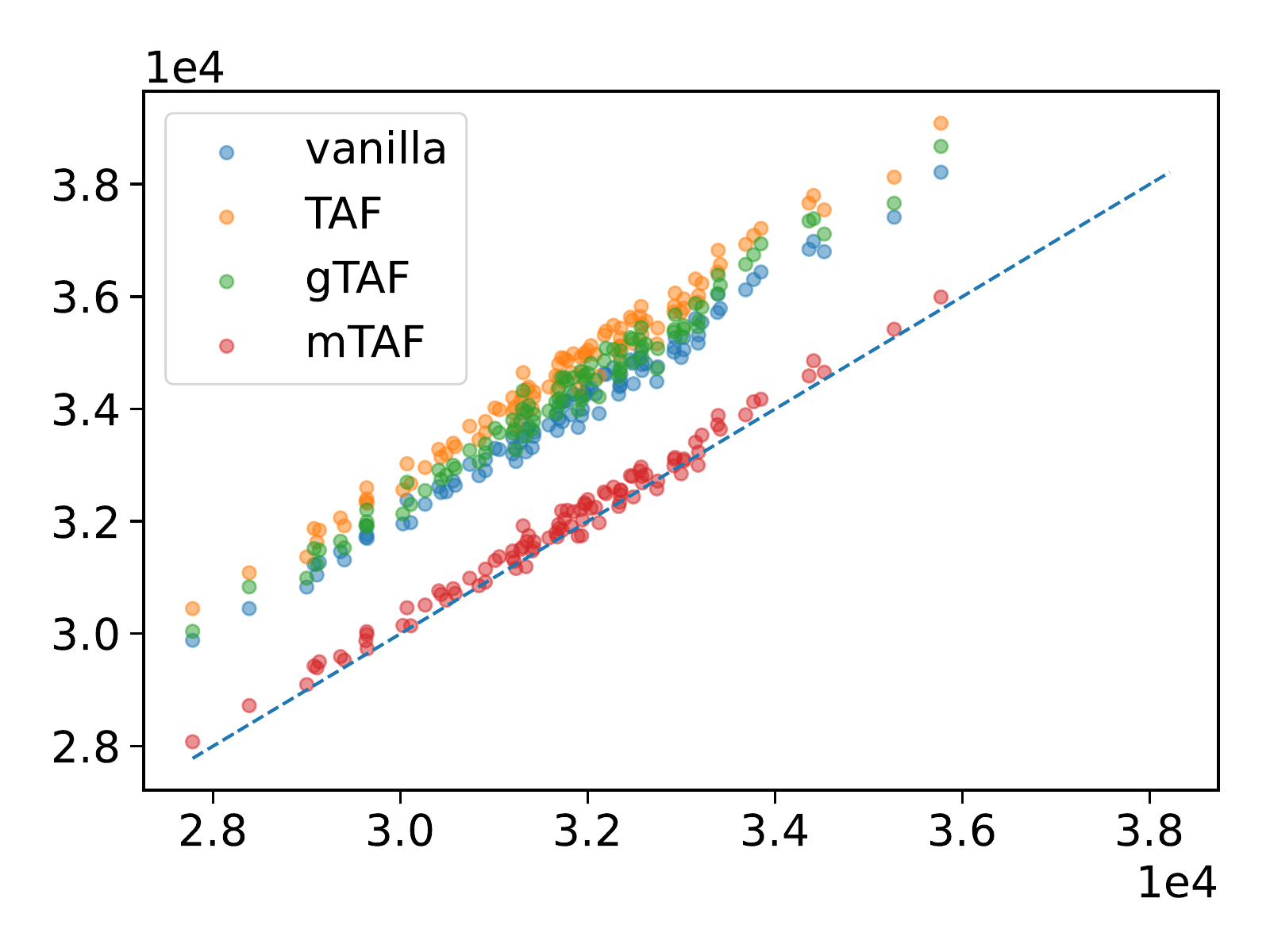}}
\label{fig:synth_taildf3h1}
\subcaption{1$\%$-quantile}
\end{minipage}
\caption{Statistics from 1-dimensional random projections of the data and flow samples. Each point represents the statistic $(S(X_{\operatorname{data}^{(j)}}), S(X_{\operatorname{flow}^{(j)}}))$, which are calculated using the random weight $w_j \sim U([0,1]^{412})$. In the ideal case, all points lie on the dotted diagional line.}
\label{fig:weather_stats}
\end{center}
\vskip -0.2in
\end{figure*}

More details and further comparisons are provided in Section~\ref{sec:supp_climate}.


\section{Limitations and Extensions}
Our theoretical findings from Section~\ref{sec:mTAFandTheory} provide a solid foundation on learning heavy-tailed generative models. Nonetheless, there exist limitations that are worth being addressed in future works.
\paragraph{Asymmetric Tail Behavior} is a widespread property of real distributions in which only one of the tails (lower or upper tail) is heavy-tailed whereas the other is light-tailed. For instance the cloud optical depth in the climate example (Section~\ref{sec:weather}) has just heavy upper tails since cloud-optical depth cannot drop below $0$. One way to potentially solve this issue would be to allow for even more flexible base distributions using composite models (see for instance \citet{composite_models} and the references therein). Recently, COMET Flows \cite{mcdonald2022comet} employed those composite models to learn the lower tail, the upper tail, and the body of each marginal separately to improve on the performance of a variant of copula flows \cite{wiese2019copula}. 
\paragraph{Tail Dependencies} are a popular property investigated in areas such as financial risk analysis, and which quantify the dependency of two marginals given that a tail event occurred. More precisely, we define the tail dependency between the marginals $\rvx_i, \; \rvx_j$ as 
\begin{equation*}
    \lambda_{i,j} = \lim_{u\rightarrow 1^-}\mathbb{P}\bigl(x_j > F_{j}^{-1}(u) \vert x_i > F_{i}^{-1} (u) \bigr) \enspace ,
\end{equation*}
where $F_{j}^{-1}$ and $F_{i}^{-1}$ are the quantile functions of $\rvx_j$ and $\rvx_i$, respectively.
There is a large body of theoretical works revolving around copulae to model tail dependencies \cite{joe2014dependence}. Hence, a possible extension might be to replace the mean field assumption in the base distribution by a copula distribution, i.e. by dependent marginals with PDF 
\begin{equation*}
    q(z) = c\bigl( 
    F_{1}(z_1), \dots  , F_{D}(z_D) \bigr) \prod_{j=1}^D q_j (z_j) \enspace ,
\end{equation*} 
where $F_j$ are the CDFs and $q_j$ are the marginal PDFs of $\rvx_j$ for each $j$.

\section{Conclusion}
In this work, we deepen the mathematical understanding of the tail behavior of autoregressive normalizing flows. We note that the distribution we want to model may have heavy- as well as light-tailed marginals, and we prove that standard normalizing flows are not able to learn such distributions. Our developed theory shows how the marginal tail behavior of the target distribution of the flow relates to the tail behavior of its base distribution. Based on these theoretical findings we propose a new algorithm, which we refer to as \name. In particular, we initialize a base distribution based on statistical tail estimates of the target, and employ structured linearities that guarantee the correct tail behavior of the target distribution. Importantly, we extend our theory to Neural Spline Flows and provide a modification that casts them marginally tail-adaptive. Lastly, as a trade-off between the theoretically founded \name{} and the less restricted linearities in standard normalizing flows, we present a more flexible relaxation of \name{} called gTAF, which is able to converge to \name{} as a special case.
An in-depth empirical analysis with heavy- and light-tailed marginals shows that \name{} and gTAF are superior in terms of estimation performance, especially when it comes to learning a heavy tail of a distribution. In contrast to standard normalizing flows, only gTAF and \name{} are able to reliably generate samples that follow the desired tail behavior. 

In summary, we believe that the theoretical soundness and the ability to faithfully generate extreme samples is a major strength of gTAF and \name{}, which are both crucial properties in various applications such as finance, climate and related areas. 

\section*{Acknowledgements}
This work was supported by the Deutsche Forschungs-
gemeinschaft (DFG, German Research Foundation)
under Germany’s Excellence Strategy – EXC- 2092
CASA – 390781972. 
We also thank the anonymous reviewers for their careful reading and their useful suggestions. 

\clearpage
\bibliography{references}

\begin{thebibliography}{50}
\providecommand{\natexlab}[1]{#1}
\providecommand{\url}[1]{\texttt{#1}}
\expandafter\ifx\csname urlstyle\endcsname\relax
  \providecommand{\doi}[1]{doi: #1}\else
  \providecommand{\doi}{doi: \begingroup \urlstyle{rm}\Url}\fi

\bibitem[{Abu Bakar} et~al.(2015){Abu Bakar}, Hamzah, Maghsoudi, and
  Nadarajah]{composite_models}
{Abu Bakar}, S., Hamzah, N., Maghsoudi, M., and Nadarajah, S.
\newblock Modeling loss data using composite models.
\newblock \emph{Insurance: Mathematics and Economics}, 61:\penalty0 146--154,
  2015.

\bibitem[Alexanderson \& Henter(2020)Alexanderson and
  Henter]{alexanderson2020robust}
Alexanderson, S. and Henter, G.~E.
\newblock Robust model training and generalisation with studentising flows.
\newblock In \emph{ICML Workshop on Invertible Neural Networks, Normalizing
  Flows, and Explicit Likelihood Models}, 2020.

\bibitem[Behrmann et~al.(2019)Behrmann, Grathwohl, Chen, Duvenaud, and
  Jacobsen]{behrmann2019invertible}
Behrmann, J., Grathwohl, W., Chen, R.~T., Duvenaud, D., and Jacobsen, J.-H.
\newblock Invertible residual networks.
\newblock In \emph{International Conference on Machine Learning}, pp.\
  573--582. PMLR, 2019.

\bibitem[Behrmann et~al.(2021)Behrmann, Vicol, Wang, Grosse, and
  Jacobsen]{pmlr-v130-behrmann21a}
Behrmann, J., Vicol, P., Wang, K.-C., Grosse, R., and Jacobsen, J.-H.
\newblock Understanding and mitigating exploding inverses in invertible neural
  networks.
\newblock In \emph{Proceedings of The 24th International Conference on
  Artificial Intelligence and Statistics}, volume 130, pp.\  1792--1800,
  Virtual Event, 2021.

\bibitem[Chen et~al.(2018)Chen, Rubanova, Bettencourt, and
  Duvenaud]{chen2018neural}
Chen, R.~T., Rubanova, Y., Bettencourt, J., and Duvenaud, D.
\newblock Neural ordinary differential equations.
\newblock In \emph{Advances in Neural Information Processing Systems},
  volume~31, Montreal, Canada, 2018.

\bibitem[Chen et~al.(2019)Chen, Behrmann, Duvenaud, and
  Jacobsen]{chen2019residual}
Chen, R. T.~Q., Behrmann, J., Duvenaud, D.~K., and Jacobsen, J.-H.
\newblock Residual flows for invertible generative modeling.
\newblock In \emph{Advances in Neural Information Processing Systems},
  volume~32, pp.\  9913--9923, Vancouver, BC, Canada, 2019.

\bibitem[Csorgo et~al.(1985)Csorgo, Deheuvels, and Mason]{kernel_estimator}
Csorgo, S., Deheuvels, P., and Mason, D.
\newblock {Kernel Estimates of the Tail Index of a Distribution}.
\newblock \emph{The Annals of Statistics}, 13\penalty0 (3):\penalty0 1050 --
  1077, 1985.

\bibitem[Danielsson et~al.(2001)Danielsson, {de Haan}, Peng, and {de
  Vries}]{hill_bootstrap1}
Danielsson, J., {de Haan}, L., Peng, L., and {de Vries}, C.~G.
\newblock Using a bootstrap method to choose the sample fraction in tail index
  estimation.
\newblock \emph{Journal of Multivariate Analysis}, 76\penalty0 (2):\penalty0
  226--248, 2001.

\bibitem[Dekkers et~al.(1989)Dekkers, Einmahl, and Haan]{moments_estimator}
Dekkers, A. L.~M., Einmahl, J. H.~J., and Haan, L.~D.
\newblock A moment estimator for the index of an extreme-value distribution.
\newblock \emph{The Annals of Statistics}, 17\penalty0 (4):\penalty0
  1833--1855, 1989.

\bibitem[Dinh et~al.(2015)Dinh, Krueger, and Bengio]{dinh2014nice}
Dinh, L., Krueger, D., and Bengio, Y.
\newblock {NICE:} non-linear independent components estimation.
\newblock In \emph{3rd International Conference on Learning Representations,
  {ICLR} Workshop Track Proceedings}, San Diego, CA, USA, 2015.

\bibitem[Draisma et~al.(1999)Draisma, de~Haan, Peng, and
  Pereira]{moments_bootstrap}
Draisma, G., de~Haan, L., Peng, L., and Pereira, T.~T.
\newblock A bootstrap-based method to achieve optimality in estimating the
  extreme-value index.
\newblock \emph{Extremes}, 2\penalty0 (4):\penalty0 367--404, 1999.

\bibitem[Durkan et~al.(2019)Durkan, Bekasov, Murray, and
  Papamakarios]{durkan2019neural}
Durkan, C., Bekasov, A., Murray, I., and Papamakarios, G.
\newblock Neural spline flows.
\newblock \emph{Advances in Neural Information Processing Systems},
  32:\penalty0 7511--7522, 2019.

\bibitem[Embrechts et~al.(2013)Embrechts, Kl{\"u}ppelberg, and
  Mikosch]{embrechts2013modelling}
Embrechts, P., Kl{\"u}ppelberg, C., and Mikosch, T.
\newblock \emph{Modelling extremal events: for insurance and finance},
  volume~33.
\newblock Springer Science \& Business Media, 2013.

\bibitem[Eresmaa \& McNally(2014)Eresmaa and McNally]{weather_data}
Eresmaa, R. and McNally, A.
\newblock Diverse profile datasets from the ecmwf 137-level short-range
  forecasts.
\newblock 10 2014.
\newblock \doi{10.13140/2.1.4476.8963}.

\bibitem[Gabaix et~al.(2003)Gabaix, Gopikrishnan, Plerou, and
  Stanley]{tails_stock}
Gabaix, X., Gopikrishnan, P., Plerou, V., and Stanley, H.~E.
\newblock A theory of power-law distributions in financial market fluctuations.
\newblock \emph{Nature}, 423\penalty0 (6937):\penalty0 267--270, 2003.

\bibitem[Gallier(2011)]{block_matrix_inversion}
Gallier, J.
\newblock Schur complements and applications.
\newblock 05 2011.
\newblock \doi{10.1007/978-1-4419-9961-0_16}.

\bibitem[Grathwohl et~al.(2019)Grathwohl, Chen, Bettencourt, Sutskever, and
  Duvenaud]{grathwohl2018ffjord}
Grathwohl, W., Chen, R. T.~Q., Bettencourt, J., Sutskever, I., and Duvenaud, D.
\newblock Ffjord: Free-form continuous dynamics for scalable reversible
  generative models.
\newblock In \emph{International Conference on Learning Representations}, New
  Orleans, LA, USA, 2019.

\bibitem[Groeneboom et~al.(2003)Groeneboom, Lopuhaä, and
  de~Wolf]{kernel_bootstrap}
Groeneboom, P., Lopuhaä, H.~P., and de~Wolf, P.-P.
\newblock {Kernel-type estimators for the extreme value index}.
\newblock \emph{The Annals of Statistics}, 31\penalty0 (6):\penalty0 1956 --
  1995, 2003.

\bibitem[Hill(1975)]{hill1975simple}
Hill, B.~M.
\newblock A simple general approach to inference about the tail of a
  distribution.
\newblock \emph{The annals of statistics}, pp.\  1163--1174, 1975.

\bibitem[Hofert et~al.(2018)Hofert, Kojadinovic, Maechler, and
  Yan]{elementsofcopula}
Hofert, M., Kojadinovic, I., Maechler, M., and Yan, J.
\newblock \emph{{E}lements of {C}opula {M}odeling with \textsf{R}}.
\newblock Springer Use R! Series, 2018.

\bibitem[(https://math.stackexchange.com/users/491644/maxim)()]{stackexchange-asymptotics}
(https://math.stackexchange.com/users/491644/maxim), M.
\newblock Asymptotics of inverse of normal cdf.
\newblock Mathematics Stack Exchange.
\newblock URL:https://math.stackexchange.com/q/2966269 (version: 2018-10-22).

\bibitem[Huang et~al.(2018)Huang, Krueger, Lacoste, and
  Courville]{huang2018neural}
Huang, C.-W., Krueger, D., Lacoste, A., and Courville, A.
\newblock Neural autoregressive flows.
\newblock In \emph{International Conference on Machine Learning}, pp.\
  2078--2087. PMLR, 2018.

\bibitem[Jacobsen et~al.(2018)Jacobsen, Smeulders, and
  Oyallon]{jacobsen2018revnet}
Jacobsen, J.-H., Smeulders, A., and Oyallon, E.
\newblock i-revnet: Deep invertible networks.
\newblock In \emph{International Conference on Learning Representations},
  Vancouver, Canada, 2018.

\bibitem[Jaini et~al.(2020)Jaini, Kobyzev, Yu, and
  Brubaker]{pmlr-v119-jaini20a}
Jaini, P., Kobyzev, I., Yu, Y., and Brubaker, M.
\newblock Tails of {L}ipschitz triangular flows.
\newblock In \emph{Proceedings of the 37th International Conference on Machine
  Learning}, volume 119, pp.\  4673--4681, Virtual Event, 2020.

\bibitem[Joe(2014)]{joe2014dependence}
Joe, H.
\newblock \emph{Dependence modeling with copulas}.
\newblock CRC press, 2014.

\bibitem[Katz(2002)]{climate}
Katz, R.~W.
\newblock Do weather or climate variables and their impacts have heavy-tailed
  distributions?
\newblock In \emph{Climate Variations and Forecasting (Joint with the 16th
  Conference Probability and Statistics and the 13th Symposium on Global Change
  and Climate Variations)}, 2002.

\bibitem[Kingma \& Dhariwal(2018)Kingma and Dhariwal]{kingma2018glow}
Kingma, D.~P. and Dhariwal, P.
\newblock Glow: Generative flow with invertible 1x1 convolutions.
\newblock In \emph{Advances in Neural Information Processing Systems},
  volume~31, 2018.

\bibitem[Kingma et~al.(2016)Kingma, Salimans, Jozefowicz, Chen, Sutskever, and
  Welling]{kingma2016improved}
Kingma, D.~P., Salimans, T., Jozefowicz, R., Chen, X., Sutskever, I., and
  Welling, M.
\newblock Improved variational inference with inverse autoregressive flow.
\newblock \emph{Advances in neural information processing systems},
  29:\penalty0 4743--4751, 2016.

\bibitem[Kirkby et~al.(2019)Kirkby, Nguyen, and Nguyen]{Kirkby2019MomentsOS}
Kirkby, J.~L., Nguyen, D.~H., and Nguyen, D.
\newblock Moments of student's t-distribution: A unified approach.
\newblock \emph{Computation Theory eJournal}, 2019.

\bibitem[Kobyzev et~al.(2020)Kobyzev, Prince, and
  Brubaker]{kobyzev2020normalizing}
Kobyzev, I., Prince, S., and Brubaker, M.
\newblock Normalizing flows: An introduction and review of current methods.
\newblock In \emph{IEEE Transactions on Pattern Analysis and Machine
  Intelligence}. IEEE, 2020.

\bibitem[Koonin et~al.(2006)Koonin, Wolf, and Karev]{tails_bio}
Koonin, E., Wolf, Y., and Karev, G.
\newblock \emph{Power Laws, Scale-Free Networks and Genome Biology}.
\newblock 01 2006.

\bibitem[Laszkiewicz et~al.(2021)Laszkiewicz, Lederer, and
  Fischer]{laszkiewicz2021copulabased}
Laszkiewicz, M., Lederer, J., and Fischer, A.
\newblock Copula-based normalizing flows.
\newblock In \emph{ICML Workshop on Invertible Neural Networks, Normalizing
  Flows, and Explicit Likelihood Models}, 2021.

\bibitem[Liu et~al.(2012)Liu, Lipschutz, and Spiegel]{book_formulas}
Liu, J., Lipschutz, S., and Spiegel, M.
\newblock \emph{Schaum's Outline of Mathematical Handbook of Formulas and
  Tables, 4th Edition}.
\newblock McGraw-Hill, 2012.

\bibitem[McDonald et~al.(2022)McDonald, Tan, and Luo]{mcdonald2022comet}
McDonald, A., Tan, P.-N., and Luo, L.
\newblock Comet flows: Towards generative modeling of multivariate extremes and
  tail dependence.
\newblock \emph{arXiv preprint arXiv:2205.01224}, 2022.

\bibitem[McNeil et~al.(2015)McNeil, Frey, and
  Embrechts]{mcneil2015quantitative}
McNeil, A.~J., Frey, R., and Embrechts, P.
\newblock \emph{Quantitative risk management: concepts, techniques and
  tools-revised edition}.
\newblock Princeton university press, 2015.

\bibitem[Meyer et~al.(2021)Meyer, Nagler, and Hogan]{meyer2021copula}
Meyer, D., Nagler, T., and Hogan, R.~J.
\newblock Copula-based synthetic data augmentation for machine-learning
  emulators.
\newblock \emph{Geoscientific Model Development}, 14\penalty0 (8):\penalty0
  5205--5215, 2021.

\bibitem[Nair et~al.(2013)Nair, Wierman, and Zwart]{heavytails_book}
Nair, J., Wierman, A., and Zwart, B.
\newblock The fundamentals of heavy-tails: Properties, emergence, and
  identification.
\newblock In \emph{Proceedings of the ACM SIGMETRICS/International Conference
  on Measurement and Modeling of Computer Systems}, pp.\  387–388, New York,
  NY, USA, 2013.

\bibitem[Oliva et~al.(2018)Oliva, Dubey, Zaheer, Poczos, Salakhutdinov, Xing,
  and Schneider]{oliva2018transformation}
Oliva, J., Dubey, A., Zaheer, M., Poczos, B., Salakhutdinov, R., Xing, E., and
  Schneider, J.
\newblock Transformation autoregressive networks.
\newblock In \emph{International Conference on Machine Learning}, pp.\
  3898--3907. PMLR, 2018.

\bibitem[Papamakarios et~al.(2017)Papamakarios, Pavlakou, and
  Murray]{papamakarios2017masked}
Papamakarios, G., Pavlakou, T., and Murray, I.
\newblock Masked autoregressive flow for density estimation.
\newblock In \emph{Advances in Neural Information Processing Systems},
  volume~30, pp.\  2338--2347, Long Beach, CA, {USA}, 2017.

\bibitem[{Patki} et~al.(2016){Patki}, {Wedge}, and {Veeramachaneni}]{sdv}
{Patki}, N., {Wedge}, R., and {Veeramachaneni}, K.
\newblock The synthetic data vault.
\newblock In \emph{2016 IEEE International Conference on Data Science and
  Advanced Analytics (DSAA)}, pp.\  399--410, Oct 2016.
\newblock \doi{10.1109/DSAA.2016.49}.

\bibitem[Pintado \& Romo(2009)Pintado and Romo]{depth_statistics}
Pintado, S. and Romo, J.
\newblock On the concept of depth for functional data.
\newblock \emph{Journal of the American Statistical Association}, 104, 06 2009.
\newblock \doi{10.1198/jasa.2009.0108}.

\bibitem[Qi(2008)]{hill_bootstrap2}
Qi, Y.
\newblock Bootstrap and empirical likelihood methods in extremes.
\newblock \emph{Extremes}, 11:\penalty0 81--97, 03 2008.

\bibitem[Resnick(2004)]{resnick}
Resnick, S.
\newblock On the foundations of multivariate heavy-tail analysis.
\newblock \emph{Journal of Applied Probability}, 41:\penalty0 191--212, 2004.

\bibitem[Rezende \& Mohamed(2015)Rezende and Mohamed]{rezende2015variational}
Rezende, D. and Mohamed, S.
\newblock Variational inference with normalizing flows.
\newblock In \emph{International conference on machine learning}, pp.\
  1530--1538. PMLR, 2015.

\bibitem[Rippel \& Adams(2013)Rippel and Adams]{rippel2013high}
Rippel, O. and Adams, R.~P.
\newblock High-dimensional probability estimation with deep density models.
\newblock \emph{CoRR}, abs/1302.5125, 2013.

\bibitem[Tabak \& Turner(2013)Tabak and Turner]{tabakturner}
Tabak, E.~G. and Turner, C.~V.
\newblock A family of nonparametric density estimation algorithms.
\newblock \emph{Communications on Pure and Applied Mathematics}, 66\penalty0
  (2):\penalty0 145--164, 2013.

\bibitem[Theis et~al.(2015)Theis, Oord, and Bethge]{theis2015note}
Theis, L., Oord, A. v.~d., and Bethge, M.
\newblock A note on the evaluation of generative models.
\newblock \emph{arXiv preprint arXiv:1511.01844}, 2015.

\bibitem[Van~der Vaart(2000)]{van2000asymptotic}
Van~der Vaart, A.~W.
\newblock \emph{Asymptotic statistics}, volume~3.
\newblock Cambridge university press, 2000.

\bibitem[Voitalov et~al.(2019)Voitalov, van~der Hoorn, van~der Hofstad, and
  Krioukov]{tail_estcode}
Voitalov, I., van~der Hoorn, P., van~der Hofstad, R., and Krioukov, D.
\newblock Scale-free networks well done.
\newblock \emph{Phys. Rev. Research}, 1:\penalty0 033034, Oct 2019.

\bibitem[Wiese et~al.(2019)Wiese, Knobloch, and Korn]{wiese2019copula}
Wiese, M., Knobloch, R., and Korn, R.
\newblock Copula \& marginal flows: Disentangling the marginal from its joint.
\newblock \emph{arXiv preprint arXiv:1907.03361}, 2019.

\end{thebibliography}
\bibliographystyle{icml2022}

\newpage
\appendix
\onecolumn

\section{Theory and Proofs}
In this Section, we derive and prove our theoretical results. We start by presenting some preliminary results in Section~\ref{sec:theoryforproofs}, which help us providing the proof of our main result in Section~\ref{sec:proofs}. In Section~\ref{sec:notesonassumption}, we illuminate the technical Assumption~\ref{assu:copuladensity} and provide examples and intuitions. 
\subsection{Preliminary theoretical Results}\label{sec:theoryforproofs}
\begin{proposition}\label{prop:tailindex_dependsontails}
    Let $\rvx\in\sR$ be a random variable. Then, it holds that $\rvx$ has tail index less or equal than $\alpha$ iff. for any $\beta > \alpha$ and any $C>0$ it is 
    \begin{equation*}
        \int_{\vert x \vert > C} \vert x \vert^\beta p(x) dx = \infty \enspace .
    \end{equation*}
\end{proposition}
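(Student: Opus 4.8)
The plan is to reduce the statement to the elementary observation that the contribution of any bounded region to a moment integral is always finite, so that finiteness of $\E_{\rvx}[\vert \rvx\vert^\beta]$ is governed entirely by the tail integral $\int_{\vert x\vert>C}\vert x\vert^\beta p(x)\,dx$, and then to unpack the definition of the tail index.

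First I would fix $\beta>0$ and $C>0$ and split $\E_{\rvx}[\vert \rvx\vert^\beta]=\int_{\vert x\vert\le C}\vert x\vert^\beta p(x)\,dx+\int_{\vert x\vert>C}\vert x\vert^\beta p(x)\,dx$. The first integrand is continuous on $\sR$ (recall $p$ is assumed continuous), hence bounded on the compact set $\{\vert x\vert\le C\}$, so the first integral is finite. Consequently $\E_{\rvx}[\vert \rvx\vert^\beta]=\infty$ if and only if $\int_{\vert x\vert>C}\vert x\vert^\beta p(x)\,dx=\infty$. Moreover, since for $0<C_1<C_2$ the difference of the corresponding tail integrals equals $\int_{C_1<\vert x\vert\le C_2}\vert x\vert^\beta p(x)\,dx$, which is an integral over a bounded set and therefore finite, the tail integral is infinite for some $C>0$ iff it is infinite for every $C>0$.

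Next I would translate Definition~\ref{defi:tail_index}. If $\rvx$ has tail index $\alpha_0\le\alpha$, then every $\beta>\alpha$ also satisfies $\beta>\alpha_0$, so $\E_{\rvx}[\vert \rvx\vert^\beta]=\infty$. Conversely, if $\E_{\rvx}[\vert \rvx\vert^\beta]=\infty$ for all $\beta>\alpha$ but the tail index $\alpha_0$ satisfied $\alpha_0>\alpha$, then choosing any $\beta\in(\alpha,\alpha_0)$ would give $\E_{\rvx}[\vert \rvx\vert^\beta]<\infty$, a contradiction; hence $\alpha_0\le\alpha$. Thus ``$\rvx$ has tail index less or equal than $\alpha$'' is equivalent to ``$\E_{\rvx}[\vert \rvx\vert^\beta]=\infty$ for every $\beta>\alpha$''. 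Combining this with the first step, applied for each $\beta>\alpha$ and each $C>0$, yields the claimed equivalence.

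The calculations here are routine; the only point requiring a little care is the interpretation of ``tail index less or equal than $\alpha$'' together with the boundary case $\beta=\alpha_0$, which Definition~\ref{defi:tail_index} leaves unspecified but which never arises since we only ever consider $\beta>\alpha\ge\alpha_0$. I would also note explicitly that continuity of $p$ is precisely what makes the body integral harmless, which is the formal content of the remark preceding the proposition that moment existence depends only on the tails of $\rvx$ and not on its body.
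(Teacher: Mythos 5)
Your proof is correct and follows essentially the same route as the paper: split $\E_{\rvx}[\vert\rvx\vert^\beta]$ into the body integral over $\{\vert x\vert\le C\}$, which is finite by continuity of $p$ on a compact set, and the tail integral, then unpack Definition~\ref{defi:tail_index}. Your extra remarks (independence of the choice of $C$ and the explicit contradiction argument for the converse) are fine elaborations of the same argument, not a different approach.
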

\begin{proof}
Let us first assume that $\rvx$ has tail index at most $\alpha$. Then, according to Definition~\ref{defi:tail_index}, we know that for any $\beta>\alpha$ and $C>0$
\begin{align*}
     \infty = \E_{\rvx}[\vert \rvx \vert^\beta] &=\int_{\sR} \vert x \vert^\beta p(x) dx 
     = \int_{\vert x \vert \leq C} \vert x \vert^\beta p(x) dx + \int_{\vert x \vert > C} \vert x \vert^\beta p(x) dx \\
     &\leq C^\beta \int_{\vert x \vert \leq C} p(x) dx  + \int_{\vert x \vert > C} \vert x \vert^\beta p(x) dx \enspace . 
\end{align*}
Since we assume $p$ to be continuous, we can bound $p$ on the compact interval $[-C, C]$, and hence, the first above integral must be bounded. Therefore, it is \begin{equation*}
        \int_{\vert x \vert > C} \vert x \vert^\beta p(x) dx = \infty \enspace .
    \end{equation*}
To prove the back-direction, let us consider a $\beta>\alpha$ and $C>0$. Very similarly to the forward-proof, we can now see that 
\begin{align*}
    \infty &= \int_{\vert x \vert > C} \vert x \vert^\beta p(x) dx \\
        &= C^\beta \int_{\vert x \vert \leq C} p(x) dx + \int_{\vert x \vert > C} \vert x \vert^\beta p(x) dx \\
        &= \E_{\rvx}\bigl[ \vert \rvx \vert^\beta \bigr] \enspace ,
\end{align*}
where the second equality follows from the finiteness of the integral. Since this follows for any $\beta> \alpha$, $\rvx$ must have tail index $\alpha$ or less.
    
\end{proof}
This simple result demonstrates that the tail index, as indicated by the name, depends on the tail of the distribution, i.e. on the PDFs behavior for large values $\vert \vx \vert> C$. This fact motivates why maximum likelihood estimations of the tail index, which depend on the whole distribution are biased. For more elaborate details, we refer to Section~9 in \citet{heavytails_book}.

In a similar fashion to the previous result, the next technical lemma states that unboundedness of the moment-generating function is due to the unboundedness of the integrand for tail events, i.e. for $\vz > \vz^\ast$. This little lemma turns out to be useful in the proof of Proposition~\ref{prop:allheavybad}. 
\begin{lemma}\label{prop:technicaltails}
Let $\rvz \in \sR$ be heavy-tailed. Then it holds for any $z^\ast \in \sR$ and $\lambda > 0$ that
\begin{equation*}
    \int_{z>z^\ast} e^{\lambda z} p_{\rvz}(z) dz = \infty \enspace .
\end{equation*}
\end{lemma}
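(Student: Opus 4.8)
The statement to prove is Lemma~\ref{prop:technicaltails}: for a heavy-tailed $\rvz \in \sR$, any $z^\ast \in \sR$, and any $\lambda > 0$, we have $\int_{z > z^\ast} e^{\lambda z} p_{\rvz}(z)\,dz = \infty$. The natural approach is to reduce to the defining property of heavy-tailedness (Definition~\ref{defi:heavytails}), namely that $m_{p_{\rvz}}(\lambda) = \int_{\sR} e^{\lambda z} p_{\rvz}(z)\,dz = \infty$ for all $\lambda > 0$, by splitting the integral over $\sR$ into the region $\{z \le z^\ast\}$ and the region $\{z > z^\ast\}$ and showing the first piece is finite.

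First I would write
\begin{equation*}
    \infty = m_{p_{\rvz}}(\lambda) = \int_{z \le z^\ast} e^{\lambda z} p_{\rvz}(z)\,dz + \int_{z > z^\ast} e^{\lambda z} p_{\rvz}(z)\,dz \enspace .
\end{equation*}
Then I would argue that the first integral is finite: on the half-line $\{z \le z^\ast\}$, since $\lambda > 0$, we have $e^{\lambda z} \le e^{\lambda z^\ast}$, so $\int_{z \le z^\ast} e^{\lambda z} p_{\rvz}(z)\,dz \le e^{\lambda z^\ast} \int_{z \le z^\ast} p_{\rvz}(z)\,dz \le e^{\lambda z^\ast} < \infty$, using that $p_{\rvz}$ is a probability density. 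Since the left-hand side of the display is $\infty$ and the first summand on the right is finite, the second summand must be $\infty$, which is exactly the claim.

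The argument is essentially immediate once one recalls the definition, so there is no real obstacle; the only thing to be slightly careful about is that the decomposition of $\int_{\sR}$ into the two pieces is valid (both integrands are nonnegative, so this is just additivity of the integral over disjoint measurable sets, with no issue of $\infty - \infty$). One small stylistic point: the lemma as stated refers to $e^{\lambda z}$, and it would be worth noting that the same bound $e^{\lambda z} \le e^{\lambda z^\ast}$ on $\{z \le z^\ast\}$ is what makes the ``body'' contribution harmless regardless of how large $|z^\ast|$ is. This mirrors Proposition~\ref{prop:tailindex_dependsontails}, where a similar split isolates the tail contribution, and the present lemma will be the tool used in the proof of Proposition~\ref{prop:allheavybad} to conclude heavy-tailedness of $T_j(\rvz_{\le j})$ from heavy-tailedness of $\rvz_j$ alone.
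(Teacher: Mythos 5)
Your proof is correct and follows essentially the same route as the paper: both split the moment-generating function at $z^\ast$ and bound the contribution from $\{z \le z^\ast\}$ by $e^{\lambda z^\ast}$ (the paper writes this bound as $F_{\rvz}(z^\ast)e^{\lambda z^\ast}$), forcing the tail integral to be infinite. Your additional remark that the split is safe because both integrands are nonnegative is a nice touch but not a substantive difference.
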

\begin{proof}
    Since $\rvz \in \sR$ is heavy-tailed, we know that for all $\lambda>0$
    \begin{align*}
        \infty = m_{\rvz}(\lambda) &=
        \int_{\sR} e^{\lambda z} p_{\rvz}(z)dz \\ 
        &= \int_{z\leq z^\ast} e^{\lambda z} p_{\rvz}(z)dz + \int_{z>z^\ast} e^{\lambda z} p_{\rvz}(z)dz \\ 
        &\leq F_\rvz (z^\ast) e^{\lambda z^\ast} + \int_{z>z^\ast} e^{\lambda z} p_{\rvz}(z)dz \enspace ,
    \end{align*}
    where $F_\rvz$ is the CDF of $\rvz$. The last inequality follows from the fact that $\exp(\lambda z)$ is monotonic increasing in $z$. Since the first summand is bounded, it follows that the second summand must be unbounded. This completes the proof.
\end{proof}
Recall that in Proposition~\ref{prop:marg&normtail} we state that $j$-heavy-tailedness induces $\ell_2$-heavy-tailedness. In the following, we provide a formal proof of this result.
\begin{proof}[Proof of Proposition~\ref{prop:marg&normtail}]
    For this proof, we employ the equivalent definition of heavy-tailedness of $\rvx_j$ via the decay rate of its distribution function (see e.g. Lemma~1.1. in \citet{heavytails_book}), i.e.
    \begin{equation}
        \limsup_{x_j \rightarrow \infty} \frac{1 - F_j(x_j)}{e^{-\lambda x_j}} = \infty \quad \text{for all }\lambda >0 \enspace , \label{eq:equivalentheavytail}
    \end{equation}
    where $F_j$ is the CDF of $\rvx_j$. Since $x_j \leq \Vert x\Vert$ for all $x\in\sR^D$, we can conclude that $F_j(a) \geq F_{\Vert x\Vert}(a)$ for $a\in\sR$. Therefore, 
    \begin{equation*}
        \frac{1-F_{\Vert \rvx\Vert}(a)}{e^{-\lambda a}} \geq \frac{1-F_{x_j}(a)}{e^{-\lambda a}} \rightarrow \infty \quad \text{for } a\rightarrow\infty \enspace .
    \end{equation*}
    According to the equivalent definition in~\eqref{eq:equivalentheavytail}, $\Vert \rvx \Vert$ is heavy-tailed, which proves that $\rvx$ is $\ell_2$-heavy-tailed.
\end{proof}
The following is a well-known implication of the change of variables formula and the integration rule by substitution, which we are going to apply in the subsequent proofs. 
\begin{lemma}[Substitution in the Moment-Generating Function]\label{lemma:cov}
    Let $T$ be a diffeomorphism such that $T(\rvz)=\rvx$ for some random variables $\rvx, \rvz \in \sR^D$. Then, we can rewrite
    \begin{equation*}
    \int_{\sR^D} e^{\lambda \vx} p(\vx) d\vx = \int_{\sR^D} e^{\lambda  T(\vz)} q(\vz) d\vz \enspace . 
    \end{equation*}
\end{lemma}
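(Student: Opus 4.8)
The plan is to derive the identity by combining the density change-of-variables formula~\eqref{eq:changeofvar} with the ordinary multivariate integration-by-substitution rule. First I would start from the left-hand side $\int_{\sR^D} e^{\lambda \vx} p(\vx)\,d\vx$ and rewrite $p$ using~\eqref{eq:changeofvar}: since $\rvx = T(\rvz)$ with $T$ a diffeomorphism, we have $p(\vx) = q\bigl(T^{-1}(\vx)\bigr)\,\bigl|\det J_{T^{-1}}(\vx)\bigr|$ for every $\vx \in \sR^D$, so the left-hand side equals $\int_{\sR^D} e^{\lambda \vx}\, q\bigl(T^{-1}(\vx)\bigr)\,\bigl|\det J_{T^{-1}}(\vx)\bigr|\,d\vx$.

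Next I would perform the substitution $\vz = T^{-1}(\vx)$, i.e.\ $\vx = T(\vz)$. Because $T$ is a diffeomorphism of $\sR^D$ onto itself, this is a valid change of variables with $d\vx = \bigl|\det J_T(\vz)\bigr|\,d\vz$, and the inverse function theorem gives $J_{T^{-1}}\bigl(T(\vz)\bigr) = \bigl(J_T(\vz)\bigr)^{-1}$, hence $\bigl|\det J_{T^{-1}}(\vx)\bigr| = \bigl|\det J_T(\vz)\bigr|^{-1}$ at $\vx = T(\vz)$. Substituting these in, the Jacobian factor $\bigl|\det J_T(\vz)\bigr|^{-1}$ and the factor $\bigl|\det J_T(\vz)\bigr|$ coming from $d\vx$ cancel, and the integral becomes $\int_{\sR^D} e^{\lambda T(\vz)}\, q(\vz)\,d\vz$, which is the right-hand side.

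The only point requiring a word of care is that these integrals can be infinite --- indeed, the whole purpose of the lemma is to manipulate moment-generating functions of possibly heavy-tailed variables. To cover this I would invoke the substitution rule in its form for non-negative measurable integrands, where the identity $\int f(\vx)\,d\vx = \int f\bigl(T(\vz)\bigr)\,\bigl|\det J_T(\vz)\bigr|\,d\vz$ holds as an equality in $[0,\infty]$; since $p,q>0$ by the standing assumption on the densities and $e^{\lambda(\cdot)}>0$, this applies directly, so the equality is valid as an identity of extended reals, which is exactly what is needed in the later applications. Beyond this bookkeeping there is no real obstacle; the main thing to get right is the orientation of the Jacobian factors so that they cancel rather than square.
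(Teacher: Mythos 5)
Your proposal is correct and follows essentially the same route as the paper: apply the density change-of-variables formula \eqref{eq:changeofvar} to rewrite $p(\vx)$, then substitute $\vz = T^{-1}(\vx)$ so that the Jacobian factors cancel. Your extra remarks on the inverse function theorem and the validity of the substitution rule for non-negative integrands (allowing the value $\infty$) are sound refinements of the same argument rather than a different approach.
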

For completeness, we give a brief proof of this result.
\begin{proof}
    Using the change of variables formula, see \eqref{eq:changeofvar}, we can write 
    \begin{equation*}
        \int_{\sR^D} e^{\lambda \vx} p(\vx) d\vx = \int_{\sR^D} e^{\lambda \vx} q\bigl(T^{-1}(\vx) \bigr) \bigl\vert \det J_{T^{-1}(\vx)} \bigr\vert d\vx \enspace .
    \end{equation*}
    Now, we can rewrite $\exp(\lambda \vx)=\exp(\lambda T\bigl(T^{-1}(\vx)\bigr)$ and substitute $\vz = T^{-1}$. Integration by substitution completes the proof.
\end{proof}

Next, we present how we can use copulae to reformulate a multivariate PDF.
\begin{definition}[Copula] \label{defi:copula}
A copula is a multivariate distribution with cumulative distribution function (CDF) $C:[0,1]^D \rightarrow [0, 1]$ that has standard uniform marginals, i.e. the marginals $C_j$ of $C$ satisfy $C_j \sim U[0,1]$.
\end{definition}

\begin{theorem}[Sklar's Theorem] Taken from \citet{elementsofcopula}. \label{sklarstheorem}
\begin{enumerate}
    \item For any $D$-dimensional CDF $F$ with marginal CDFs $F_{1},\dots, F_{D}$, there exists a copula $C$ such that
    \begin{equation}
    \label{eq:sklar_df}
        F(\vz) = C\bigl( F_{1}(\vz_1), \dots, F_{D}(\vz_D) \bigr) \enspace 
    \end{equation}
    for all $\vz\in\mathbb{R}^D$. The copula is uniquely defined on $\mathcal{U}:=\prod_{j=1}^D \operatorname{Im}(F_{j})$, where $\operatorname{Im}(F_{j})$ is the image of $F_{j}$. For all $\vu\in\mathcal{U}$ it is given by 
    \begin{equation*}
        C(\vu) = F \bigl( F_{1}^{\leftarrow}(\vu_1), \dots , F_{D}^{\leftarrow}(\vu_D) \bigr)\enspace ,
    \end{equation*}
    where $F_{j}^{\leftarrow}$ are the right-inverses of $F_{j}$.    
    \item Conversely, given any $D$-dimensional copula $C$ and marginal CDFs $F_{1}, \dots F_{D}$, a function $F$ as defined in \eqref{eq:sklar_df} is a $D$-dimensional CDF with marginals $F_{1},\dots ,F_{D}$.
\end{enumerate}
\end{theorem}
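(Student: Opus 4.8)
The plan is to prove the two items of the theorem separately. For the first (existence, the explicit formula, and uniqueness on $\mathcal{U}$) I would use a probabilistic construction; for the second (the converse) I would directly verify the defining properties of a $D$-dimensional CDF. Throughout, realize $F$ as the joint CDF of a random vector $\rvz=(\rvz_1,\dots,\rvz_D)$ whose marginals have CDFs $F_1,\dots,F_D$, and work with the generalized inverse $F_j^{\leftarrow}(u):=\inf\{x\in\sR: F_j(x)\ge u\}$. The one fact that does all the work is the ``switching'' (Galois) property
\[
F_j^{\leftarrow}(u)\le x \iff u\le F_j(x),
\]
which follows from monotonicity and right-continuity of $F_j$ and which also gives $F_j\bigl(F_j^{\leftarrow}(u)\bigr)=u$ for every $u\in\operatorname{Im}(F_j)$ (so $F_j^{\leftarrow}$ really is a right inverse of $F_j$ on $\operatorname{Im}(F_j)$).

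\textbf{Existence.} If all $F_j$ are continuous, set $U_j:=F_j(\rvz_j)$; the probability integral transform makes each $U_j$ uniform on $[0,1]$ and $F_j^{\leftarrow}(U_j)=\rvz_j$ almost surely. In the general case the marginals may have atoms, so $F_j(\rvz_j)$ need not be uniform; I would then use the \emph{distributional transform}: take $\lambda_1,\dots,\lambda_D$ i.i.d.\ $U[0,1]$ independent of $\rvz$ and put $U_j:=F_j(\rvz_j-)+\lambda_j\bigl(F_j(\rvz_j)-F_j(\rvz_j-)\bigr)$; one checks that each $U_j$ is again uniform on $[0,1]$ and that $\rvz_j=F_j^{\leftarrow}(U_j)$ almost surely. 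Let $C$ be the joint CDF of $(U_1,\dots,U_D)$; it is a copula because its one-dimensional marginals are $U[0,1]$. Applying the switching property in each coordinate,
\begin{align*}
F(z) &= \mathbb{P}\bigl(\rvz_j\le z_j\ \text{for all }j\bigr)
     = \mathbb{P}\bigl(F_j^{\leftarrow}(U_j)\le z_j\ \text{for all }j\bigr)\\
     &= \mathbb{P}\bigl(U_j\le F_j(z_j)\ \text{for all }j\bigr)
     = C\bigl(F_1(z_1),\dots,F_D(z_D)\bigr),
\end{align*}
which is exactly \eqref{eq:sklar_df}.

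\textbf{Uniqueness on $\mathcal{U}$ and the explicit formula.} If $\widetilde C$ is any copula satisfying \eqref{eq:sklar_df}, then for $u\in\mathcal{U}=\prod_{j}\operatorname{Im}(F_j)$ choose $z_j$ with $F_j(z_j)=u_j$; this gives $\widetilde C(u)=F(z)=C(u)$, so $C$ is determined on $\mathcal{U}$. Taking the particular choice $z_j=F_j^{\leftarrow}(u_j)$ and using $F_j\bigl(F_j^{\leftarrow}(u_j)\bigr)=u_j$ on $\operatorname{Im}(F_j)$ turns \eqref{eq:sklar_df} into the stated closed form $C(u)=F\bigl(F_1^{\leftarrow}(u_1),\dots,F_D^{\leftarrow}(u_D)\bigr)$ for all $u\in\mathcal{U}$.

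\textbf{Converse, and the main obstacle.} Given a copula (CDF) $C$ and arbitrary marginal CDFs $F_1,\dots,F_D$, define $F(z):=C\bigl(F_1(z_1),\dots,F_D(z_D)\bigr)$ and check the CDF axioms: $F(z)\to 0$ whenever some $z_j\to-\infty$ (then $F_j(z_j)\to 0$ and $C$ vanishes on every face $\{u_j=0\}$); $F(z)\to 1$ as all $z_j\to+\infty$ (then $C(1,\dots,1)=1$); the $j$-th one-dimensional marginal of $F$ is $a\mapsto C(1,\dots,1,F_j(a),1,\dots,1)=F_j(a)$ because $C$ has uniform marginals; right-continuity of $F$, since each $F_j$ is right-continuous and $C$ is Lipschitz continuous; and the $D$-increasing (rectangle-inequality) property, because $(F_1,\dots,F_D)$ maps a box in $z$-space to a box in $[0,1]^D$ whose $C$-volume is nonnegative when the $F_j$ separate the endpoints and zero otherwise. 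The genuinely non-routine point in the whole argument is the existence part for discontinuous marginals: there the naive $U_j=F_j(\rvz_j)$ is not uniform and the distributional transform is exactly the device that repairs this — and this is also the reason uniqueness can only be asserted on $\mathcal{U}$, since off the ranges of the $F_j$ one may modify $C$ freely (e.g.\ by re-interpolating between attained values). Everything else is bookkeeping that can be cited from \citet{elementsofcopula}.
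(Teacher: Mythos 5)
Your proposal is correct, but there is nothing in the paper to compare it against: the paper deliberately does not prove this statement, importing Sklar's theorem verbatim from \citet{elementsofcopula} and using it only as a tool (to factor $q_{\leq j}$ into its marginals and a copula density in the proof of Proposition~\ref{prop:allheavybad}). What you wrote is therefore a self-contained substitute proof, and it follows the standard distributional-transform route of R\"uschendorf, which is one of the accepted textbook arguments. Its advantages are exactly the ones you highlight: it handles atomic marginals, where the naive probability-integral transform $U_j=F_j(\rvz_j)$ fails, it makes transparent why uniqueness can only be claimed on $\mathcal{U}=\prod_{j}\operatorname{Im}(F_{j})$, and the converse reduces to a routine verification of the CDF axioms using the Lipschitz continuity and $D$-increasingness of $C$. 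Two small points you pass over with ``one checks'': the almost-sure identity $F_j^{\leftarrow}(U_j)=\rvz_j$ needs the observation that the flat pieces of $F_j$ carry zero probability mass under the law of $\rvz_j$, and in the rectangle-inequality step of the converse you should note that the $F$-volume of a box equals the $C$-volume of its image box also when some edges collapse (both volumes are then zero). Neither is a substantive gap; for the purposes of this paper the citation suffices, and your sketch would be an acceptable appendix-level proof if self-containment were desired.
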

Therefore, if $F$ is absolutely continuous, we can differentiate \eqref{eq:sklar_df} to obtain the PDF of $\rvz$
\begin{equation*}
    q(z)=c\bigl(F_1(z_1), \dots , F_D(z_D)\bigr) \prod_{j=1}^D q_j(z_j) \enspace ,
\end{equation*}
where $c$ denotes the PDF of the copula $C$. 

Lastly, we present the following asymptotic behavior of the inverse CDF of a standard Gaussian distribution, which we use in Section~\ref{sec:notesonassumption} to explain Assumption~\ref{assu:copuladensity}.
\begin{lemma}[Asymptotic Behavior\footnote{The idea of the proof is due to \citet{stackexchange-asymptotics}} of $\Phi^{-1}(1-y)$] \label{lemma:asymp}
    Denote by $\Phi$ the CDF of a standard Gaussian distribution. Then, it holds for the inverse of $\Phi$ that 
    \begin{equation*}
        \Phi^{-1}(1-y)\sim \sqrt{-2 \log(y)} \quad \text{ for } y\rightarrow 0 \enspace .
    \end{equation*}
\end{lemma}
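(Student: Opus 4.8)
The plan is to prove the asymptotic equivalence $\Phi^{-1}(1-y) \sim \sqrt{-2\log y}$ as $y \to 0^+$ by setting $x := \Phi^{-1}(1-y)$, so that $1 - \Phi(x) = y$ with $x \to \infty$, and then relating $x$ to $y$ via the classical Gaussian tail asymptotic $1 - \Phi(x) \sim \varphi(x)/x = \frac{1}{\sqrt{2\pi}\,x} e^{-x^2/2}$ as $x \to \infty$, where $\varphi$ is the standard normal density. The strategy is to take logarithms of the defining relation and show that the dominant term on the right-hand side is $-x^2/2$, from which $x^2 \sim -2\log y$ follows, and hence $x \sim \sqrt{-2\log y}$ since both sides are positive and tend to infinity.

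First I would recall (or quickly derive, e.g.\ via l'Hôpital or the standard integration-by-parts bound $\frac{x}{1+x^2}\varphi(x) \le 1-\Phi(x) \le \frac{1}{x}\varphi(x)$) the Mills-ratio asymptotic $1-\Phi(x) \sim \varphi(x)/x$ as $x\to\infty$. Then, writing $y = 1-\Phi(x)$, I would take logarithms:
\begin{equation*}
    \log y = \log\bigl(1-\Phi(x)\bigr) = -\frac{x^2}{2} - \log x - \frac{1}{2}\log(2\pi) + o(1) \quad \text{as } x\to\infty \enspace .
\end{equation*}
Since $x\to\infty$, the term $-x^2/2$ dominates $-\log x$ and the constants, so $\log y = -\frac{x^2}{2}\bigl(1 + o(1)\bigr)$, i.e.\ $-2\log y = x^2\bigl(1+o(1)\bigr)$. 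Taking square roots (both quantities being eventually positive) yields $x \sim \sqrt{-2\log y}$, which is exactly the claim after substituting back $x = \Phi^{-1}(1-y)$.

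The only mildly delicate point is justifying that the correction terms are genuinely lower order: one must confirm $\log x = o(x^2)$, which is immediate, but also that the $o(1)$ from the Mills-ratio estimate does not interfere — this is fine because it is additive and bounded. A secondary bookkeeping step is checking that $x = \Phi^{-1}(1-y)\to\infty$ as $y\to 0^+$, which follows from continuity and strict monotonicity of $\Phi$. I do not anticipate a real obstacle here; the main care is simply in organizing the logarithmic expansion so that the leading-order extraction is rigorous rather than merely heuristic, and in stating clearly that $f\sim g$ means $f/g\to 1$, so that dividing the displayed relation for $x^2$ by $-2\log y$ and taking limits gives the result.
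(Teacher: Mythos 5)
Your proposal is correct and follows essentially the same route as the paper: invoke the Gaussian tail asymptotic $1-\Phi(x)\sim\varphi(x)/x$, take logarithms, observe that $-x^2/2$ dominates the $-\log x$ and constant corrections, and invert. Your write-up is actually a bit more careful than the paper's about the inversion step (explicitly noting $x=\Phi^{-1}(1-y)\to\infty$, that both sides are eventually positive, and that $\sim$ passes through the square root), whereas the paper states the inverted relation with an equals sign and leaves that justification implicit.
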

\begin{proof}
    First, we note that 
    \begin{equation*}
        \Phi(x) = \frac{1}{2} + \frac{1}{2} \operatorname{erf}\biggl( \frac{x}{\sqrt{2}} \biggr) \sim 1 - \frac{1}{x\sqrt{2}} e^{-x^2/2}\enspace ,
    \end{equation*}
    which is a well-known asymptotic \citep{book_formulas}. Here, $\operatorname{erf}$ denotes the error-function. Rearranging terms gives
    \begin{equation*}
        \log \bigl(1-\Phi(x)\bigl) \sim -\log\Bigl(x\sqrt{2\pi}\Bigr) - \frac{x^2}{2} \sim - \frac{x^2}{2} \quad \text{ as } x\rightarrow\infty \enspace . 
    \end{equation*}
    Finally, we can invert the above asymptotic equation to obtain
    \begin{equation*}
        \Phi^{-1}(y) = \sqrt{-2\log(1-y)} \quad \text{ for }y \rightarrow 1
    \end{equation*}
    or equivalently
    \begin{equation*}
        \Phi^{-1}(1-y) = \sqrt{-2\log(y)} \quad \text{ for }y\rightarrow 0 \enspace .
    \end{equation*}
\end{proof}
\subsection{Proof of the Main Result}\label{sec:proofs}
The proof of Proposition~\ref{prop:allheavybad} relies on lower-bounding the moment-generating function of each marginal $\rvx_j$. In order to derive such a bound of a multivariate integral, we rewrite the joint distributions $q_{\leq j}$ using their copula densities: 
\begin{equation*}
    q_{\leq j}(z_{\leq j}) = c_j\bigl(F_1(z_1), \dots , F_j(z_j)\bigr) \prod_{i< j} q_i(z_i) 
\end{equation*}
for any $j\in\{1,\dots, D\}$ and for corresponding copula density $c_j$. Our proof relies on the following technical condition on the decay rate of the copula densities. 
\begin{assumption}[Bounding the Marginal Decay Rate of the Copula Densities]\label{assu:copuladensity}
        For all $j\in\{1,\dots, D\}$ and $\lambda >0$ there exists a compact set $\sS\subset \sR^{j-1}$ with positive (Lebesgue-)mass, a constant $z_j^\ast>0$, a scaling constant $s>0$, and a function $f(z_{<j})<\lambda \sigma(z_{<j})$ for $z_{<j}\in\sS$ such that 
        \begin{equation}\label{eq:assu}
            c_j\bigl(F_1(z_1),\dots ,F_j(z_j)\bigr) \geq se^{-f(z_{<j})z_j} \quad \text{ for }z_j>z_j^\ast \text{ and } z_{<j}\in\sS \enspace ,
        \end{equation}
        where $c_j$ is the copula density of $q_{\leq j}$
\end{assumption}
    This assumption sets a bound on the decay rate of the copula density with respect to $z_j$. We clarify this assumption in Section~\ref{sec:notesonassumption} with additional examples.
    
    Now, we set all preliminaries to prove Proposition~\ref{prop:allheavybad}.
\begin{proof}[Proof of Proposition~\ref{prop:allheavybad}]
We start by considering the case $j=1$. In this case it is $\rvx_1=\mu + \sigma \rvz_1$ and therefore \begin{align*}
        m_{\rvx_1}(\lambda) &= \int_{\sR} e^{\lambda x_1} p_1(x_1) dx_1 \\
        &= \int_{\sR} e^{\lambda (\mu_1 + \sigma_1 z_1 )} q_1(z_1) dz_1 \quad \text{(Lemma \ref{lemma:cov})} \\ 
        &= e^{\lambda \mu_1} \int_{\sR} e^{\lambda \sigma_1 z_1} q_1(z_1) dz_1 \enspace .
    \end{align*}
    Defining $\lambda^\prime:= \lambda \sigma_1>0 $, we can see that the last integral is unbounded due to the heavy-tailedness of $\rvz_1$, see Definition~\ref{defi:heavytails}. Therefore, $m_{\rvx_1}(\lambda)=\infty$ for all $\lambda>0$, which proves the heavy-tailedness of $\rvx_1$. \\
    Next, we consider the case $j>1$. Again, we examine the moment-generating function of $\rvx_j$. Define the $j$th canonical basis vector $v_j:=(0, \dots ,0,1,0,\dots, 0)^\top$. Then,\footnote{Note that for the sake of clarity, we leave out the index $j$ in $\mu_j$ and $\sigma_j$.}
    \begin{align}
        m_{\rvx_j}(\lambda) = m_{ v_j^\top \rvx} &=\int_{\sR^D} e^{\lambda v_j^\top x} p(x) dx  \quad \text{(LOTUS)}\nonumber\\
        &= \int_{\sR^D} e^{\lambda T_j(z_j, z_{<j})} q(z) dz \quad \text{(Lemma \ref{lemma:cov})} \nonumber \\
        &= \int_{\sR^j} e^{\lambda \mu(z_{<j}) + \lambda \sigma(z_{<j})z_j} q_{\leq j}(z_{\leq j}) dz_{\leq j} \nonumber \\
        &= \int_{\sR^{j-1}} e^{\lambda \mu(z_{<j})} \int_{\sR} e^{\lambda \sigma(z_{<j})z_j} q_{\leq j}(z_{\leq j}) dz_j dz_{< j} \enspace . \label{eq:proof-momentgen}
    \end{align}
    Using Sklar's Theorem (Theorem~\ref{sklarstheorem}), we can write any joint PDF as the product of marginals and a copula density $c_j$ such that 
    \begin{equation}
        \label{eq:copulaform}
        q_{\leq j}(z_{\leq j}) = c_j\bigl(F_1(z_1), \dots , F_j(z_j)\bigr) \prod_{i< j} q_i(z_i) \enspace .
    \end{equation}
    
    We plug~\eqref{eq:copulaform} into~\eqref{eq:proof-momentgen} to obtain
    \begin{align}
        m_{\rvx_j}(\lambda) &= \int_{\sR^{j-1}} e^{\lambda \mu(z_{<j})} q_{<j}(z_{<j}) \int_{\sR} e^{\lambda \sigma(z_{<j})z_j} c_j\bigl(F_1(z_1), \dots , F_j(z_j)\bigr) q_j (z_j) dz_j dz_{< j} \nonumber \\
        &\geq \int_{\sS} e^{\lambda \mu(z_{<j})} q_{<j}(z_{<j}) \int_{z_j > z_j^\ast} e^{\lambda \sigma(z_{<j})z_j} c_j\bigl(F_1(z_1), \dots , F_j(z_j)\bigr) q_j (z_j) dz_j dz_{< j} \enspace , \label{eq:proof_finished}
    \end{align}
    since all quantities within the integral are positive. Using Assumption~\ref{assu:copuladensity}, we can bound the inner integral of the above equation, which we denote by $A(z_{<j})$, and get 
    \begin{align*}
        A(z_{<j}) &\geq  s\int_{z_j > z_j^\ast} e^{(\lambda \sigma(z_{<j}) - f(z_{<j}))z_j} q_j (z_j) dz_j \\
        &= s\int_{z_j > z_j^\ast} e^{\lambda^\prime z_j} q_j (z_j) dz_j \quad (\text{define }\lambda^\prime := \lambda - \sigma(z_{<j}) - f(z_{<j}) \;) \\  &= \infty \quad \text{ for all }z_{<j}\in \sS \enspace ,
    \end{align*}
    due to the heavy-tailedness of $\rvz_j$ and Lemma~\ref{prop:technicaltails}. Since $\sS$ is compact, $\mu$ and $q$ are both continuous, and $q$ is positive, we deduce that $\exp(\lambda \mu(z_{<j})) q_{<j}(z_{<j})$ is lower-bounded (by a constant larger than $0$) in $\sS$. 
    Therefore, employing~\eqref{eq:proof_finished} and using that $\sS$ has positive mass, we can lower-bound the moment-generating function by $\infty$, which proves the heavy-tailedness of $\rvx_j$. In summary, $\rvx$ is $j$-heavy-tailed for all $j\in\{1,\dots ,D\}$.
\end{proof}

\subsection{Notes on Assumption~\ref{assu:copuladensity}}\label{sec:notesonassumption}
Assumption~1 might look troublesome at first sight,
but we will illustrate in this section that the condition is indeed very reasonable.
We will show how to verify it in simple examples,
and we will introduce a simpler, more intuitive sufficient condition for it.

First of all, let us present a restricted but more intuitive version of Assumption~\ref{assu:copuladensity}.
\begin{assumption}[Simplification of Assumption~\ref{assu:copuladensity}]\label{assu:simple}
    For all $j\in\{1,\dots, D\}$ and $\lambda>0$ it holds for $\sS:=[a,b]^{j-1}$ that there exist constants $z_j^\ast$ and $s>0$ such that 
        \begin{equation}\label{eq:assu_simp}
            c\bigl(F_1(z_1),\dots ,F_j(z_j)\bigr) \geq se^{-(\lambda_\sigma - \varepsilon) z_j} \quad \text{ for }z_j>z_j^\ast \text{ and } z_{<j}\in\sS \enspace ,
        \end{equation}
    where $c_j$ is the copula density of $q_{\leq j}$, $\lambda_\sigma$ is a lower bound of $\lambda \sigma(z_{<j})$, $\varepsilon>0$ is small such that $\lambda_\sigma - \varepsilon >0 $.
\end{assumption}
Let us summarize the simplifications that we make in Assumption~\ref{assu:simple}. First of all, we restricted $\sS$ to be a closed cube $[a,b]^{j-1}$, which is obviously a specific instant of a compact set with positive mass. Further, we assumed $\sigma$ to be continuous, and thus, $\lambda \sigma(z_{<j})$ must be lower-bounded in $\sS$. This allows us to replace the function $f$ by the constant $\lambda_\sigma - \varepsilon$ for arbitrary small $\varepsilon>0$. 


After giving this simplified sufficient condition, we provide some intuition by presenting some examples where Assumption~\ref{assu:simple} holds true.
\begin{example}[Independent Variables]\label{exa:inde}
    Consider a random variable $\rvz$ with independent components, i.e. $q(\vz)=\prod_{j=1}^D q_j(\vz_j)$. Then, the associated copula is the independence copula (Figure~\ref{fig:copulas}), which is a uniform random distribution on $[0,1]^D$. Therefore it is $c\bigl(F_1(\vz_1), \dots , F_D(\vz_D)\bigr)=1$ for all $\vz\in \sR^D$ and Assumption~\ref{assu:simple} follows immediately since $s\exp(-(\lambda_\sigma - \varepsilon)z_j)\rightarrow 0$ for $z_j\rightarrow \infty$. 
\end{example}
\begin{example}[Bounded Copula Density]\label{exa:bounded}
    Consider a lower-bounded copula density, i.e. there exists a lower bound $a>0$ such that 
    \begin{equation*}
        c(\vu_1, \dots , \vu_D) \geq a \quad \text{ for all } u\in[0,1]^D \enspace .
    \end{equation*}
    Again, the validity of Assumption~\ref{assu:simple} in this setting is clear.
\end{example}
\begin{figure}[h]
\begin{center}
\begin{minipage}{0.4\textwidth}
\centering
\begin{center}
\includegraphics[width=\columnwidth]{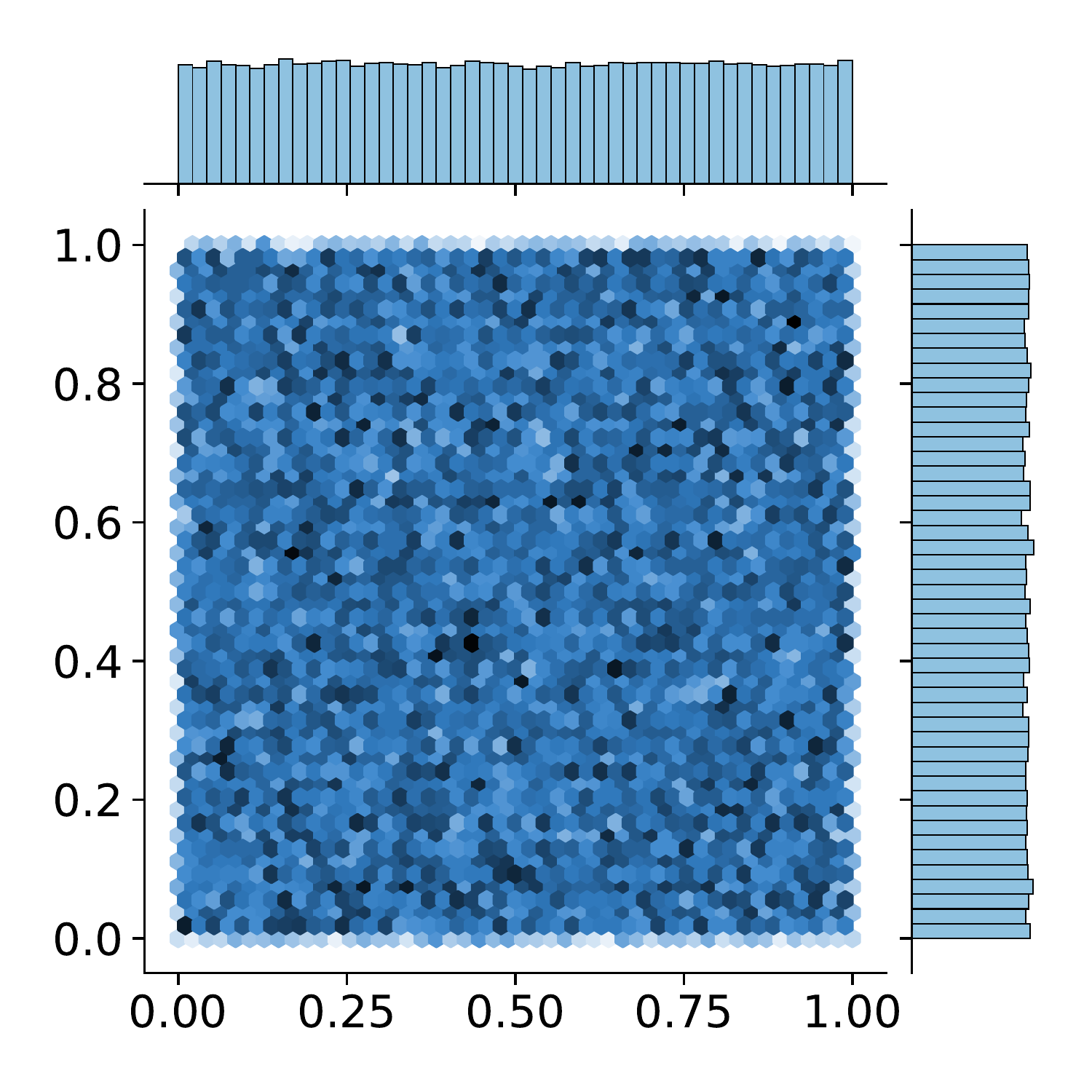}
\end{center}
\subcaption{Independence copula: $c(\vz_1, \vz_2)=1$. \label{fig:independence_copula}}
\end{minipage}
\begin{minipage}{0.4\textwidth}
\centering
\begin{center}
\includegraphics[width=\linewidth]{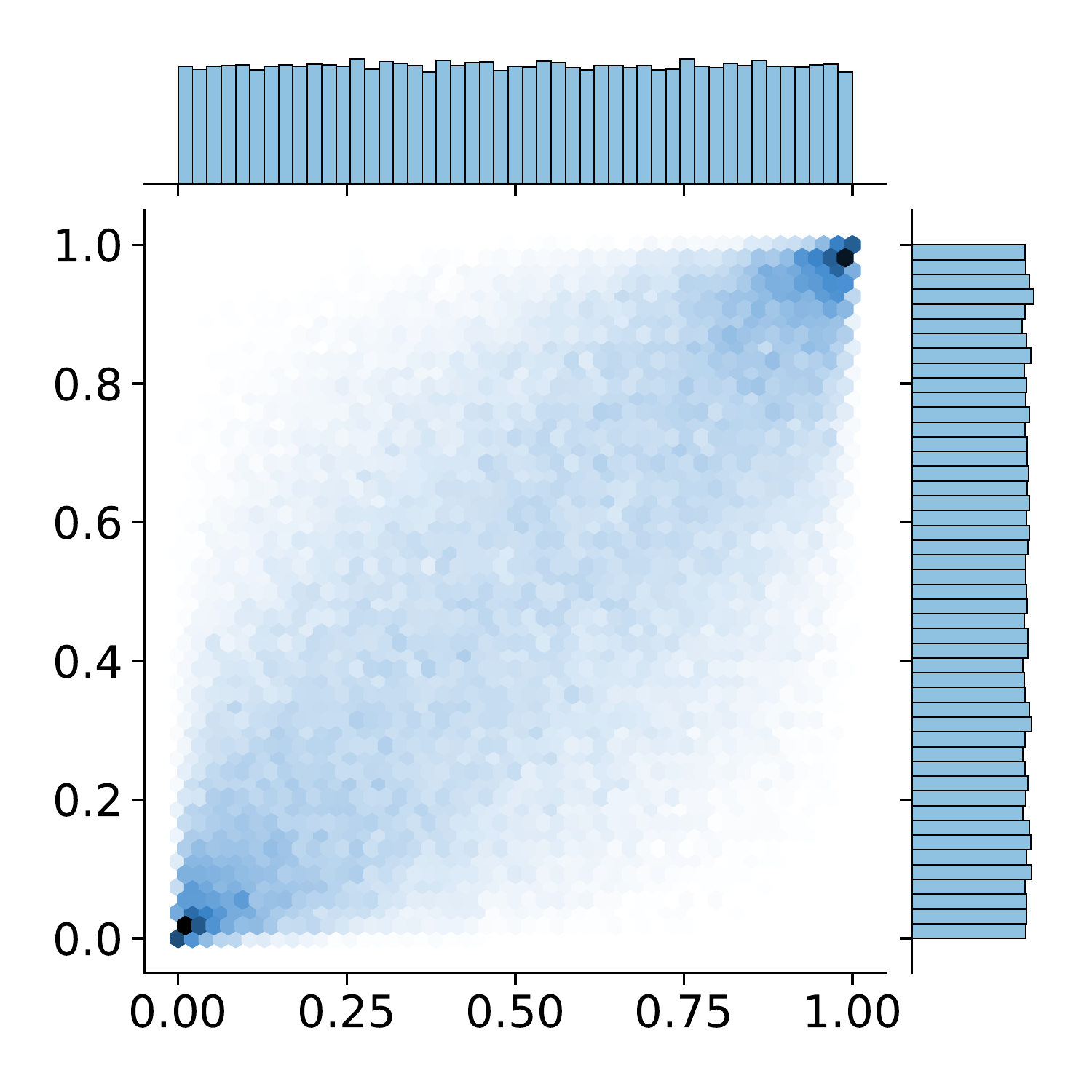}
\end{center}
\subcaption{Gaussian copula with correlation $\rho_{12}=0.7\enspace$.\label{fig:gaussiancopula}}
\end{minipage}
\caption{Two 2-dimensional Copula densities.\label{fig:copulas}}
\end{center}
\end{figure}
Furthermore, this assumption is obviously not limited to bounded copula densities but also holds for copula densities that converge to $0$ but whose decay rate in $z_j$ is lower-bounded by \eqref{eq:assu_simp}. To visualize the intuition, consider the 2-dimensional copula density of a Gaussian copula in Figure~\ref{fig:copulas}. 
Imagine fixing $\sS$ such that $F_1(\vz_1)\in[0.5, 0.75]$, which is compact for continuous $F_1$. Then \eqref{eq:assu_simp} bounds the decay rate within the ``tube'' $[0.5, 0.75]$ if we consider $\vz_2\rightarrow\infty$, i.e. if $F_2(\vz_2)\rightarrow 1$. Next, we show how we can formally prove the assumption for Gaussian copulae.
Recall, the Gaussian copula with correlation matrix $R\in\sR^{D \times D}$ has density function 
    \begin{equation}
        c(\vu) = \frac{1}{\sqrt{\det R}} \exp \Biggl( 
        -\frac{1}{2} \bigl(\Phi^{-1}(\vu_1), \dots \Phi^{-1}(\vu_D)\bigr) \bigl(R^{-1} - I\bigr) \bigl(\Phi^{-1}(\vu_1), \dots \Phi^{-1}(\vu_D)\bigr)^\top 
        \Biggr) \enspace , \label{eq:gaussiancopula}
    \end{equation}
    where $I\in\sR^{D \times D}$ is the identity matrix, $\Phi^{-1}$ is the inverse CDF of the univariate standard Gaussian distribution, and $u\in[0,1]^D$. In the following, we consider Assumption~\ref{assu:simple} for $j=D$. 
    
    Note that $\sS$ is assumed to be a compact set, therefore $F_j(\vz_j)$ are all upper and lower-bounded by some value for $\vz_{<j}\in \sS$. This makes all polynomials of them also bounded. Hence, we can find constants $a^\prime, b^\prime, c^\prime$ such that we can lower-bound the term within the exponential in~\eqref{eq:gaussiancopula} by
    \begin{equation*}
          -\frac{1}{2} \bigl( a^\prime \Phi^{-1}\bigl(F_D(\vz_D)\bigr)^2 + b^\prime \Phi^{-1} \bigl( F_D(\vz_D)\bigr) + c^\prime \enspace. 
    \end{equation*}
    Plugging the above into~\eqref{eq:gaussiancopula} gives 
    \begin{align}
        c\bigl(F_1(\vz_1), \dots , F_D(\vz_D)\bigr) &\geq 
        \frac{1}{\sqrt{\det R}} \exp \Biggl(
            -\frac{1}{2} \bigl( a^\prime \Phi^{-1}\bigl(F_D(\vz_D)\bigr)^2 + b^\prime \Phi^{-1} \bigl( F_D(\vz_D)\bigr) + c^\prime
        \Biggr) \nonumber \\
        &\propto \exp \biggl( 
            -a \Phi^{-1} \bigl(F_D(\vz_D)\bigr)^2 + b \Phi^{-1}\bigl( F_D(\vz_D) \bigr) 
        \biggr) \quad \text{(for some }a, b) \nonumber \\
        &\geq \exp \biggl( 
            -\vert a\vert \Phi^{-1} \bigl(F_D(\vz_D)\bigr)^2 + \vert b\vert \Phi^{-1}\bigl( F_D(\vz_D) \bigr) 
        \biggr) \nonumber \\
        &\geq \exp \biggl( 
            - \vert a \vert \Phi^{-1} \bigl(F_D(\vz_D)\bigr)^2
        \biggr) \enspace ,\label{eq:gauscopula_lower}
    \end{align}
    where the last line applies if $\Phi^{-1}\bigl(F_D(\vz_D)\bigr)\geq 0$, which is satisfied if $z^\ast$ is large enough\footnote{for instance  if $z^\ast$ is larger than the median of $\rvz_D$}. 
    
    Next, we use the asymptotic relation from Lemma~\ref{lemma:asymp}
    \begin{equation*}
        \Phi^{-1}\bigl(F_D(\vz_D) \bigr) \sim \sqrt{-2 \log\bigl( 1-F_D(\vz_D) \bigr) } \enspace .  
    \end{equation*} 
    Hence, for each $\varepsilon>0$ there exists a $z^\ast$ large enough such that 
    \begin{equation*}
        \biggl\vert \frac{\Phi^{-1}\bigl(F_D(z_D)\bigr)}{\sqrt{-2\log\bigl( 1- F_D(z_D)\bigr) }} - 1 \biggr\vert < \varepsilon \enspace ,
    \end{equation*}
    which can be rearranged to 
    \begin{equation*}
        \Phi^{-1}\bigl(F_D(z_D)\bigr) < \sqrt{-2\log\bigl( 1- F_D(z_D)\bigr) } (1+\varepsilon) \enspace .
    \end{equation*}
    Plugging the above into~\eqref{eq:gauscopula_lower}, we obtain 
    \begin{align}
        c\bigl(F_1(\vz_1), \dots , F_D(\vz_D)\bigr) &\geq  \exp \biggl( 
            2\vert a \vert (1+\varepsilon)^2 \log\bigl( 1-F_D(\vz_D)\bigr) 
        \biggr) \nonumber \\
        &= \bigl( 1- F_D(\vz_D)\bigr)^{2\tilde{a}} \enspace , \label{eq:1-fd}
    \end{align}
    where we define $\tilde{a}:= 2\vert a \vert (1+\varepsilon)^2$.
    Hence, we are left to lower-bound~\eqref{eq:1-fd}, which we can do for a range of heavy-tailed marginal distributions such as:
    \begin{enumerate}
        \item \textbf{Pareto distribution:} The Pareto distribution with shape parameter $\alpha$ has CDF 
        \begin{equation*}
            F(\vz)= 1 - \frac{1}{\vz}^{\alpha} \enspace . 
        \end{equation*}
        Therefore, 
        \begin{align*}
            \bigl( 1- F_D(\vz_D)\bigr)^{2\tilde{a}} &= \frac{1}{\vz_D}^{2\vert a \vert \alpha} \\
            &= \exp \bigl(-2\tilde{a} \alpha \log( \vz_D) \bigr) \\
            &\geq \exp \bigl( -2 \tilde{a}\alpha \vz_D \bigr)
        \end{align*}
        for $\vz_D\geq e$. 
        \item \textbf{Scale invariant distributions:} Following the same argument as above, each distribution with CDF 
        \begin{equation}\label{eq:scale-invariant}
            F(\vz) = 1 - b \vz^{-\alpha} \quad \text{for } \vz>\vz^\ast 
        \end{equation}
        for constants $b, \alpha, \vz^\ast >0$ satisfies the bound \begin{equation*}
            \bigl( 1- F_D(\vz_D)\bigr)^{2\tilde{a}} \geq b \exp\bigl( -2\tilde{a} \alpha \vz_D \bigr) \enspace .
        \end{equation*}
        Each distribution with CDF as in~\eqref{eq:scale-invariant} is a scale-invariant distribution (see e.g.~Theorem~2.1 in \citet{heavytails_book}).
        \item \textbf{Exponentially decaying distributions:} Every distribution that satisfies the bound 
        \begin{equation*}
            F(\vz) \leq 1 - \exp(\vz)^{-\alpha} 
        \end{equation*}
        for $\vz>\vz^\ast$ and $\alpha > 0$. In this case, we can again show that Assumption~\ref{assu:copuladensity} is valid: 
        \begin{equation*}
            \bigl( 1- F_D(\vz_D)\bigr)^{2\tilde{a}} \geq  \exp(-2\tilde{a} \alpha \vz_D) \enspace .
        \end{equation*}
    \end{enumerate}

Lastly, we want to emphasize that Corollary~\ref{corro} is derived by an iterative application of Theorem~\ref{maintheo}. Therefore, Assumption~\ref{assu:copuladensity} must hold for all ``flow steps'', i.e. if $T=T^{(L)} \circ \cdots \circ T^{(1)}$, we need to ensure validity of Assumption~\ref{assu:copuladensity} for $\rvz^{(0)}:=\rvz,\; \rvz^{(1)}:=T^{(1)}(\rvz), \; \rvz^{(2)}:=T^{(2)}\circ T^{(1)}(\rvz), \; \dots , \; \rvz^{(L-1)}:=T^{(L-1)}\circ \cdots \circ T^{(1)}(\rvz)$. In Example~\ref{exa:inde}, we show that this assumption holds true for $\rvz^{(0)}$ since we define our base distribution under the mean-field assumption. Furthermore, we conjecture that if we apply a Lipschitz-continuous diffeomorphism on a random variable with bounded copula density, then the transformed random variable must also have a bounded copula density. Hence, Assumption~\ref{assu:copuladensity} would be valid for all ``flow steps'' (see Example~\ref{exa:bounded}) when starting with a mean-field base distribution $q(\vz)=\prod_{j=1}^D q_j(\vz_j)$. However, this conjecture needs to be studied in further research.

\subsection{Proof of Theorem~\ref{theo:gTAFapproxmTAF}}\label{sec:proofgTAFapproxmTAF}
\begin{proof}
    We proceed in two steps: First, we show that $z_{\nu} \xrightarrow[]{\nu \rightarrow \infty}_D z$ by applying the \textit{Cram\'er-Wold Theorem}. Second, we use the \textit{Continuous-Mapping Theorem} to finish the proof. All used theorems can be found, for instance, in \citet{van2000asymptotic}.
    \paragraph{Step 1.} By the Cram\'er-Wold Theorem, it is sufficient to show that for every $v \in \mathbb{R}^D$ it is 
    \begin{equation*}
        v^\top z_{\nu}= \sum_{j=1}^D v_j z_{\nu, j} \xrightarrow[]{\nu \rightarrow \infty}_D \sum_{j=1}^D v_j z_{ j} = v^\top z \enspace. 
    \end{equation*}
    We prove this convergence via the convergence of the characteristic function. Since $z_{\nu, j}$ are independent for all $j\in\{1,\dots ,D\}$, it is 
    \begin{align*}
        \varphi_{v^\top z_{\nu}}(t) &= \varphi_{z_{\nu, 1}}(v_1 t)  \cdots \varphi_{z_{\nu, d_l }}(v_{d_l} t) \varphi_{z_{d_l +1}}(v_{d_{l} +1}t) \cdots  \varphi_{z_{D}}(v_{D}t) \\ 
        &\rightarrow \varphi_{z_1}(v_1 t) \cdots \varphi_{z_{d_l} }(v_{d_l} t) \varphi_{z_{d_l +1}}(v_{d_l +1}t) \cdots  \varphi_{z_{D}}(v_{D}t) \\ 
        &= \varphi_{v^\top z}(t) \quad \forall t\in \mathbb{R} \enspace ,
    \end{align*}
    where the convergence follows from the convergence $t_\nu \xrightarrow[]{\nu\rightarrow \infty}_D \mathcal{N}(0,1)$ and \textit{Levy's Continuity Theorem}. The last equality follows since the marginals are independent. 
    Using Levy's Continuity Theorem once more, we conclude that 
    \begin{equation*}
        v^\top z_{\nu} \xrightarrow[]{\nu \rightarrow \infty}_D v^\top z \enspace . 
    \end{equation*}
    Hence, we can apply Cram\'er-Wold to obtain the convergence $z_{\nu} \xrightarrow[]{\nu \rightarrow \infty}_D z$. 
    \paragraph{Step 2.} We apply the Continuous Mapping Theorem to obtain the convergence result.  
\end{proof}

\section{Data-Driven LU-Layers}
One of the major reason for recent improvements of NFs is due to the generalization of the permutation layers to more general invertible linear layers \citep{oliva2018transformation}. One type of these general invertible linear layers are based on the LU-decomposition, which were essential building blocks for many SOTA NFs, such as in \citet{kingma2018glow, durkan2019neural}. 
Therefore, it is important to address whether we can generalize our theory from Section~\ref{sec:method} of the main paper to the more expressive LU-Layers. In this section we provide supplementary materials to Section~\ref{sec:mtaf_nsf} to show that we are indeed able to generalize to LU-Layers, while retaining their computational benefits. First, in Section~\ref{sec:theory_LU} we derive sufficient conditions under which linear layers preserve the marginal tail behaviour. In Section~\ref{sec_implementationlu} we present an efficient way to implement a tail-preserving LU-type invertible layer. 

\subsection{Marginal Tail Behavior under Linear Transformations} \label{sec:theory_LU}
In this section, we assume without loss of generality that the components of $\rvz$ are ordered such that $\rvz$ is $j$-light-tailed for $j\in\{1,\dots ,d_l \}$ and $j$-heavy-tailed for $j \in \{d_l +1, \dots, D\}$. Our goal is to find conditions for a matrix $W$ under which $\rvz$ and $W\rvz$ have equal tail-behavior. 

\begin{theorem}\label{theo_LU}
Let $\rvz$ be a random variable that is $j$-light-tailed for $j\in\{1,\dots ,d_l \}$ and $j$-heavy-tailed for $j \in \{d_l +1, \dots, D\}$. Further, consider a diagonal invertible block-matrix 
\begin{equation} \label{eq_blockmatrix}
    W = \begin{pmatrix}
    A & 0 \\ 
    B & C   \end{pmatrix} \enspace ,
\end{equation}
with $A\in \mathbb{R}^{d_l \times d_l}, \, B \in \mathbb{R}^{(D-d_l ) \times d_l}, \, C\in\mathbb{R}^{(D-d_l ) \times (D-d_l ) }$ and $0$ is a zero matrix of size $d_l \times (D-d_l ) $. 
Then, it follows that $W\rvz$ and $\rvz$ have equal tail behavior. 
\end{theorem}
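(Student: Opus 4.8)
The plan is to exploit the block-lower-triangular structure of $W$. Write $\rvz=(\rvz^{(1)},\rvz^{(2)})$ with $\rvz^{(1)}\in\sR^{d_l}$ light-tailed coordinatewise and $\rvz^{(2)}\in\sR^{D-d_l}$ heavy-tailed coordinatewise, and partition $\rvy:=W\rvz$ the same way, so that $\rvy^{(1)}=A\rvz^{(1)}$ and $\rvy^{(2)}=B\rvz^{(1)}+C\rvz^{(2)}$. Note first that invertibility of $W$ together with \eqref{eq_det_tailLU} forces $\det C\neq 0$, so in particular every row of $C$ is nonzero; this is precisely what will prevent a heavy coordinate from being mapped to a light one.

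For a light coordinate $i\le d_l$, $\rvy_i=\sum_{k\le d_l}A_{ik}\rvz_k$ is a finite linear combination of light-tailed variables. Light-tailedness is closed under linear combinations: if $\E[e^{\lambda_k\rvz_k}]<\infty$ for some $\lambda_k>0$ (and, with the symmetric convention adopted in the paper, also for $-\rvz_k$), then by the generalized Hölder inequality $\E\bigl[\exp(\lambda\sum_{k\le d_l}A_{ik}\rvz_k)\bigr]\le\prod_{k\le d_l}\E\bigl[\exp(\lambda d_l A_{ik}\rvz_k)\bigr]^{1/d_l}$ is finite for $\lambda$ small enough. Hence $\rvy_i$ is light-tailed, matching $\rvz$.

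For a heavy coordinate $i>d_l$, I would show directly that $m_{\rvy_i}(\lambda)=\E[\exp(\lambda\rvy_i)]$ is infinite for every $\lambda>0$, following the argument behind Proposition~\ref{prop:allheavybad}. Choose $k_0>d_l$ with $C_{ik_0}\neq 0$ (possible since row $i$ of $C$ is nonzero) and rewrite, via the change-of-variables/substitution step, $m_{\rvy_i}(\lambda)=\int_{\sR^D}e^{\lambda\sum_k W_{ik}z_k}\,q(z)\,dz$. Restrict the integral to a compact set of positive Lebesgue mass for the coordinates $z_k$, $k\neq k_0$, on which all the exponential factors $e^{\lambda W_{ik}z_k}$ ($k\neq k_0$) and the density $q$ are bounded below by positive constants, and on which the copula density of $z_{k_0}$ given the rest obeys the lower bound of Assumption~\ref{assu:copuladensity}; then the inner integral over $\{z_{k_0}>z_{k_0}^*\}$ of $e^{\lambda C_{ik_0}z_{k_0}}$ against (a lower bound of) the marginal density of $z_{k_0}$ diverges by Lemma~\ref{prop:technicaltails} and the heavy-tailedness of $\rvz_{k_0}$. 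Therefore $m_{\rvy_i}(\lambda)=\infty$ and $\rvy_i$ is heavy-tailed. Combining the two cases, $\rvy$ is $j$-light-tailed precisely for $j\le d_l$ and $j$-heavy-tailed precisely for $j>d_l$, i.e.\ $\rvy$ and $\rvz$ have equal tail behavior.

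The main obstacle is the heavy-tailed direction: a linear combination of heavy-tailed but statistically dependent coordinates can cancel to a light-tailed quantity (for instance $\rvz_{k_0}+(-\rvz_{k_0})$ in the symmetric case), so one genuinely needs a non-degeneracy condition — here the copula-density lower bound of Assumption~\ref{assu:copuladensity}, which in particular holds for the mean-field base distribution of \name{} — to rule out such cancellation in $\sum_{k>d_l}C_{ik}\rvz_k$. The light-tailed direction, by contrast, is routine modulo the bookkeeping of signs in $A$ and $B$, handled by reading ``light-tailed'' as controlling both tails.
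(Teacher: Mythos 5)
Your proposal is correct in outline but takes a genuinely different route from the paper, most visibly for the heavy-tailed rows. The paper's proof (Section~\ref{sec:theory_LU}) never touches Assumption~\ref{assu:copuladensity}: it proceeds by ``algebraic induction'', showing (i) that scalar multiples preserve the tail classification and (ii) that a pairwise sum is light-tailed iff both summands are, the light direction via splitting the integration domain together with Lemma~\ref{helper_lemma}, and the heavy direction via a direct lower bound on the joint moment-generating-function integral with no condition whatsoever on the dependence structure; the block form of $W$ then lets it iterate this over each row. You instead dispatch the light rows in one stroke with generalized H\"older (cleaner than the paper's pairwise splitting, and it absorbs arbitrary dependence automatically) and handle the heavy rows by re-running the copula machinery behind Proposition~\ref{prop:allheavybad}, i.e.\ Assumption~\ref{assu:copuladensity} plus Lemma~\ref{prop:technicaltails}. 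The cost is that you prove a conditional version: Theorem~\ref{theo_LU} as stated carries no copula hypothesis, so strictly speaking you have added one. The gain is that your concern about cancellation among dependent heavy-tailed coordinates is precisely the delicate point in the paper's unconditional closure claim, and your extra hypothesis is what rules it out; it is also harmless in the intended application, where the base distribution is mean-field.

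Two details to tighten. First, Assumption~\ref{assu:copuladensity} bounds the copula density of the first $j$ coordinates as a function of $z_j$ with the lower-indexed coordinates confined to a compact set, so choose $k_0$ as the largest index with a nonzero entry in row $i$ of $(B\;\,C)$ (invertibility of $C$ guarantees $k_0>d_l$) and marginalize out the higher coordinates, whose coefficients vanish; the assumption then applies literally with $\sigma(\cdot)$ replaced by the constant coefficient $W_{ik_0}$. Second, the sign convention you invoke for $A$ and $B$ is equally needed for $C$: if $W_{ik_0}<0$, the inner integral over $\{z_{k_0}>z_{k_0}^\ast\}$ does not diverge unless the corresponding left tail is heavy, so the two-sided reading must also cover the heavy coordinates (the paper's step ``$a\rvz_j$ is heavy-tailed iff $\rvz_j$ is'' is sign-blind in exactly the same way).
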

Before heading into the proof, we introduce the following useful lemma: 
\begin{lemma}\label{helper_lemma}
    Let $0<\lambda^{\ast}$ be a scalar such that $\mathbb{E}_{\rvz}[\exp(\lambda^\ast \rvz)] < \infty$ for some univariate random variable $\rvz$. Then, it holds that $\mathbb{E}_{\rvz}[\exp(\lambda \rvz)] < \infty$ for all $0<\lambda\leq\lambda^\ast$. 
\end{lemma}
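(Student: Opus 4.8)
The plan is to reduce the finiteness of $\mathbb{E}_{\rvz}[e^{\lambda \rvz}]$ to that of $\mathbb{E}_{\rvz}[e^{\lambda^\ast \rvz}]$ by a direct pointwise comparison of integrands, splitting the real line into $\{z \geq 0\}$ and $\{z < 0\}$. Writing $p$ for the density of $\rvz$, I would first decompose
\begin{equation*}
\mathbb{E}_{\rvz}[e^{\lambda \rvz}] = \int_{z \geq 0} e^{\lambda z} p(z)\, dz + \int_{z < 0} e^{\lambda z} p(z)\, dz \enspace .
\end{equation*}

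On the nonnegative half-line, since $0 < \lambda \leq \lambda^\ast$ and $z \geq 0$ we have $\lambda z \leq \lambda^\ast z$, hence $e^{\lambda z} \leq e^{\lambda^\ast z}$, so the first integral is at most $\int_{z \geq 0} e^{\lambda^\ast z} p(z)\, dz \leq \mathbb{E}_{\rvz}[e^{\lambda^\ast \rvz}]$, which is finite by hypothesis (the integrand being nonnegative, restricting the domain only decreases the integral). On the negative half-line, $\lambda > 0$ and $z < 0$ give $\lambda z < 0$, hence $e^{\lambda z} < 1$, so the second integral is at most $\int_{z<0} p(z)\, dz \leq 1$. Adding the two bounds yields $\mathbb{E}_{\rvz}[e^{\lambda \rvz}] \leq \mathbb{E}_{\rvz}[e^{\lambda^\ast \rvz}] + 1 < \infty$, which is the claim.

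An equally short alternative uses Jensen's inequality: with $\theta := \lambda/\lambda^\ast \in (0,1]$, applying the concave map $t \mapsto t^\theta$ to the nonnegative random variable $e^{\lambda^\ast \rvz}$ gives $\mathbb{E}_{\rvz}[e^{\lambda \rvz}] = \mathbb{E}_{\rvz}\bigl[(e^{\lambda^\ast \rvz})^\theta\bigr] \leq \bigl(\mathbb{E}_{\rvz}[e^{\lambda^\ast \rvz}]\bigr)^\theta < \infty$. There is no genuine obstacle here; the only point that needs a moment's attention is that $e^{\lambda z}$ is \emph{not} monotone in $\lambda$ uniformly over $z \in \mathbb{R}$ (it decreases in $\lambda$ for $z < 0$), which is exactly why one splits the domain — or, equivalently, invokes concavity — rather than claiming $e^{\lambda z} \leq e^{\lambda^\ast z}$ for all $z$.
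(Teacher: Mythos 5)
Your main argument is correct and is essentially the paper's own proof: the paper likewise splits the expectation at $0$, bounds the integrand by $e^{0}=1$ on $(-\infty,0]$ and by $e^{\lambda^\ast z}$ on $(0,\infty)$, and concludes finiteness from the hypothesis. Your Jensen's-inequality alternative is also valid, but it adds nothing beyond the elementary splitting argument already used.
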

\begin{proof}
    Consider $0<\lambda \leq \lambda^\ast$. Then, we can split the expectation 
    \begin{align*}
        \E_\rvz \bigl[ e^{\lambda \rvz } \bigr] &= \int_{(-\infty, 0]} q(z) e^{\lambda z} dz + \int_{(0, \infty)} q(z) e^{\lambda z} dz \\ 
        &\leq \int_{(-\infty, 0]} q(z) e^{0} dz + \int_{(0, \infty)} q(z) e^{\lambda^\ast z} dz \enspace .
    \end{align*}
    The first integral is upper-bounded by $1$, and the second integral can be upper-bounded by the integral over the same integrand on $\mathbb{R}$, which is bounded by definition of per assumption. Hence, $\mathbb{E}_{\rvz}[\exp(\lambda \rvz)] < \infty$ for all $0< \lambda \leq \lambda^\ast$.
\end{proof}
Now we are ready to prove Theorem~\ref{theo_LU}. 
\begin{proof}
    The idea of the proof is to show that a linear combination of $k$ (non-degenerate) random variables $\rvz_1, \dots , \rvz_k$ is light-tailed if and only if all $\rvz_1, \dots ,\rvz_k$ are light-tailed. We do this via algebraic induction, i.e. we show that 
    \begin{enumerate}
        \item $a \rvz_j$ is light-tailed iff. $\rvz_j$ is light-tailed for some scalar $a \in \mathbb{R}$; 
        \item $\rvz_1 + \rvz_2$ is light-tailed iff. $\rvz_1$ and $\rvz_2$ are both light-tailed.  
    \end{enumerate}
    For an arbitrary linear combination random variables, we can iterate through 1. and 2. to prove that the linear combination is light-tailed if and only if each component is light-tailed.  
    
    1. Let us assume that $\rvz_j$ is light-tailed. Therefore it exists some $\lambda^\ast \in \mathbb{R}$ such that $\E_{\rvz_j}[ \exp(\lambda^\ast \rvz_j)] < \infty$. Further, for the moment-generating function of $a\rvz_j$ it is 
    \begin{equation*}
        \E_{a \rvz_j}\bigl[e^{\lambda \rvx}] = \int_{\mathbb{R}} p_{\rvz_j}(z_j) e^{\lambda a z_j} dx < \infty \enspace 
    \end{equation*}
    for $\lambda:= \lambda^\ast / a$, where the first equality follows from the LOTUS. The other direction of the equivalence follows analogously. \\ 
    2. Consider $\rvz_1$ and $\rvz_2$ with joint PDF $q(z_1, z_2)$. Since both random variables are light-tailed, there exist $\lambda_1$ and $\lambda_2$ such that their moment-generating function is bounded. 
    Then, it is for $\lambda:=0.5 \min \{ \lambda_1, \lambda_2\}$
    \begin{align*}
        \E_{\rvz_1 + \rvz_2} \bigl[ e^{\lambda z} \bigr] &= \int_{\mathbb{R}} \int_{\mathbb{R}} e^{\lambda (z_1 + z_2)} q(z_1, z_2)dz_1 dz_2 \quad \text{(LOTUS)} \\
        &= \int_{\mathbb{R}} \int_{-\infty}^0  e^{\lambda(z_1 + z_2)} q(z_1, z_2) dz_1 dz_2  + \int_{\mathbb{R}} \int_{0}^{ \infty}  e^{\lambda(z_1 + z_2)} q(z_1, z_2) dz_1 dz_2 \\
        &\leq \int_{\sR} e^{\lambda z_2} q(z_2) dz_2 +  
        \int_{\mathbb{R}} \Biggl( \int_{0}^{\min \{0, z_2\} }  e^{\lambda(z_1 + z_2)} q(z_1, z_2) dz_1   \\ 
        & \hspace{4cm} + 
        \int_{\max \{0, z_2\}}^{\infty }  e^{\lambda(z_1 + z_2)} q(z_1, z_2) dz_1 \Biggr) dz_2 \enspace ,
    \end{align*}
    where the last line follows by replacing $z_2$ in the first integral by $0$, which is an upper bound for $z_2$. Using the definition of $\lambda$ and Lemma~\ref{helper_lemma}, we see that the first integral is bounded by some constant $c_1$. Using the monotonicity of the integrands once more, we get 
    \begin{align*}
        \E_{\rvz_1 + \rvz_2} \bigl[ e^{\lambda z} \bigr] &\leq c_1 + \int_{\sR} \Biggl( \int_{0}^{\min\{0, z_2\}} e^{2\lambda z_2} q(z_1, z_2) dz_1 +  \int_{\max\{ 0, z_2\}}^{\infty } e^{2\lambda z_1} q(z_1, z_2) dz_1 \Biggr) dz_2 \\ 
        &\leq c_1 + \int_{\sR} e^{2\lambda z_2} q(z_2) dz_2 + \int e^{2\lambda z_1} q(z_1) dz_1 \enspace, 
    \end{align*}
    which follows by upper-bounding the integral from $0$ to $\min \{0, z_2\}$ and the integral from $\max\{0, z_2\}$ by the integrals over $\sR$, respectively. Using Lemma~\ref{helper_lemma} and the definition of $\lambda$ once more, we can find constants $c_2$ and $c_3$ that bound the remaining integrals. Hence, we conclude that $\E_{\rvz_1 + \rvz_2} < \infty$, which proves the backward direction of 2. 
    To prove the forward direction, we show that if $\rvz_1$ and $\rvz_2$ are not both light-tailed (i.e at least of them is heavy-tailed), than $\rvz_1 + \rvz_2$ is not light-tailed. Without loss of generality, we assume $\rvz_2$ to be heavy-tailed. Then, we can write 
    \begin{align}
        \E_{\rvz_1 + \rvz_2}\bigl[ e^{\lambda z} \bigr] &= \int_{\sR} \int_{\sR} e^{\lambda (z_1 + z_2)} q(z_1, z_2) dz_1 dz_2 \nonumber \\ 
        &=\int_{\sR} \Biggl( 
            \int_{-\infty}^0 e^{\lambda(z_1 + z_2)} q(z_1, z_2) dz_1 + \int_0^\infty e^{\lambda (z_1 + z_2)} q(z_1, z_2) dz_1
        \Biggr) dz_2 \nonumber \\
        & \geq \int_{\sR} \Biggl( 
            \int_{-\infty}^0 e^{\lambda(z_1 + z_2)} q(z_1, z_2) dz_1 + \int_0^\infty e^{\lambda (z_2)} q(z_1, z_2) dz_1
        \Biggr) dz_2 \enspace .  \label{linez1+z2}
    \end{align}
    Note that we can lower-bound the last inner integral by 
    \begin{equation*}
        \int_0^\infty e^{\lambda z_2} q(z_1, z_2) dz_1 = e^{\lambda z_2} q(z_2) - e^{\lambda z_2}\int_{-\infty}^0 q(z_1, z_2) dz_1 \geq - e^{\lambda z_2}\int_{-\infty}^0 q(z_1, z_2) dz_1 \enspace . 
    \end{equation*}
    Plugging this bound into \eqref{linez1+z2} gives us 
    \begin{equation*}
        \E_{\rvz_1 + \rvz_2}\bigl[ e^{\lambda z} \bigr] \geq \int_{\sR} e^{\lambda z_2} q(z_2)dz_2 = \infty \enspace \forall \lambda >0 \enspace ,
    \end{equation*}
    which is due to the heavy-tailedness of $\rvz_2$. Therefore, $\rvz_1 + \rvz_2$ must also be heavy-tailed. This proves the equivalence in 2. \\
    Finally, we note that due to the block-triangular form of $A$ the first $d_l$ components of $\rvx := A\rvz$, i.e. $\rvx_1, \dots, \rvx_{d_l}$ are linear combinations of light-tailed components $\rvz_1, \dots, \rvz_{d_l}$, which implies the light-tailedness of $\rvx_1, \dots ,\rvx_{d_l}$. The remaining $D-d_l$ components of $\rvx$ are linear combinations containing at least one heavy-tailed component $\rvz_j$ with $j \in \{d_l , \dots, D\}$ and are therefore again heavy-tailed\footnote{Note that this argument assumes that $D$ has no empty rows, which is implicitly assume due to the invertibility of $A$. Compare with Equation ref(Algosection).}. This completes the proof. 
\end{proof}

\subsection{Implementation of Data-Driven LU-Layers} \label{sec_implementationlu}
In the previous section, we have seen that in order to retain the tail behavior, the block-matrix form given in~\eqref{eq_blockmatrix} is sufficient. In this section, we give more details on an efficient parameterization leading to fast inversion and log-determinant computations. 
It is well-known that under mild conditions the inversion of block-matrices is efficiently solvable. 
\begin{lemma}[Inversion of Block-Matrices]\label{lemma_inversion}
    Consider invertible square matrices $A\in\sR^{d_l \times d_l }, \, D\in\sR^{d_h \times d_h }$ and arbitrary matrices $B \in \sR^{d_l \times d_h }, \, C\in\sR^{d_h  \times d_l }$ for some $d_l, d_h \in \mathbb{N}$. Then it holds that 
    \begin{equation*}
        \begin{pmatrix}
            A & B \\
            C & D 
        \end{pmatrix}^{-1} = 
        \begin{pmatrix}
            (A - BD^{-1}C)^{-1} & 0 \\ 
            -D^{-1}C(A-BD^{-1}C)^{-1} & D^{-1}
        \end{pmatrix}
        \begin{pmatrix}
            I & -BD^{-1} \\ 
            0 & I 
        \end{pmatrix} \enspace . 
    \end{equation*}
    As a special case, it is 
\begin{equation}\label{eq_tailLU_inv}
        \begin{pmatrix}
            A & 0 \\ 
            B & C 
        \end{pmatrix}^{-1} 
        = 
        \begin{pmatrix}
            A^{-1} & 0 \\ 
            -C^{-1}BA^{-1} & C^{-1} 
        \end{pmatrix} \enspace 
    \end{equation}
    for invertible square matrices $A\in\sR^{d_l \times d_l }, \, C\in\sR^{d_h \times d_h }$ and arbitrary matrix $B\in\sR^{d_h  \times d_l }$.
\end{lemma}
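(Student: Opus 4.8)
The plan is to derive the formula from the block $LDU$-factorization of $\left(\begin{smallmatrix} A & B \\ C & D \end{smallmatrix}\right)$ and then invert the three factors in reverse order. Throughout, write $S := A - BD^{-1}C$ for the Schur complement of $D$; it is well-defined since $D$ is invertible, and I will also need that $S$ is invertible — which, given invertibility of $D$, is exactly equivalent to invertibility of the full block matrix and is implicitly presumed by the statement (the term $(A-BD^{-1}C)^{-1}$ already appears there).

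First I would verify the factorization
\begin{equation*}
\begin{pmatrix} A & B \\ C & D \end{pmatrix} = \begin{pmatrix} I & BD^{-1} \\ 0 & I \end{pmatrix}\begin{pmatrix} S & 0 \\ 0 & D \end{pmatrix}\begin{pmatrix} I & 0 \\ D^{-1}C & I \end{pmatrix} ,
\end{equation*}
which is a one-line block multiplication: the two rightmost factors multiply to $\left(\begin{smallmatrix} S & 0 \\ C & D \end{smallmatrix}\right)$, and left-multiplying by the unipotent factor adds $BD^{-1}$ times the bottom block row to the top one, restoring $S + BD^{-1}C = A$ in the $(1,1)$ slot and $B$ in the $(1,2)$ slot. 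Each of the three factors is then invertible: the two unipotent block-triangular factors invert by negating their off-diagonal block, and the block-diagonal one inverts blockwise since $S$ and $D$ are invertible. Inverting the product in reverse order and collapsing the first two resulting factors,
\begin{equation*}
\begin{pmatrix} A & B \\ C & D \end{pmatrix}^{-1} = \begin{pmatrix} S^{-1} & 0 \\ -D^{-1}CS^{-1} & D^{-1} \end{pmatrix}\begin{pmatrix} I & -BD^{-1} \\ 0 & I \end{pmatrix},
\end{equation*}
which is precisely the asserted identity once $S$ is substituted back. The special case follows by setting $B = 0$: then $S = A$, the last factor becomes the identity, and relabelling the lower blocks $C \mapsto B$, $D \mapsto C$ to match the statement yields $\left(\begin{smallmatrix} A & 0 \\ B & C \end{smallmatrix}\right)^{-1} = \left(\begin{smallmatrix} A^{-1} & 0 \\ -C^{-1}BA^{-1} & C^{-1}\end{smallmatrix}\right)$.

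There is no genuine obstacle here — the argument is routine linear algebra — so the only points that deserve care are bookkeeping ones: keeping the block dimensions consistent (so that $BD^{-1} \in \sR^{d_l \times d_h}$, $D^{-1}C \in \sR^{d_h \times d_l}$, etc.) and noting the hidden hypothesis that the Schur complement $A - BD^{-1}C$ is invertible, which in the special case $B = 0$ is automatic since it reduces to the assumed-invertible $A$. One could instead bypass the factorization and check directly that the claimed right-hand side times $\left(\begin{smallmatrix} A & B \\ C & D \end{smallmatrix}\right)$ equals the identity block by block; I prefer the $LDU$ route because it simultaneously explains where the formula comes from and makes the role of each invertibility assumption transparent.
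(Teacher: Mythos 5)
Your argument is correct, and it is worth noting that the paper itself gives no proof of this lemma at all: it simply remarks that the result is standard and points to a linear algebra textbook. Your block $LDU$ factorization is the canonical way to derive the formula, and your verification is sound — the two rightmost factors indeed multiply to $\bigl(\begin{smallmatrix} S & 0 \\ C & D \end{smallmatrix}\bigr)$, the unipotent factor restores $A$ and $B$ in the top block row, and inverting the three factors in reverse order and collapsing the first two reproduces the stated right-hand side; the special case with $B=0$ and the relabelling is handled correctly. You also make a genuinely useful observation that the paper glosses over: the general formula requires invertibility of the Schur complement $S = A - BD^{-1}C$, which does not follow from invertibility of $A$ and $D$ alone (take $A=D=I$, $B=C=I$ with equal block sizes), but is equivalent, given $D$ invertible, to invertibility of the full block matrix — and it is automatic in the only case the paper actually uses, namely $B=0$, where $S=A$. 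So your write-up not only supplies the derivation the paper omits but also makes the hidden hypothesis explicit; the only alternative route worth mentioning is the purely mechanical one you already acknowledge, multiplying the claimed inverse against the block matrix and checking block by block, which is shorter but less illuminating about where the Schur complement comes from.
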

Lemma~\ref{lemma_inversion} is a standard result that can be found in many linear algebra text books (see e.g. \citet{block_matrix_inversion}).

Furthermore, we can compute the determinant of a diagonal block-matrix $W$ as defined in~\eqref{eq_blockmatrix} as 
    \begin{equation}
        \det (W) = \det (A) \det(C) \enspace . \label{eq_blockdeterminant}
    \end{equation}

Now, let us summarize the expensive computations in~\eqref{eq_tailLU_inv} and~\eqref{eq_blockdeterminant} that both need to be efficient in NFs. The computations involve inversions of $A$ and $C$, only forward computations of $B$, and the computation of $\det(A)$ and $\det(C)$. Hence, we propose to parameterize $A$ and $C$ using a LU-decomposition for both matrices, which leads to efficient inverse\footnote{which can be guaranteed by restricting the diagonal elements of the upper diagonal matrix in the LU-decomposition.} and log-determinant computations. We do not make any restrictions on $B$ and parameterize it by a standard unconstrained linear layer. 
We provide a PyTorch implementation of this modified \textit{tail-preserving LU-layer}, which can be accessed through our public git repository.

\section{Algorithms and computational Details}\label{sec:algosanddetails}
\label{sec:compdetails}
\subsection{Tail Estimation}\label{sec:tailest}
Many heavy-tailed distributions can be characterized by their tail index, which include the set of regularly varying distributions,\footnote{see Section~2 in \cite{heavytails_book} for further details} such as the $t$-distribution, the Pareto distribution, and many more. However, as already shown in Section~\ref{sec:heavy-tails}, the tail index does not depend on the body of the distribution, and hence, non-tail samples must typically be discarded for tail index estimation. 
Although a variety of estimators for the tail index exist, such as the Hill estimator \citep{hill1975simple}, the moment estimator \citep{moments_estimator}, and kernel-based estimators \citep{kernel_estimator}, none of them is considered to be as superior in all settings. 
A major issue of all mentioned estimators is that they are based on a threshold defining the tail, i.e. the user needs to input statements of the form ``the $k$ largest samples are considered to be tail events''. Even though there exist some strategies to find $k$, there is none working robustly in all settings.
In fact, one can construct simple counter examples for all estimators that lead to failures of tail estimation. This is due to undesired properties of the estimators, such as the lack of translation invariance of the Hills estimator (while the tail index clearly is location invariant). We refer to Section~9 in \citet{heavytails_book} for a detailed text book treatment of tail index estimation.
In summary, robust tail estimation is still considered as an unsolved problem, which forces practitioners to consider multiple estimators to make a well-founded decision. 
Furthermore, we note that the Hills estimator can only be applied for regularly varying distributions, which excludes the application of the Hills estimator to classify light-tailed distributions. In contrast, the moments and the kernel estimator can both be applied to identify heavy-tailed marginals and to assess a tail index. 

To implement the tail assessment scheme, see Step~1 of the proposed method in Section~\ref{sec:method}, we found that Algorithm~\ref{alg:tailestimation} works fine in classifying the correct tail behavior and giving a decent initialization for the tail indices. 
That is, we use the moments double-bootstrap estimator \citep{moments_bootstrap} and the kernel-type double-bootstrap estimator \citep{kernel_bootstrap} to assess heavy-tailedness of the data distribution. If both estimators predict a light-tailed distribution, we set the corresponding marginal base distribution to be standard normal distributed, otherwise we set it to a the standardized $t$-distribution, i.e.~$\rvz_j\sim t_{\hat{v}_j}$, where $\hat{v}_j$ is the Hill double-bootstrap estimator \citep{hill_bootstrap1, hill_bootstrap2}.
We reused the code by \citet{tail_estcode}, which implements all tail estimation procedures\footnote{including the hyperparameter selection \citep{hill_bootstrap1, hill_bootstrap2, moments_bootstrap, kernel_bootstrap}} from our Algorithm. Notice that we clip the tail index by $10$, i.e. the algorithm classifies marginals with a tail index larger than $10$ as light-tailed, which prevents a too restrictive set of allowed permutations, see Step~3 in Section~\ref{sec:method}. To illustrate this argument, consider the following simple example. Assume that we estimate all except of one marginal to be heavy-tailed. Then, the first component of the flow is never allowed to permute with other components, since they are classified as heavy-tailed. Hence, the mixing of the first component would be severely restricted. Further, since large tail indices indicate a less heavy-tailed distribution, it is reasonable to clip the tail index at some threshold.

\begin{algorithm}[tb]
   \caption{Marginal Tail Estimation} \label{alg:tailestimation}
\begin{algorithmic}
\STATE {tail.est $\gets  [\,]$}
\FOR{j in $\{1,\dots , D\}$}
    \STATE {marginal $\gets$ Data\_val[:, j]}
    \STATE {moments $\gets$ moments\_est($\vert$marginal$\vert$)} \COMMENT{0 if $\vert$marginal$\vert$ is estimated to be light-tailed}
    \STATE {kernel $\gets$ kernel\_est($\vert$marginal$\vert$)} \COMMENT{0 if $\vert$marginal$\vert$ is estimated to be light-tailed}
    \IF{moments==kernel==0}
        \STATE {tail\_est.append(0)} \COMMENT{light-tailed if moments and kernel estimate a light-tailed marginal}
    \ELSE 
        \STATE{hill $\gets$ hills\_est($\vert$marginal$\vert$)}
        \IF{hill$>10$}
            \STATE{tail\_est.append(0)} \COMMENT{light-tailed if hills estimator predicts high tail index}
        \ELSE 
            \STATE{tail\_est.append(hill)} 
        \ENDIF
    \ENDIF
\ENDFOR
\STATE{ \textbf{return} tail\_est}
\end{algorithmic}
\end{algorithm}

\subsection{Synthetic Data Generation}\label{sec:synth_data}
The generation of the synthetic distribution consists of 3 steps: 1. Generating the marginal distributions, 2. Defining a copula distribution, 3. Combining the marginal and the copula to obtain a multivariate joint distribution. 

\paragraph{Generating the marginal distributions.} The first two marginals are defined to be Gaussians. The following marginals are a 2-mixtures of Gaussians and a mixtures of three Gaussians. The last $d_h \in \{1,4\}$ components are a mixture of two $t$-distributions and the remaining marginals are again mixtures of two Gaussians. All mixtures have equal weight for each mixture component and all means and standard-deviations are randomized. Means are constructed by uniformly sampling from $[-4, 4]$, whereas standard-deviations are sampled from $[1,2]$. 
\paragraph{Defining a copula distribution.}
Recall, a Gaussian copula~\eqref{eq:gaussiancopula} is parameterized by a correlation matrix $R$. To generate $R$, we randomly sample $16$ different pairs $(i,j)\in\{1,\dots , 8\}^2$ with $i\neq j$ and set the corresponding entry of the correlation matrix $
    R_{i,j}:=0.25. $ The diagonals of $R$ are set to $1$.
\paragraph{Obtaining a joint distribution.}
Lastly, we combine the marginals with the Gaussian copula using Sklar's Theorem~\ref{sklarstheorem}. This gives us a multivariate distribution with specified and complex marginals with a dependency structure given by the copula, see \citet{joe2014dependence} for more details on the induced dependencies.

To construct the training, test, and validation sets $15.000, \; 75\,000$, and $10\,000$ samples from this distribution are sampled, respectively.

In the setting $D=50$, we apply a similar procedure but with $d_h=10$ heavy-tailed components and with the first $40$ marginals being 2-mixtures of Gaussians, the remaining $10$ marginals being 2-mixtures t-distributions. We define the dependency-structure by randomly selecting $200$ pairs $(i,j)\in\{1, \dots, 50\}^2$ with $i\neq j$ and set $R_{i,j}=0.25$ again. Training, validation, and test sets consists of $50\,000, \; 10\,000$, and $75\,000$ samples, respectively.
\subsection{Synthetic Experiments}\label{sec_suppsynthexp}
In all synthetic experiments, we used a NSF with $5$ layers and corresponding LU-linearities. \name{} employs the modified LU-linearities from Section~\ref{sec:mtaf_nsf}. In the NSF layers, we used conditioner ResNets with $2$ hidden layers with $30$ or $200$ hidden neurons in the case $D=8$ and $D=50$, respectively and ReLU activations. Further, we used NSF layers with $3$ bins and set the tail-bound to $2$. We optimized the network using Adam with $5\,000$ or $20\,000$ train steps in the case $D=8$ and $D=50$, respectively, with a learning rate of $1e-5$ and a weight decay of $1e-6$. To fit the Gaussian copula baseline, we use the default settings of the \emph{copulas} \citep{sdv} library. 

To assess the sample quality on the tail of the distribution, we consider $3$ metrics.
\begin{enumerate}
    \item \textbf{Tail Value at Risk} for some level $\alpha$, which is defined by 
    \begin{equation*}
        \operatorname{tVaR}_{\alpha} := \operatorname{tVaR}_{\alpha}(F):= \frac{1}{1-\alpha} \int_{\alpha}^1 F^{-1}(u) du \enspace  
    \end{equation*}
    for some CDF $F$. $\operatorname{tVaR}_{\alpha}$ is a well-known metric and is widely used in finance \citep{mcneil2015quantitative}.
    We plug in the marginal empirical CDFs $\hat{F}_{\operatorname{data}}$ and $\hat{F}_{\operatorname{flow}}$, i.e. the empirical CDFs based on the data and on synthetic samples, respectively, and calculate the absolute difference between these quantities. The resulting metric is the marginal $\operatorname{tVaR}$-difference for the level $\alpha$. We set $\alpha=0.95$ in all our experiments. 
    \item \textbf{Area under log-log plot} is defined by 
    \begin{equation*}
        \operatorname{Area} := \sum_{i=1}^n \biggl\vert \log \bar{F}^{-1}_{\operatorname{data}} \Bigl(\frac{i}{n} \Bigr) - \log \bar{F}^{-1}_{\operatorname{flow}} \Bigl( \frac{i}{n} \Bigr) \biggr\vert \log \frac{i + 1}{i} \enspace ,
    \end{equation*}
    where $\bar{F}^{-1}_{\operatorname{data}}, \, \bar{F}^{-1}_{\operatorname{flow}}$ denote the inverse empirical complementary CDFs given by the test data and the flow samples, respectively.
    \item \textbf{Synthetic Tail Estimates}, where the tail-assessment is similar to the on described in Algorithm~\ref{alg:tailestimation}. We can then assess the similarity between the tail estimators based on the data and the tail estimators based on synthetically generated flow samples.
    \end{enumerate}

\paragraph{Setting D=8}
\begin{table*}[t]
\caption{Standard deviations corresponding to the experiments shown in Table~\ref{table:synth_metrics}, i.e. in the setting $\nu=2$ and $d_h\in \{1,4\}$, for one target distribution. 
\label{table:synth_metrics_stds}}
\label{sample-table}
\begin{center}
\begin{tabular}{@{}lccccccccccc@{}}
\toprule
$d_h$ & \multicolumn{5}{c}{1} & &\multicolumn{5}{c}{4}  \\
\cmidrule{2-6}
\cmidrule{8-12}
& $L$ & $\operatorname{Area}_l$ & $\operatorname{Area}_h$ & $\operatorname{tVaR}_l$ & $\operatorname{tVaR}_h$ && $L$ & $\operatorname{Area}_l$ & $\operatorname{Area}_h$ & $\operatorname{tVaR}_l$ & $\operatorname{tVaR}_h$ \\ 
\midrule
vanilla & 0.02 & 0.02 & 0.12 & 0.16 & 3.74 & & 0.03 & 0.03 & 0.14 & 0.13 & 1.75 \\
TAF & 0.00 & 0.03 & 0.20 & 0.12 & 2.38 & & 0.01 & 0.06 & 0.12 & 0.22 & 1.15 \\
gTAF & 0.01 & 0.06 & 0.26 & 0.17 & 1.30 & & 0.01 & 0.06 & 0.14 & 0.20 & 0.64\\
mTAF & 0.01 & 0.03 & 0.21 & 0.10 & 1.41 & & 0.01 & 0.03 & 0.20 & 0.18 & 1.32\\
 \bottomrule
\end{tabular}
\end{center}
\end{table*}
In our experimental study, we generated 3 synthetic distributions per setting as explained in Section~\ref{sec:synth_data} and fit each model 25 times to each synthetic distribution. While it is reasonable to compare the averaged metrics over all 75 runs, investigating the standard deviation over all runs might be misleading since the metrics could be centered around different values for each synthetic distribution. For this reason, it is more insightful to compare standard deviations over runs where the target distribution is fixed, which we present in Table~\ref{table:synth_metrics_stds}. 
In Table~\ref{table:synth_metrics_D=3} we present the numeric results of the synthetic experiments for a larger tail index, i.e. a less extreme setting. 
In this setting, we observe that gTAF tends to perform slightly worse for light-tailed components, while achieving good results for heavy-tailed components. Note that in this case, the performance of mTAF degrades, which might be attributed to the less flexible structure of its linearities. 
When replacing the NSF-layers by MAF-layers, we observe that the MAF fails to converge for a vanilla base distribution. Again, mTAF strikes a balance between fitting heavy- and light-tailed marginals but the overall performance is better in the case of NSF-layers---especially for the heavy-tailed marginals. We conjecture that this is due to the linearity of the tails of each NSF-layer\footnote{This does not mean that the whole flow is linear in its tails since tail samples can be linearily mapped in and out of the bins, leading a non-linear mapping.}, which leads to a better generalization for those low-sample regions. Therefore, we continue the following experiments using the NSF architecture.
Furthermore, we investigate the tail indices of the generated samples by constructing confusion matrices similar to those in Figure~\ref{fig:synth_tail}, which we present in Figure~\ref{fig:synth_tail_supp}. We observe a similar behavior, that is, while vanilla and TAF produce mainly marginals with light-tailed marginals, gTAF is able to produce much better samples with heavy-tailed marginals. Again, \name{} produces marginals whose marginal tail behavior almost perfectly fits the true tail behavior. 
For further visual inspection of the generated marginals, we consider QQ-plots of the heavy-tailed components in Figure~\ref{fig:qq_df2h1} and Figure~\ref{fig:qq_plots_df2h4}. In both cases, vanilla and TAF---and sometimes in gTAF---we observe humps in the tails, which surrogate a bad sampling performance in their tails. \name{} does not have these, which is in accordance to our findings derived from Figure~\ref{fig:synth_tail} and \ref{fig:synth_tail_supp}. 
\paragraph{Setting D=50}
Table~\ref{table:synth_metrics_50D=2} compares the quantitative metrics for our synthetic experiments in the case $D=50$. In this case, we generate 3 synthetic distributions as explained in Section~\ref{sec:synth_data} and fit each model $5$ times. We observe that \name{} clearly outperforms vanilla and TAF in terms of $\operatorname{Area}$, while the flexibility in gTAF allows it to perform almost on par with the oracle copula baseline. 
Considering the metrics that account for the tail fit of the heavy-tailed components, we surprisingly observe that gTAF even outperforms the oracle copula model. 
However, considering the light-tailed components, gTAF performs a bit worse, which is not surprising since gTAF models each marginal distribution by a $t$-distribution.  
\begin{table*}[t]
\caption{Average test loss, Area under log-log plot, and $\operatorname{tVaR}$ (lower is better for each metric) in the setting $\nu=3$ and $d_h \in \{1,4\}$. The copula model serves as an oracle baseline. 
\label{table:synth_metrics_D=3}}
\begin{center}
\begin{tabular}{@{}lccccccccccc@{}}
\toprule
$d_h$ & \multicolumn{5}{c}{1} & &\multicolumn{5}{c}{4}  \\
\cmidrule{2-6}
\cmidrule{8-12}
& $L$ & $\operatorname{Area}_l$ & $\operatorname{Area}_h$ & $\operatorname{tVaR}_l$ & $\operatorname{tVaR}_h$ && $L$ & $\operatorname{Area}_l$ & $\operatorname{Area}_h$ & $\operatorname{tVaR}_l$ & $\operatorname{tVaR}_h$ \\ 
\midrule
vanilla & 10.82 & 0.25 & 2.77 & 0.55 & 12.06 & & 10.55 & 0.23 & 2.64 & 0.58 & 9.98 \\
TAF & 10.79 & 0.36 & 2.78 & 0.79 & 1.49 & & 10.46 & 0.38 & 2.95 & 0.93 & 2.36 \\
gTAF & 10.77 & 0.55 & 1.11 & 1.27 & 2.82 & & 10.38 & 0.50 & 1.13 & 1.00 &  2.56 \\
mTAF & 10.76 & 0.33 & 2.03 & 1.05 & 7.36 & & 10.38 & 0.34 & 1.54 & 1.09 & 5.02  \\
\midrule 
copula & 9.76 & 0.20 & 0.79 & 0.45 & 1.82 & & 9.76 & 0.19 & 0.91 & 0.46 & 1.62  \\
 \bottomrule
\end{tabular}
\end{center}
\end{table*}

\begin{table*}[t]
\caption{Average test loss, Area under log-log plot, and $\operatorname{tVaR}$ (lower is better for each metric) in the setting $\nu=2$ and $d_h \in \{1,4\}$ when using a MAF.
\label{table:synth_metrics_D=2_MAF}}
\begin{center}
\begin{tabular}{@{}lccccccccccc@{}}
\toprule
$d_h$ & \multicolumn{5}{c}{1} & &\multicolumn{5}{c}{4}  \\
\cmidrule{2-6}
\cmidrule{8-12}
& $L$ & $\operatorname{Area}_l$ & $\operatorname{Area}_h$ & $\operatorname{tVaR}_l$ & $\operatorname{tVaR}_h$ && $L$ & $\operatorname{Area}_l$ & $\operatorname{Area}_h$ & $\operatorname{tVaR}_l$ & $\operatorname{tVaR}_h$ \\ 
\midrule
vanilla & $>$1e6 & 0.82 & 4.88 & 2.76 & 8.67 & & $>$1e6 & 0.95 & 5.97 & 3.78 & 9.93 \\
TAF & 10.63 & 1.09 & 4.68 & 3.88 & 8.00 & & 9.97 & 1.12 & 5.90 & 4.28 & 9.42 \\
gTAF & 10.60 & 1.26 & 3.78 & 4.71 & 2.91 & & 9.84 & 1.32 & 4.89 & 5.29 &  4.22 \\
mTAF & 10.55 & 0.77 & 4.00 & 2.51 & 3.08 & & 9.81 & 0.48 & 5.01 & 1.32 & 4.49  \\
 \bottomrule
\end{tabular}
\end{center}
\end{table*}

\begin{figure}[ht]
\vskip 0.2in
\begin{minipage}{\textwidth}
\centerline{\includegraphics[width=0.24\columnwidth]{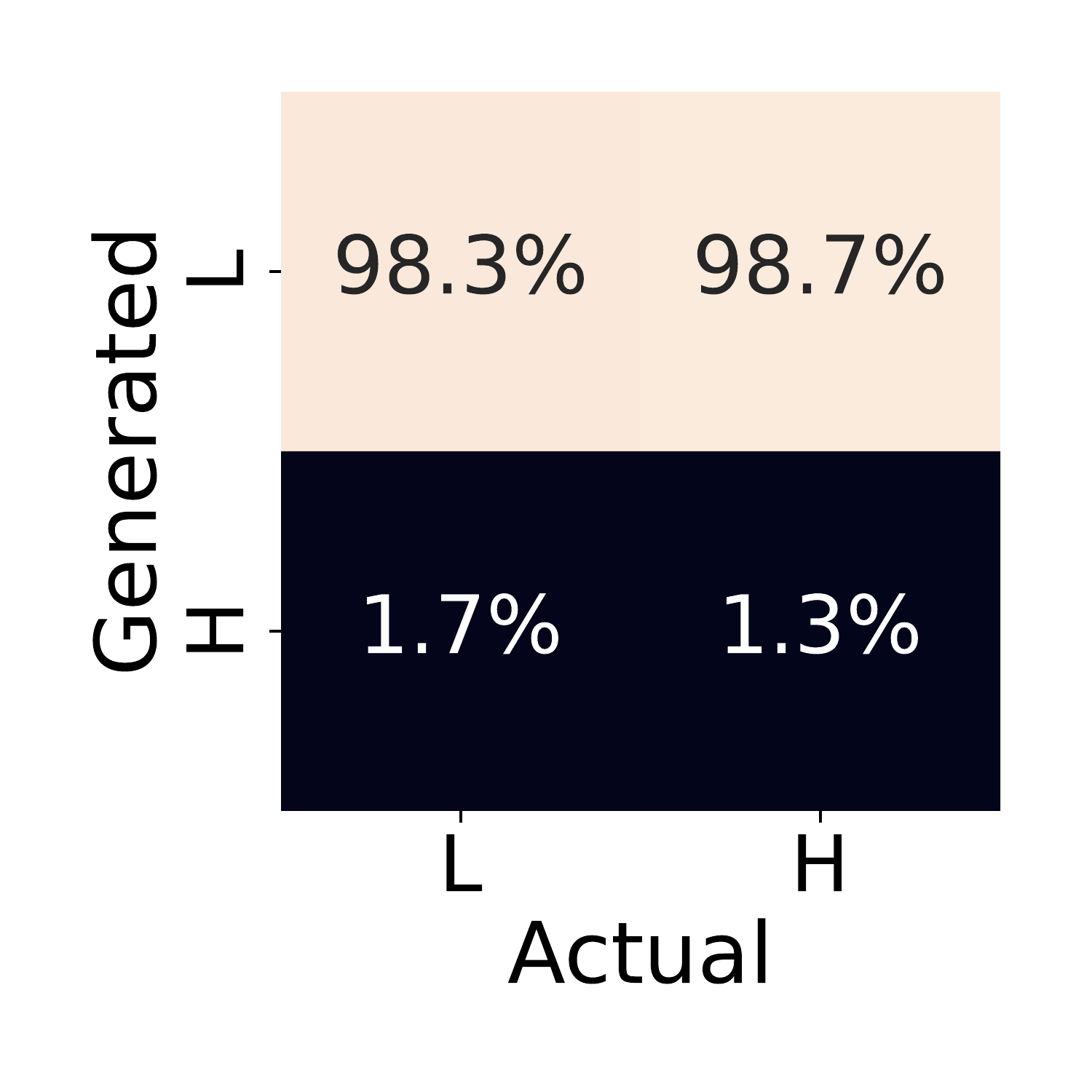} 
\includegraphics[width=0.24\columnwidth]{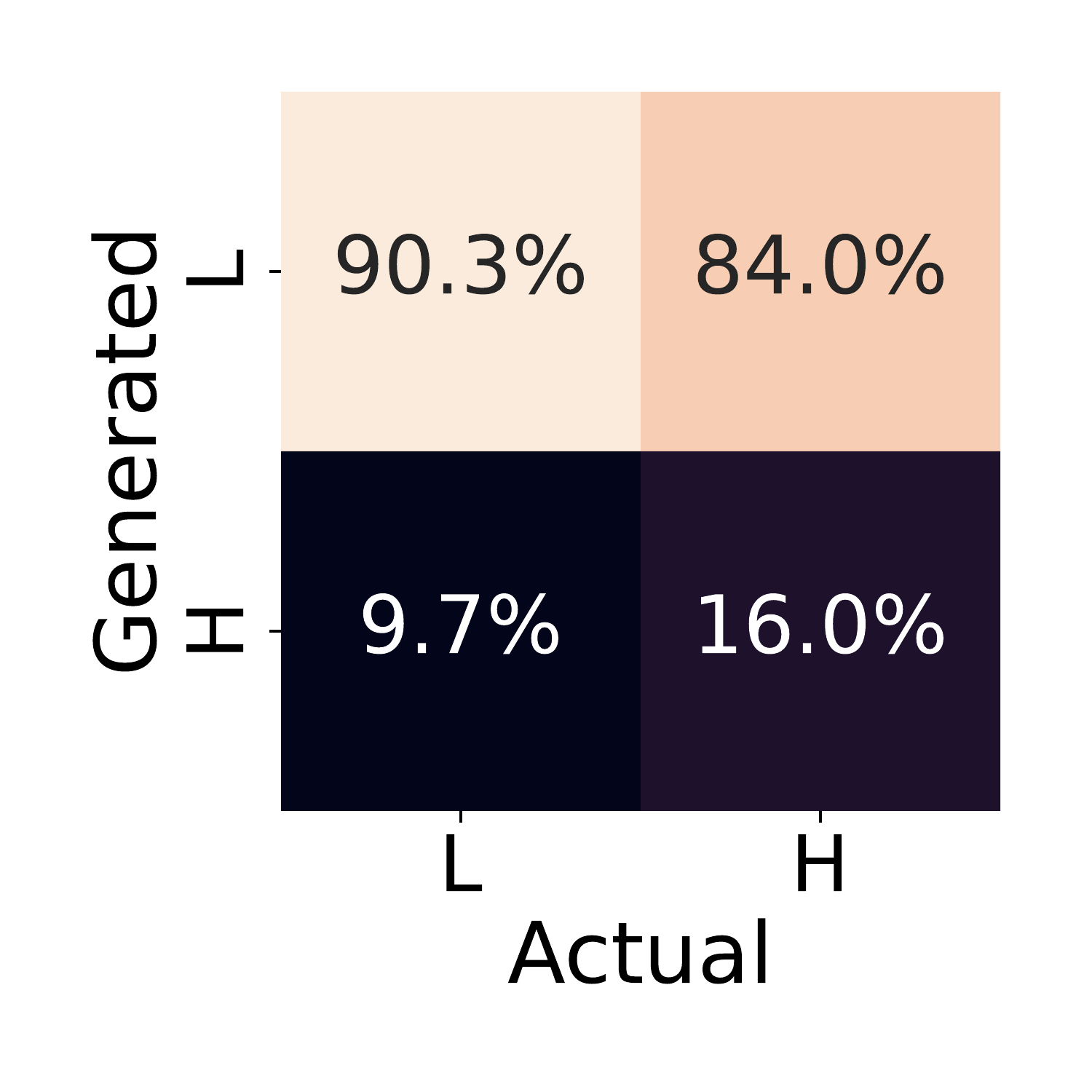}
\includegraphics[width=0.24\columnwidth]{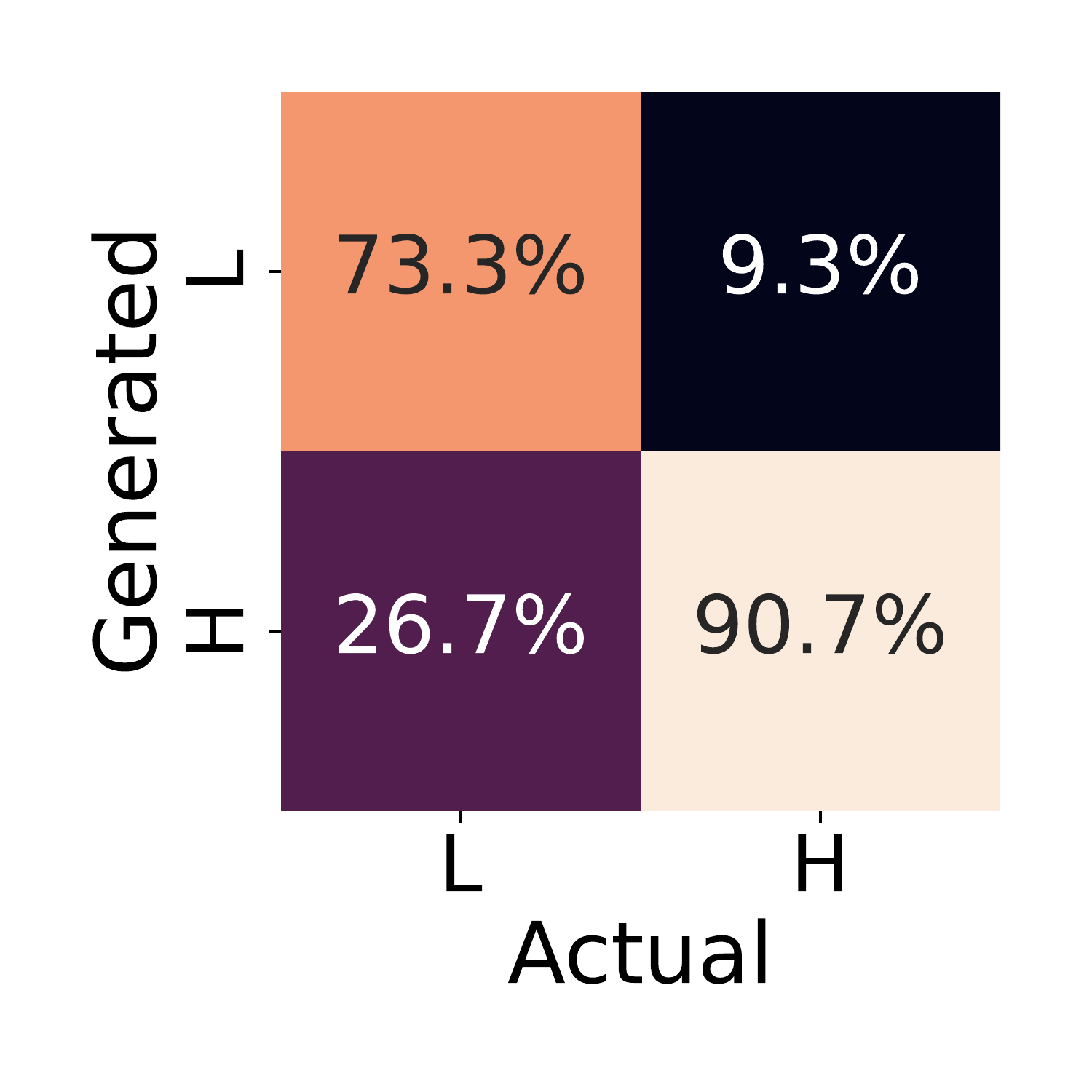}
\includegraphics[width=0.24\columnwidth]{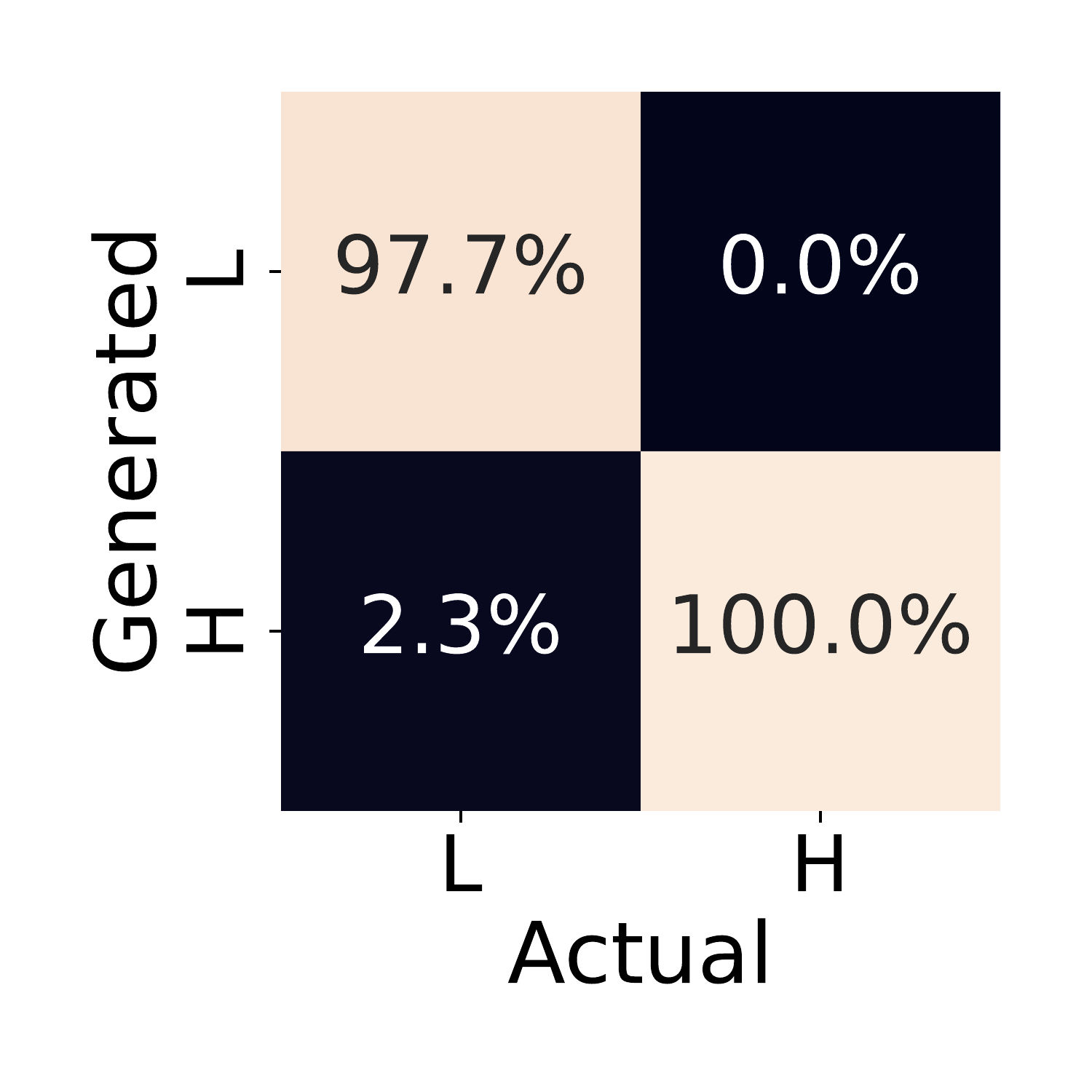}
}
\centering
\subcaption{$\nu=2, \; d_h =1$}

\end{minipage}

\begin{minipage}{\textwidth}
\centerline{\includegraphics[width=0.24\columnwidth]{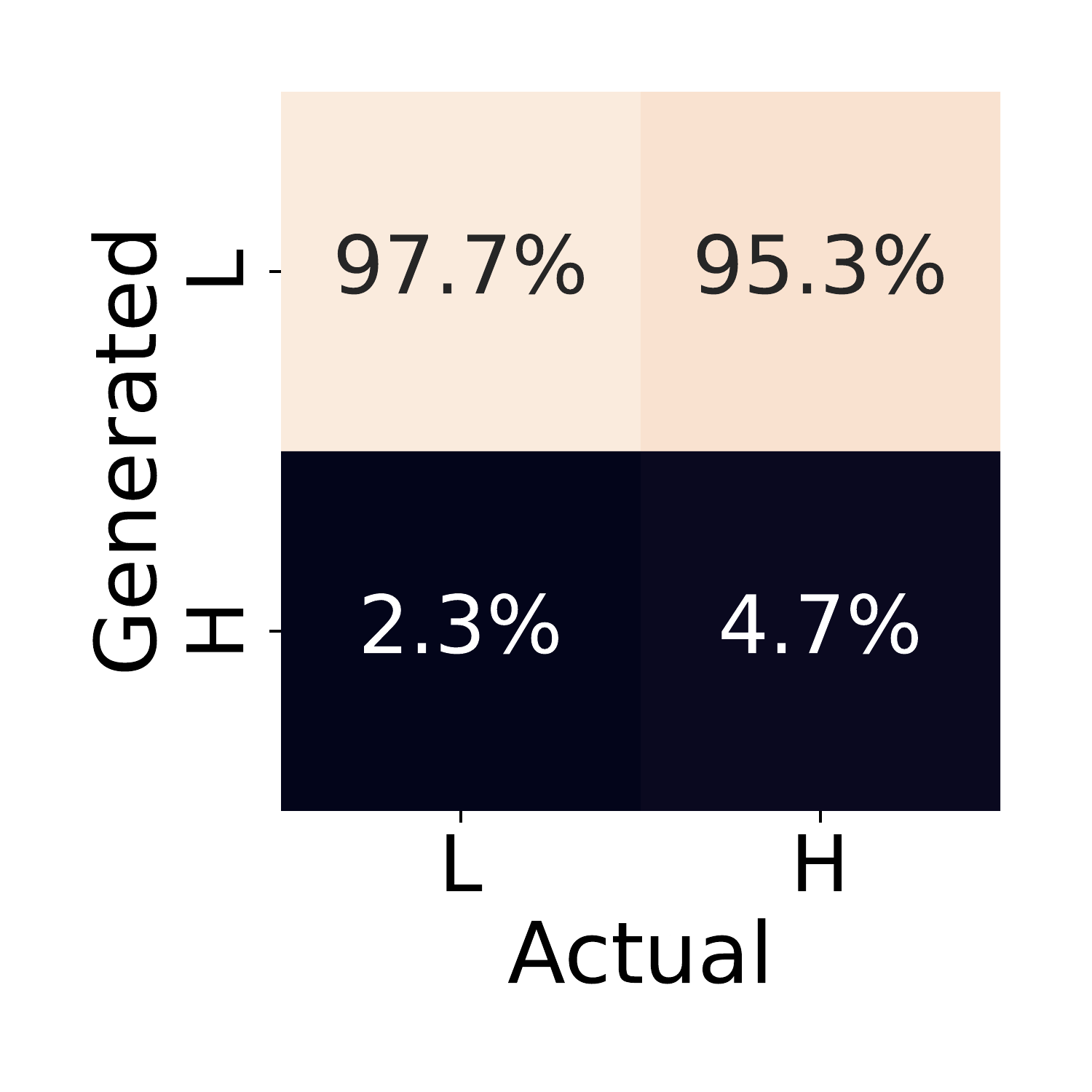} 
\includegraphics[width=0.24\columnwidth]{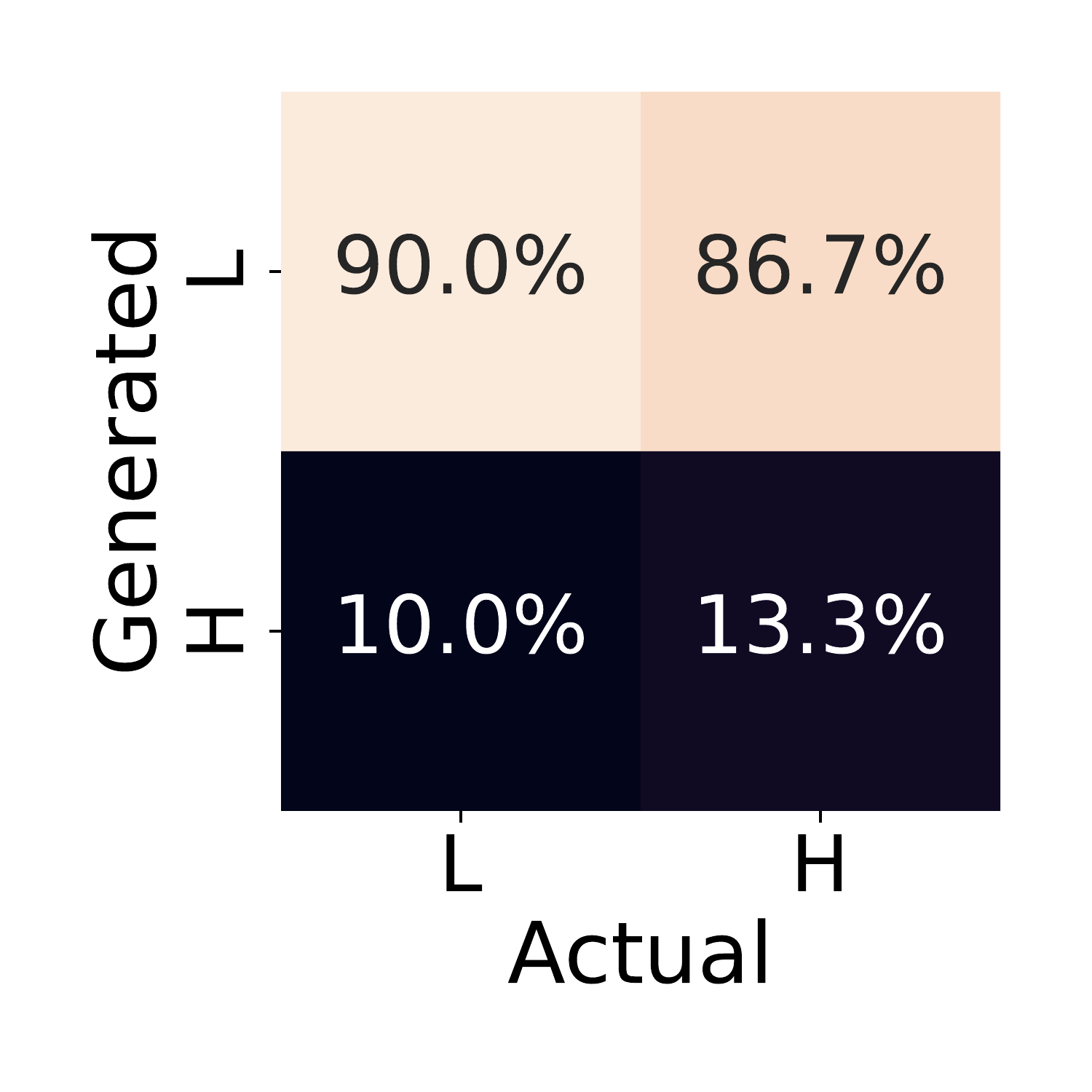}
\includegraphics[width=0.24\columnwidth]{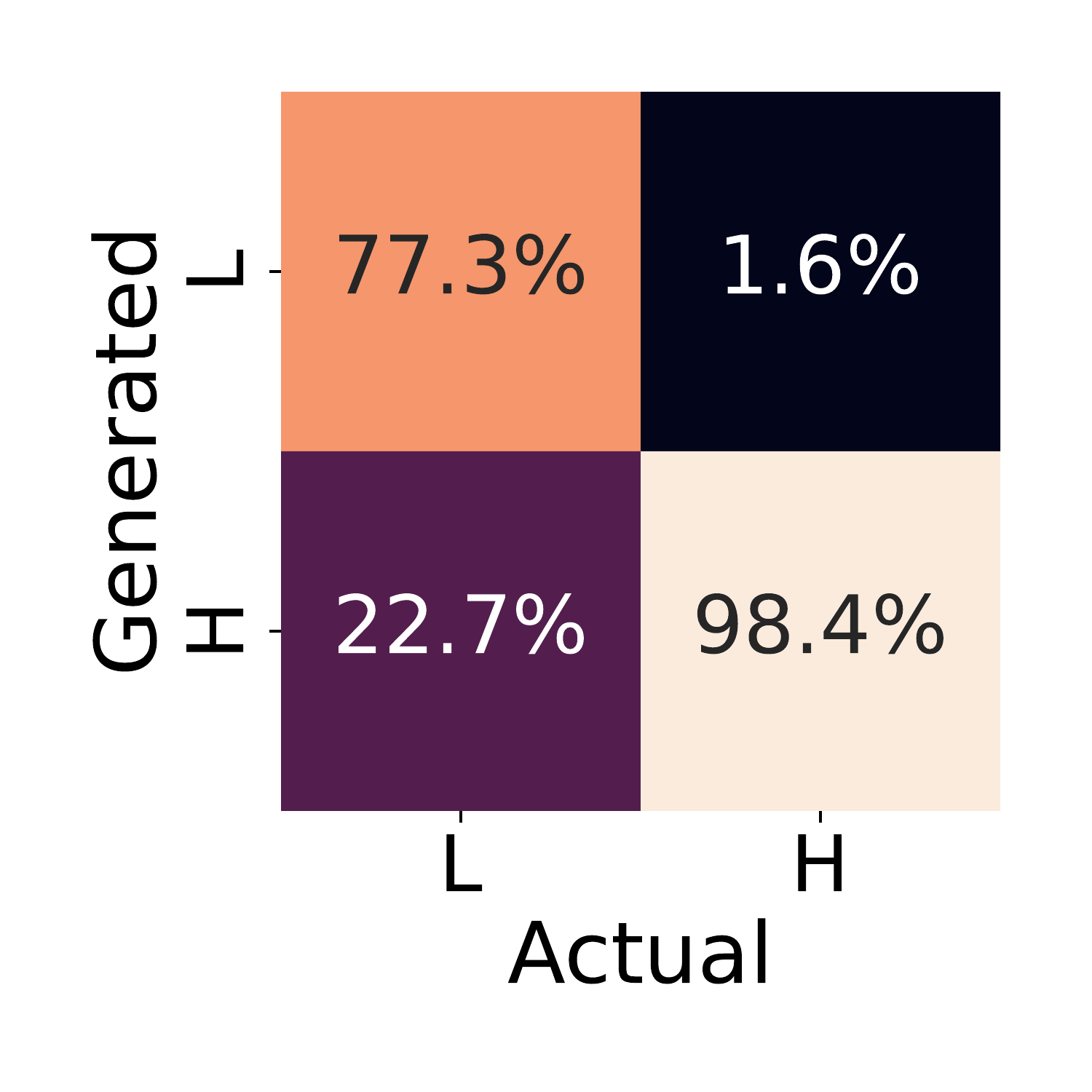}
\includegraphics[width=0.24\columnwidth]{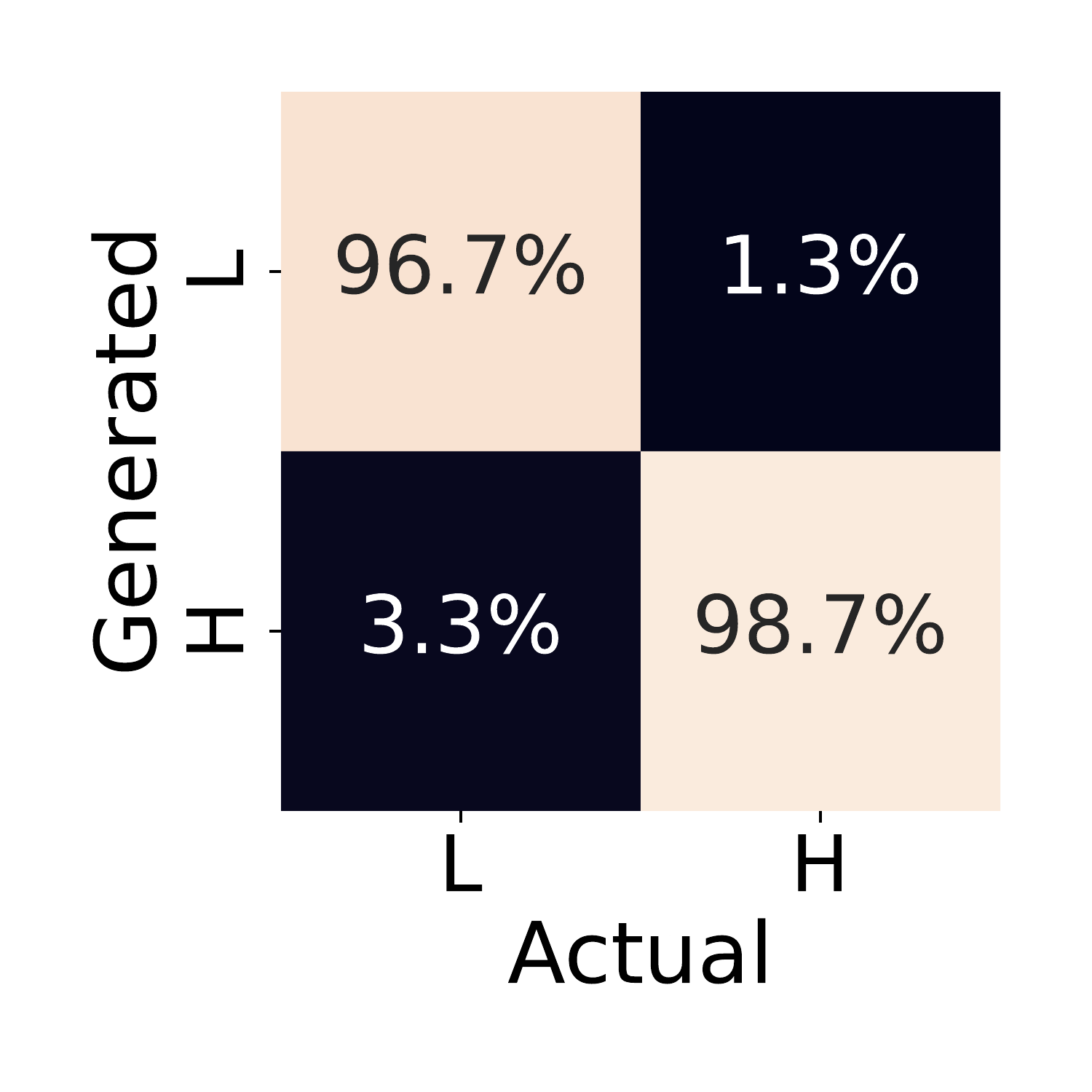}
}
\centering
\subcaption{$\nu=3, \; d_h =4$}

\end{minipage}

\begin{minipage}{\textwidth}
\centerline{\includegraphics[width=0.24\columnwidth]{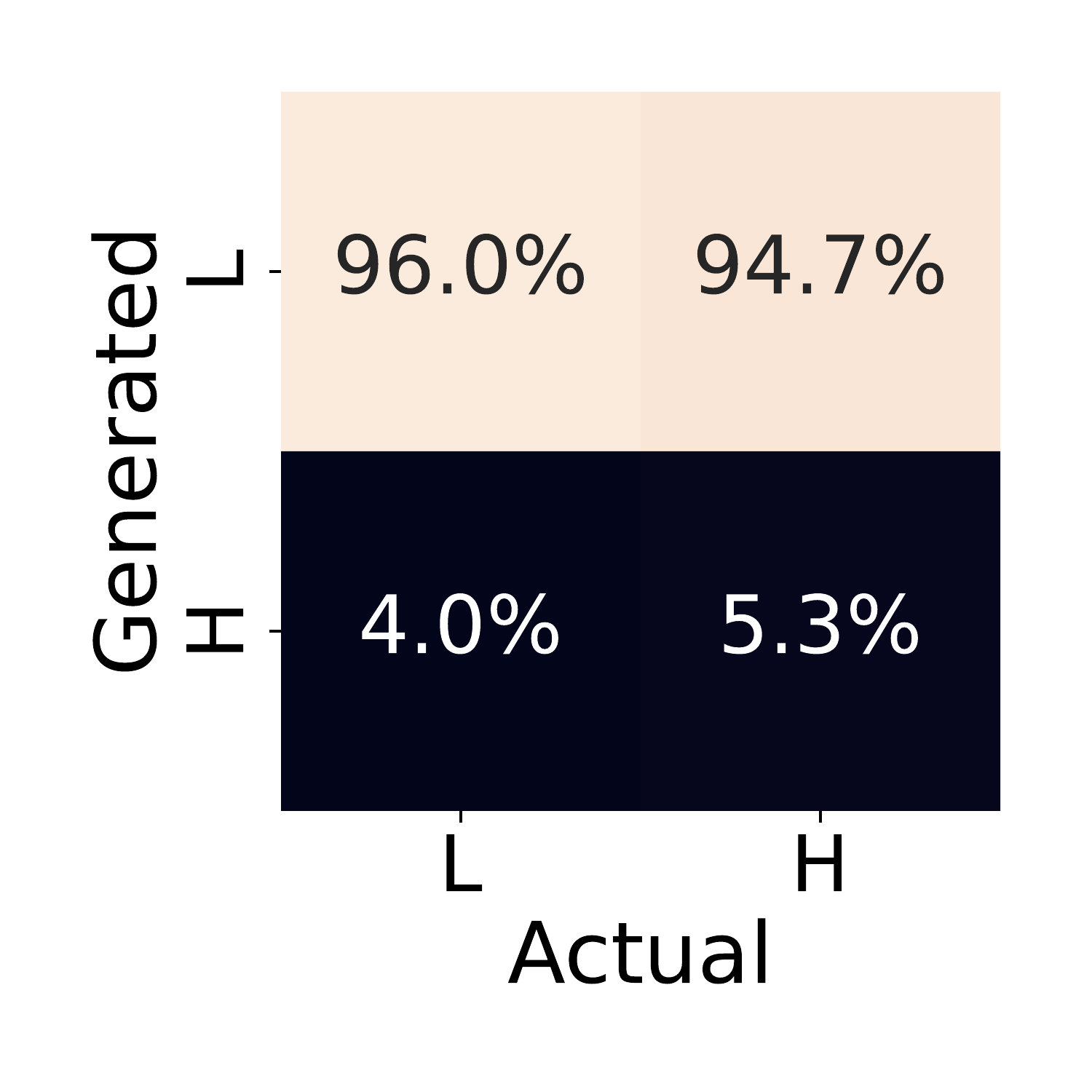} 
\includegraphics[width=0.24\columnwidth]{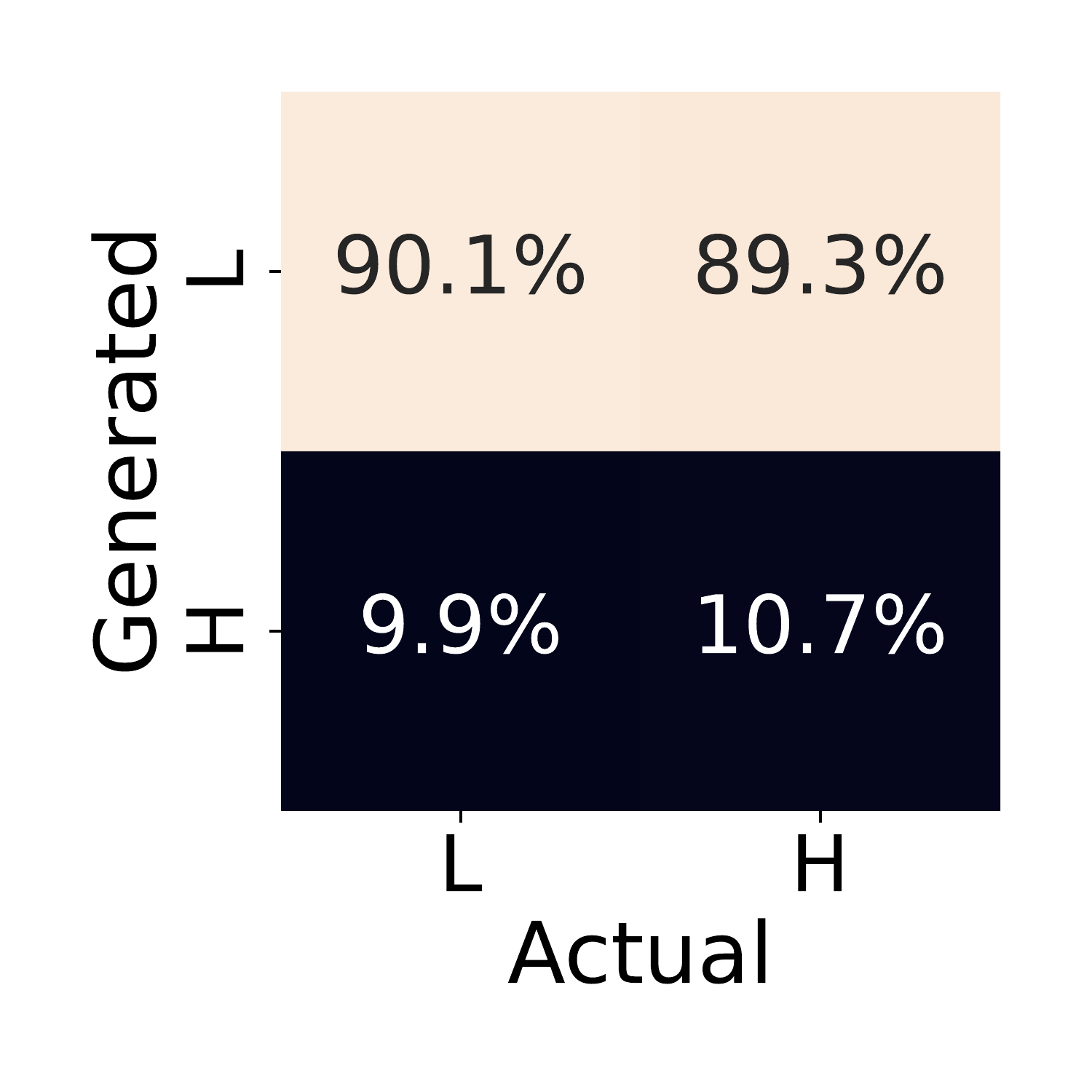}
\includegraphics[width=0.24\columnwidth]{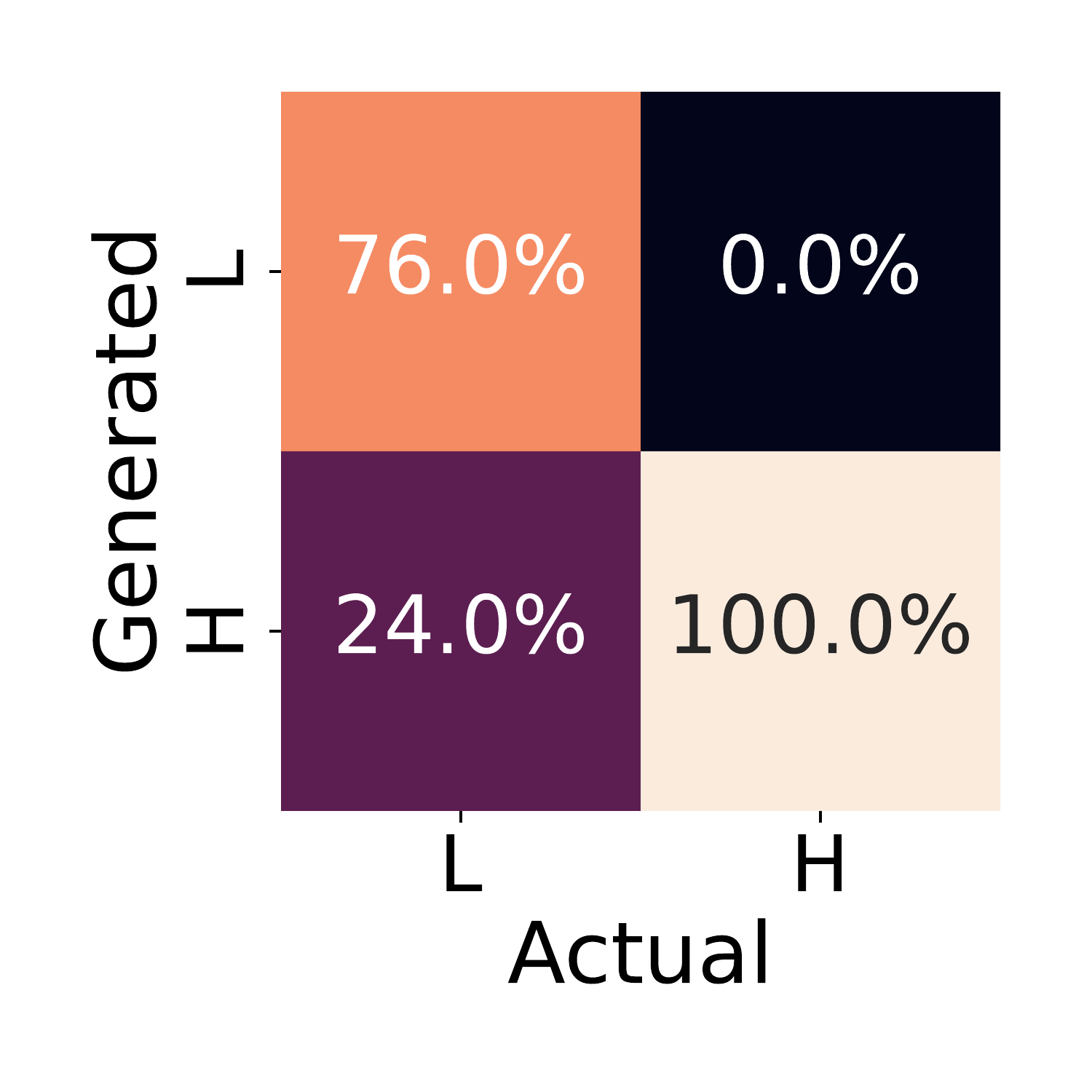}
\includegraphics[width=0.24\columnwidth]{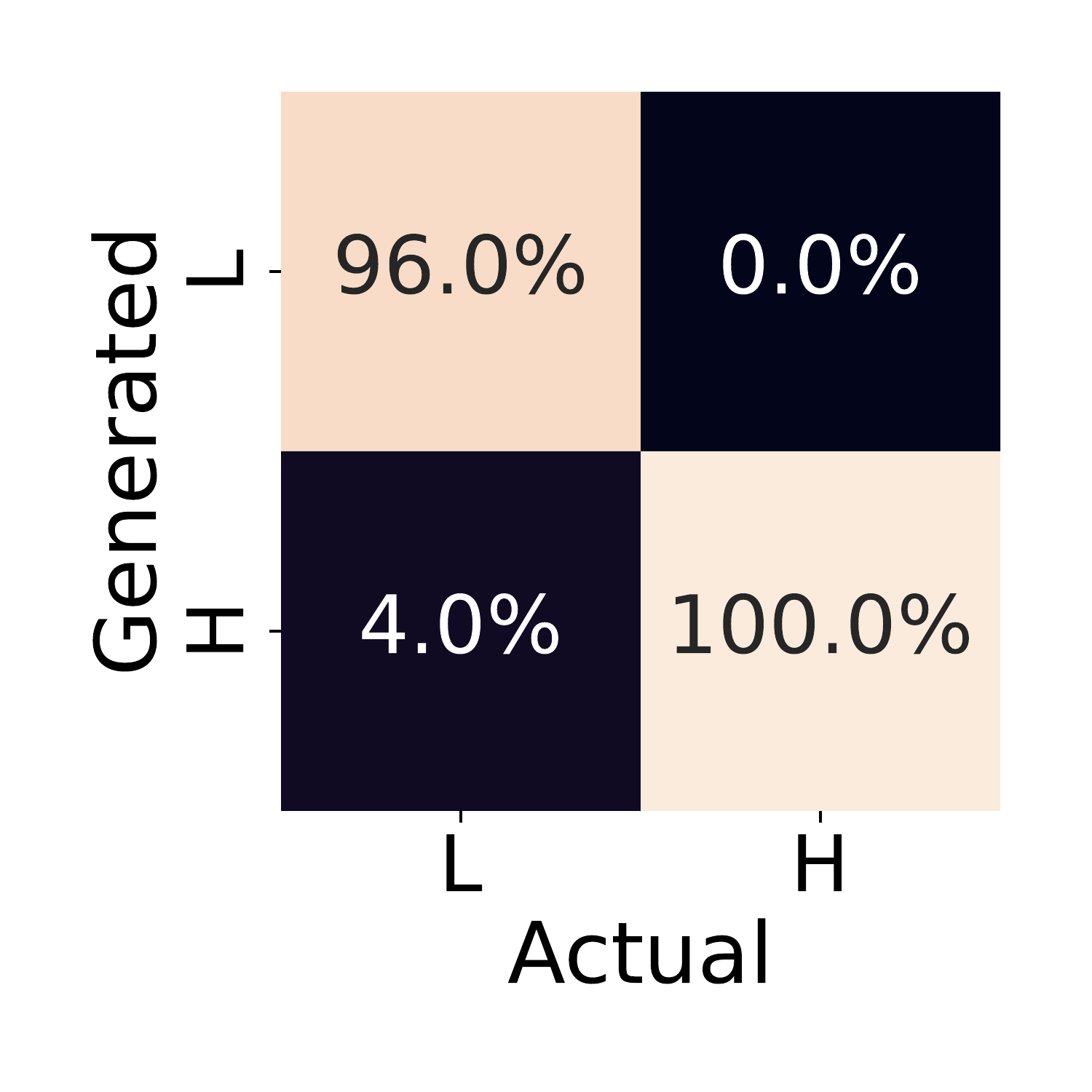}
}
\centering
\subcaption{$\nu=3, \; d_h =1$}

\end{minipage}
\caption{Marginal tail estimation based on synthetic flow samples of vanilla, TAF, gTAF, and mTAF (from left to right) for varying tail index $\nu$ and number of heavy-tailed components $d_h$. We classify marginals whose tail estimator is less than $10$ as heavy-tailed, otherwise it is classified as light-tailed.}
\label{fig:synth_tail_supp}
\vskip -0.2in
\end{figure}


\begin{table*}[t]
\caption{Average test loss, Area under log-log plot, and $\operatorname{tVaR}$ (lower is better for each metric) in the setting $D=50, \; \nu \in \{2, 3\},$ and $d_h =10$. The copula model serves as an oracle baseline. \label{table:synth_metrics_50D=2}}
\begin{center}
\begin{tabular}{@{}lccccccccccc@{}}
\toprule
$\nu$ & \multicolumn{5}{c}{2} & &\multicolumn{5}{c}{3}  \\
\cmidrule{2-6}
\cmidrule{8-12}
& $L$ & $\operatorname{Area}_l$ & $\operatorname{Area}_h$ & $\operatorname{tVaR}_l$ & $\operatorname{tVaR}_h$ && $L$ & $\operatorname{Area}_l$ & $\operatorname{Area}_h$ & $\operatorname{tVaR}_l$ & $\operatorname{tVaR}_h$ \\ 
\midrule
vanilla & 58.59 & 0.29 & 3.38 & 0.45 & 24.61 && 62.59 & 0.30 & 1.71 & 0.43 & 6.38  \\
TAF & 58.12 & 0.67 & 3.02 & 1.20 & 3.19 && 62.47 & 0.57 & 2.02 & 0.96 & 2.15  \\
gTAF & 58.05 & 0.94 & 0.58 & 1.23 & 1.16 && 62.42 & 0.61 & 0.51 & 0.96 & 0.99  \\
mTAF & 58.17 & 0.39 & 2.30 & 0.84 & 4.20 && 62.60 & 0.30 & 1.32 & 0.46 & 2.86  \\
\midrule
copula & 57.23 & 0.21 & 1.04 & 0.42 & 2.50 && 56.73 & 0.20 & 0.66 & 0.41 & 1.33  \\
 \bottomrule
\end{tabular}
\end{center}
\end{table*}

\begin{figure}[ht]
\vskip 0.2in
\begin{center}
\begin{minipage}{0.24\textwidth}
\centerline{\includegraphics[width=\columnwidth]{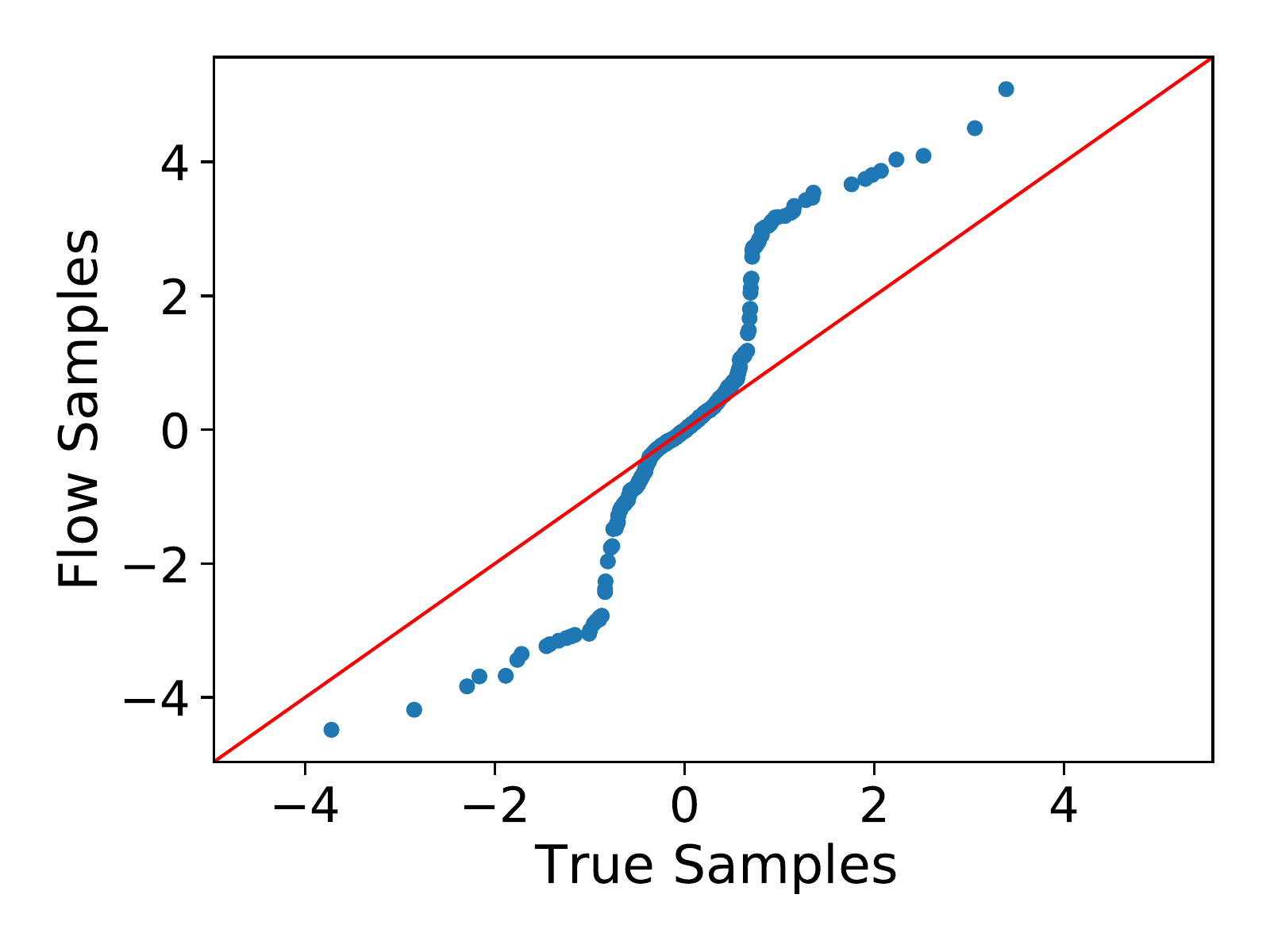}}
\end{minipage}
\begin{minipage}{0.24\textwidth}
\centerline{\includegraphics[width=\columnwidth]{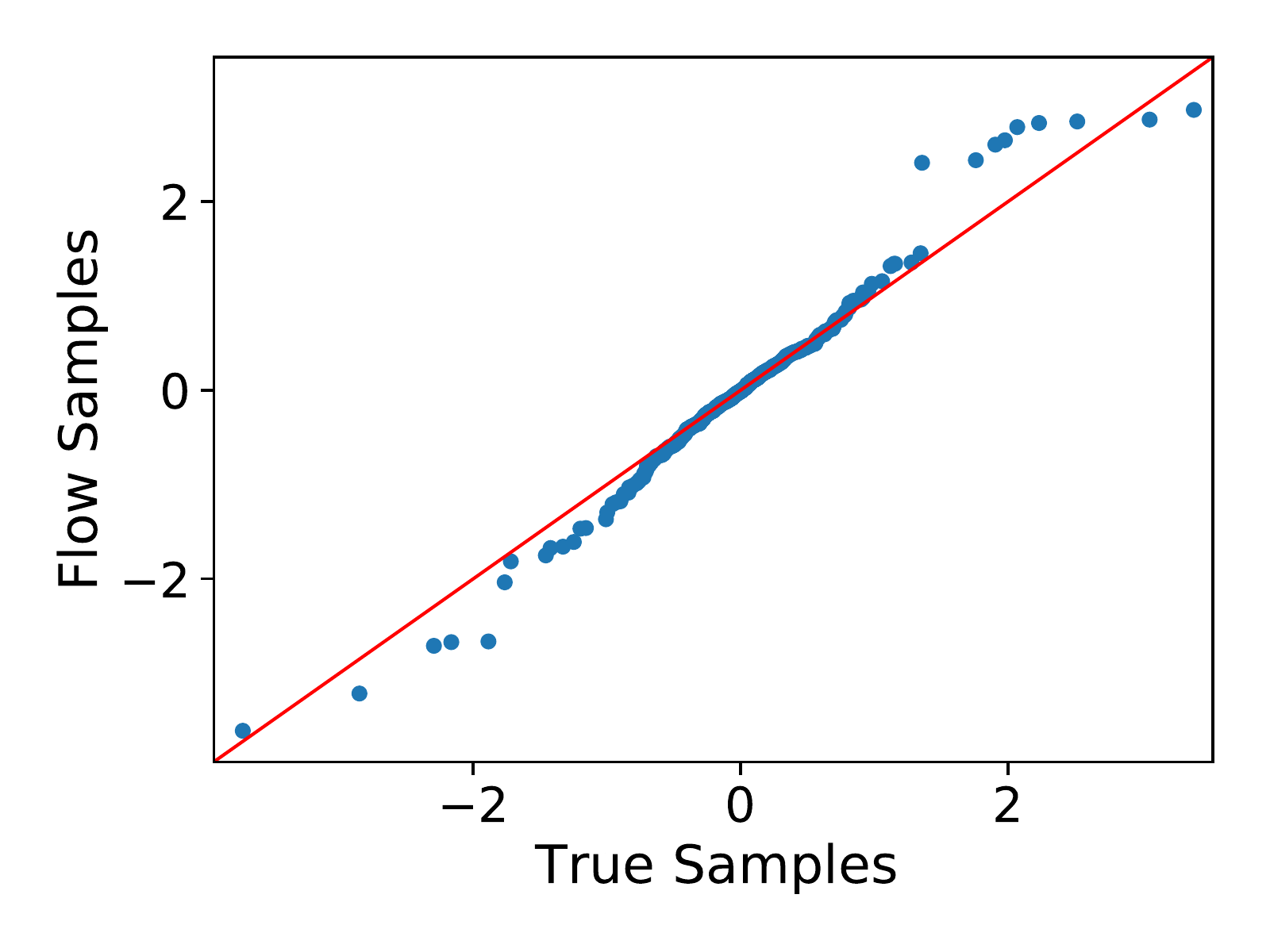}}
\end{minipage}
\begin{minipage}{0.24\textwidth}
\centerline{\includegraphics[width=\columnwidth]{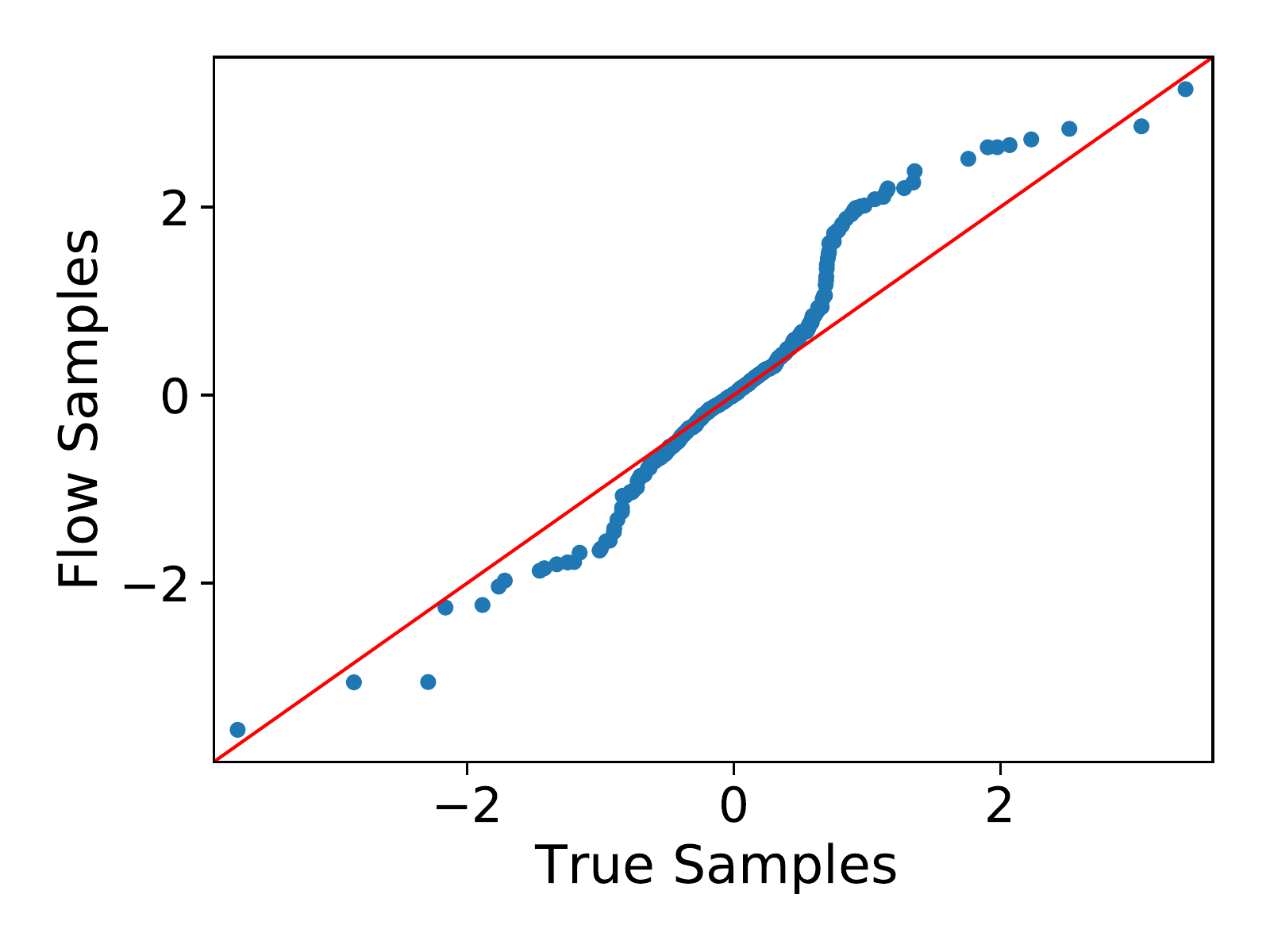}}
\end{minipage}
\begin{minipage}{0.24\textwidth}
\centerline{\includegraphics[width=\columnwidth]{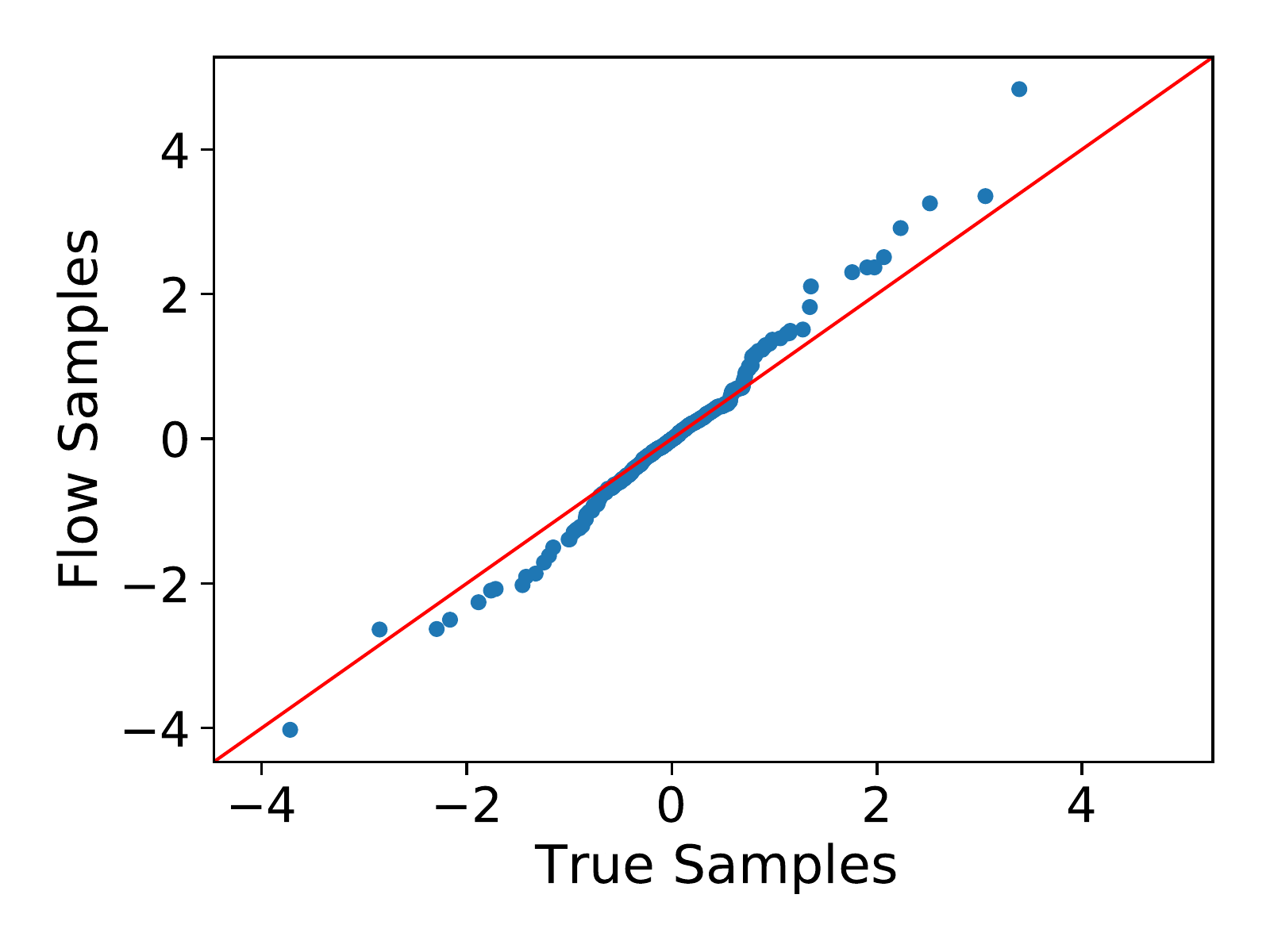}}
\end{minipage}

\end{center}
\vskip -0.2in \caption{QQ-plots for the $8$th heavy-tailed marginal in the setting $\nu=2$ and $d_h =1$. The QQ-plots correspond to samples generated by vanilla, TAF, gTAF, and \name{}, respectively.}
\label{fig:qq_df2h1}
\end{figure}

\begin{figure}[ht]
\vskip 0.2in
\begin{center}
\begin{minipage}{0.24\textwidth}
\centerline{\includegraphics[width=\columnwidth]{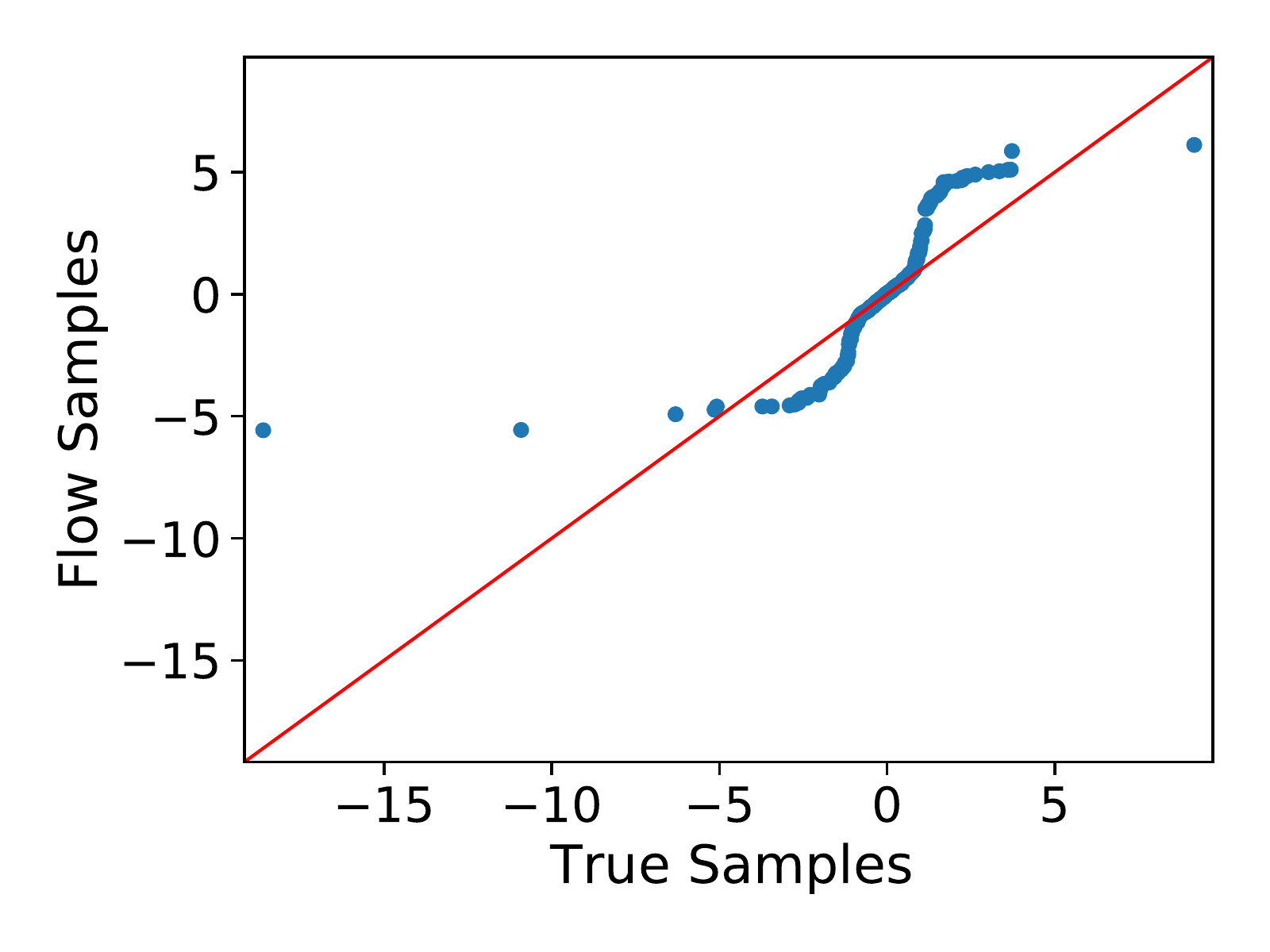}}
\end{minipage}
\begin{minipage}{0.24\textwidth}
\centerline{\includegraphics[width=\columnwidth]{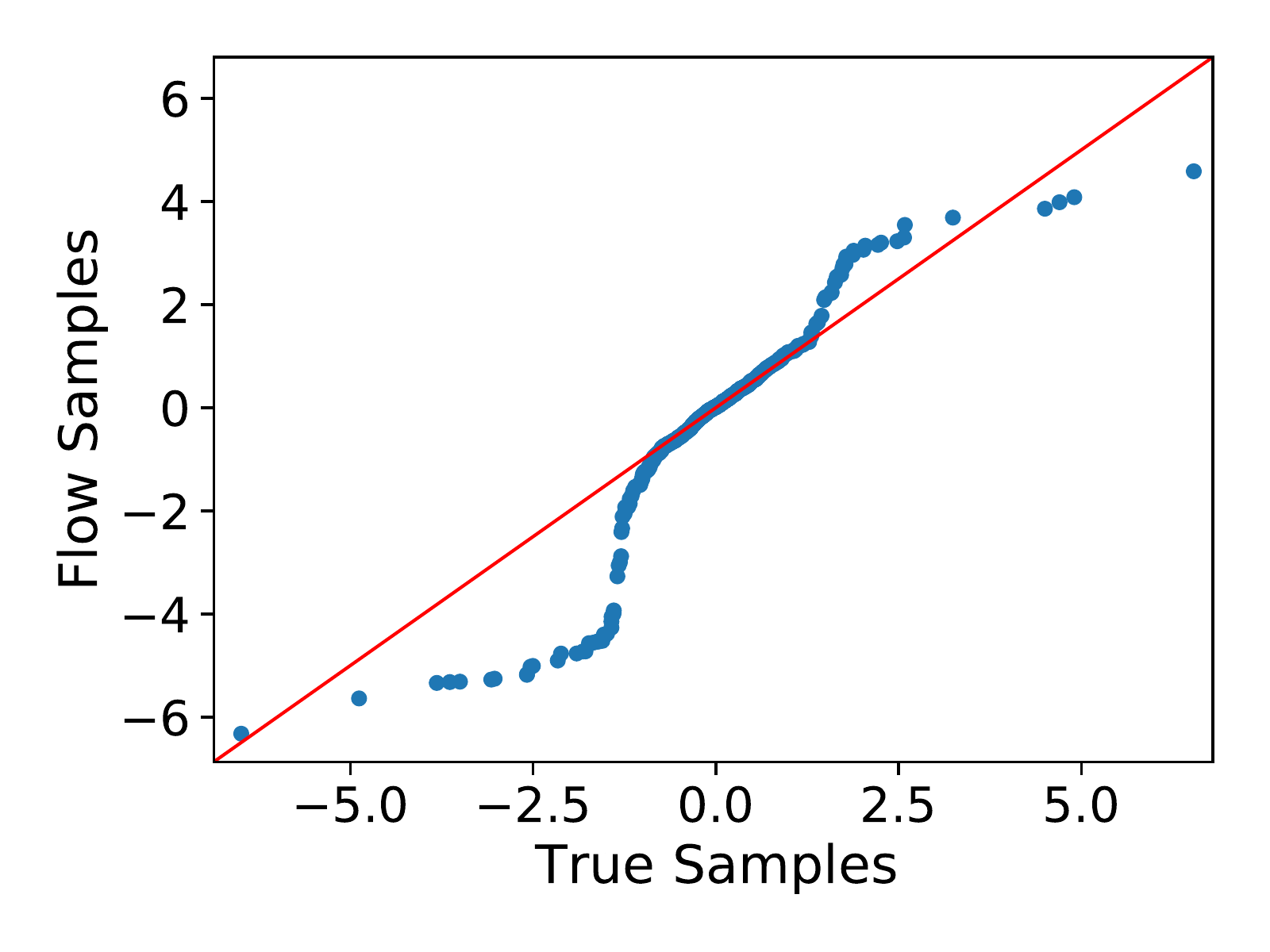}}
\end{minipage}
\begin{minipage}{0.24\textwidth}
\centerline{\includegraphics[width=\columnwidth]{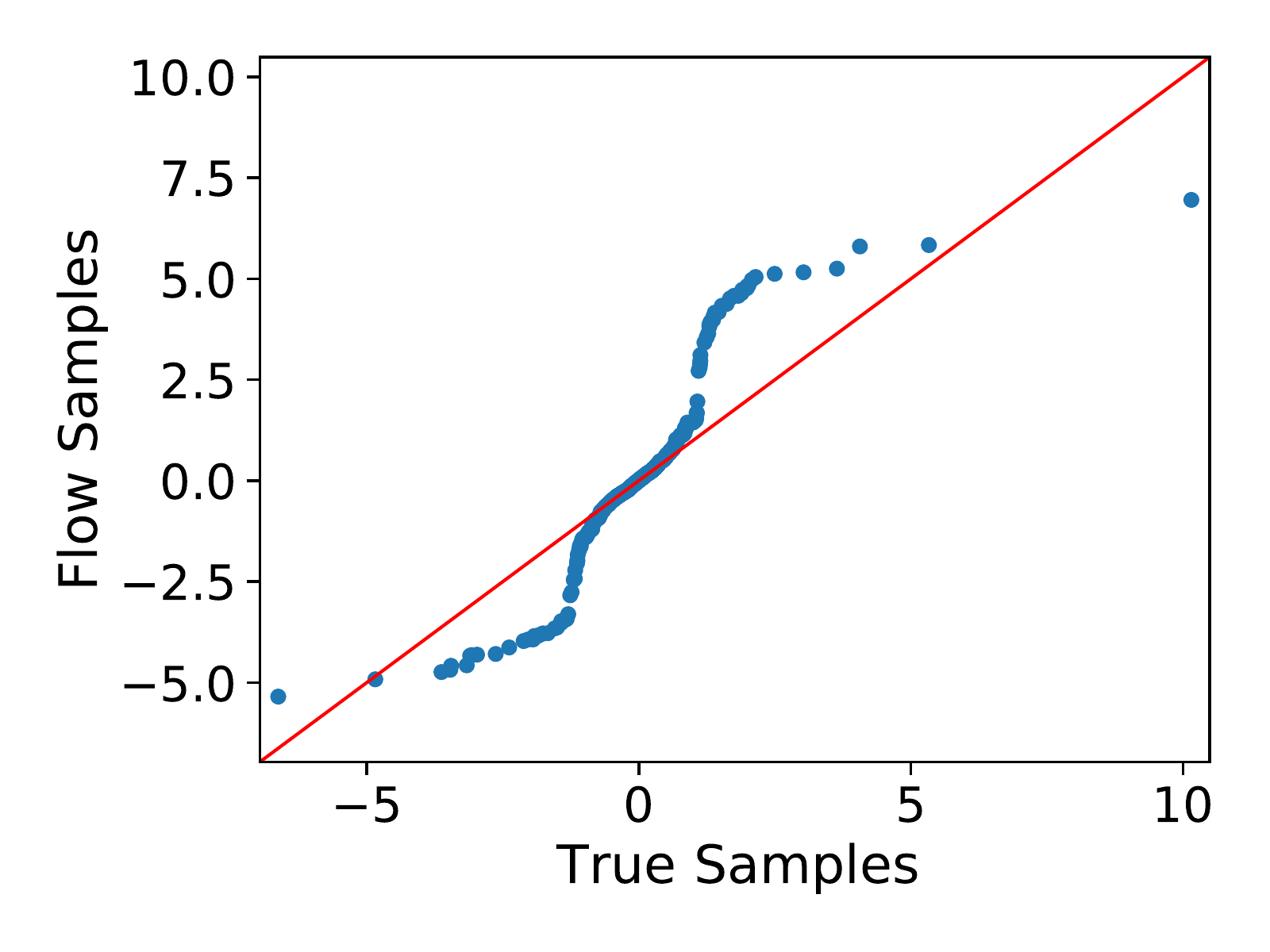}}
\end{minipage}
\begin{minipage}{0.24\textwidth}
\centerline{\includegraphics[width=\columnwidth]{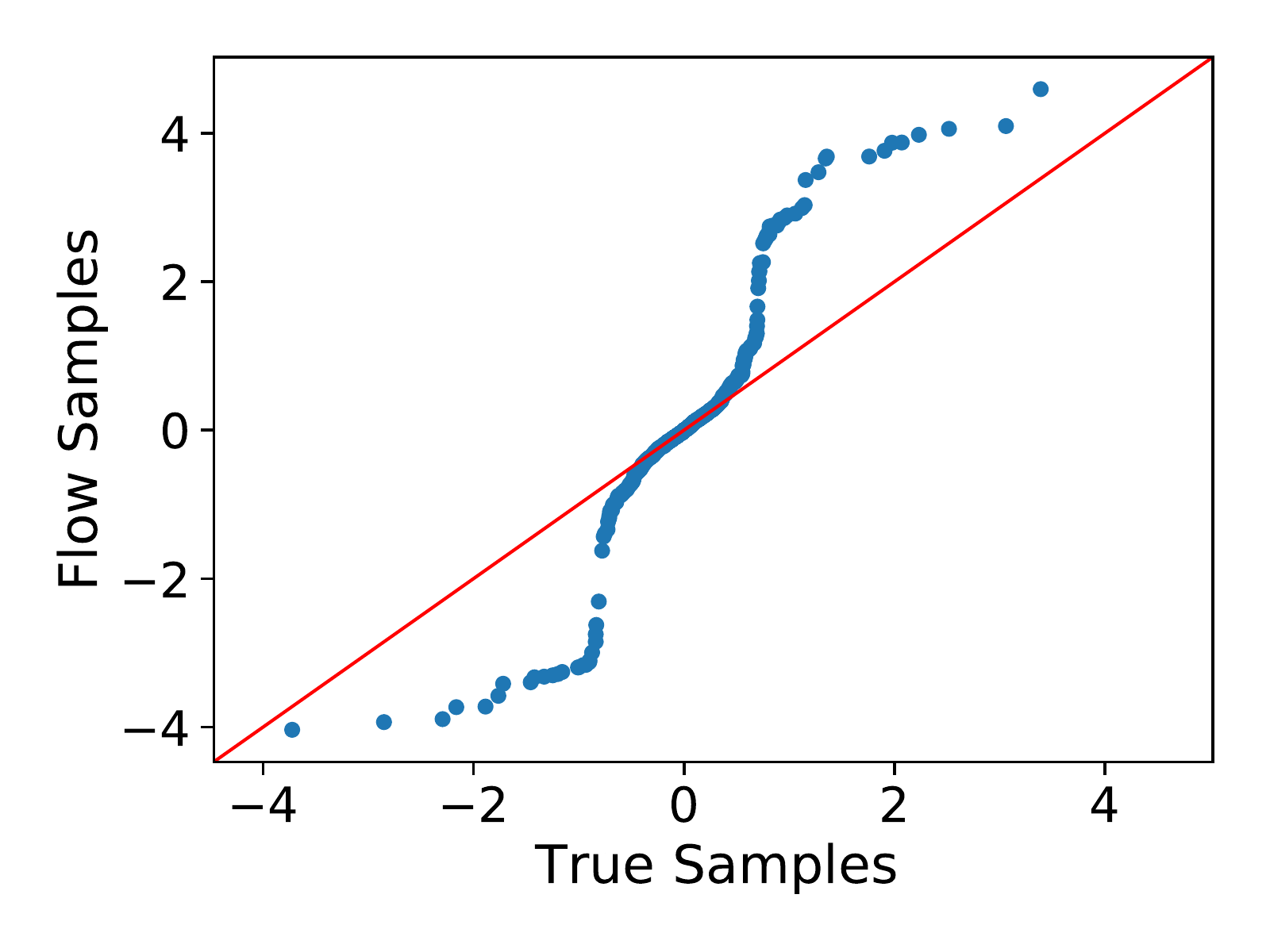}}
\end{minipage}

\begin{minipage}{0.24\textwidth}
\centerline{\includegraphics[width=\columnwidth]{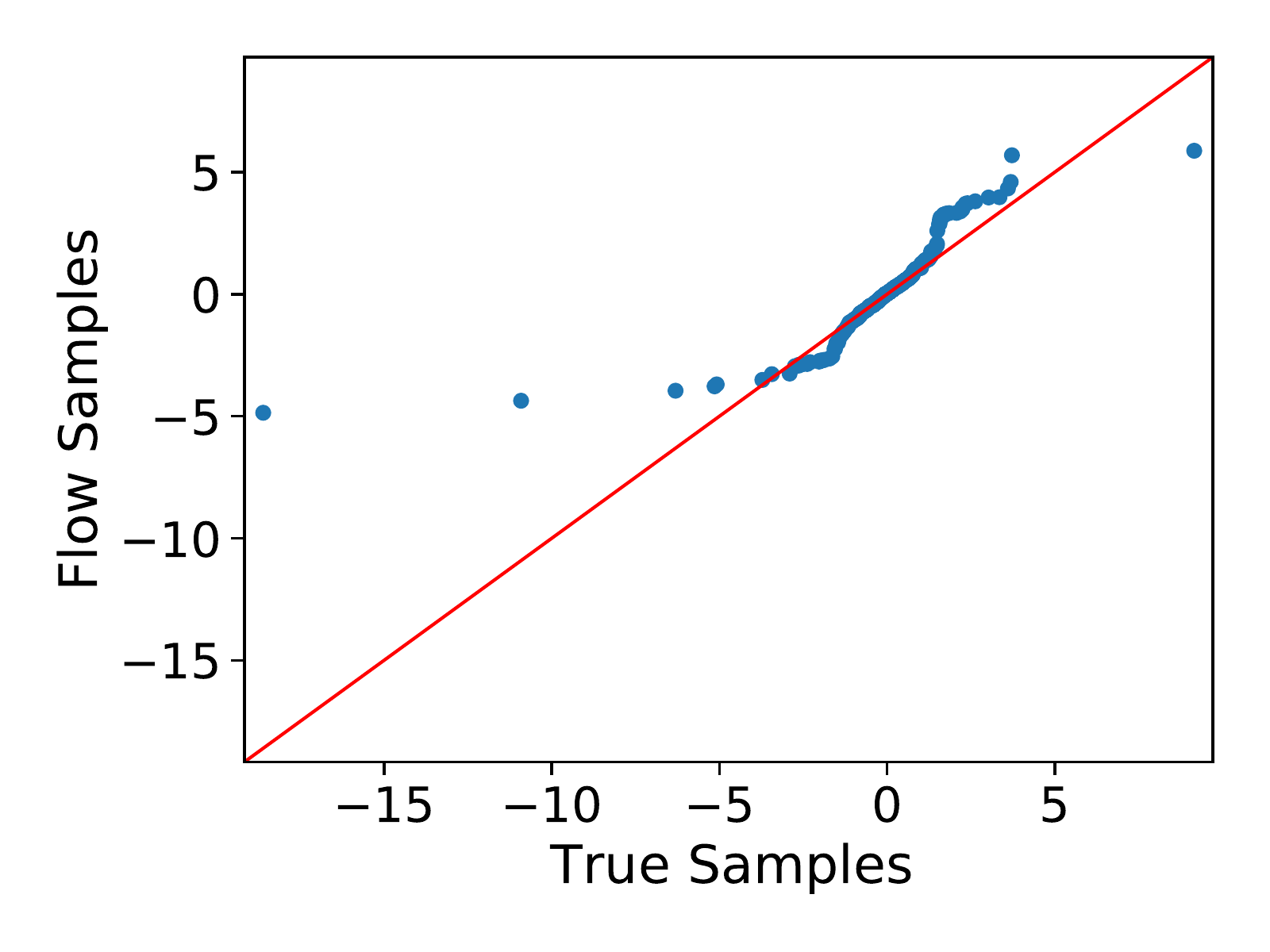}}
\end{minipage}
\begin{minipage}{0.24\textwidth}
\centerline{\includegraphics[width=\columnwidth]{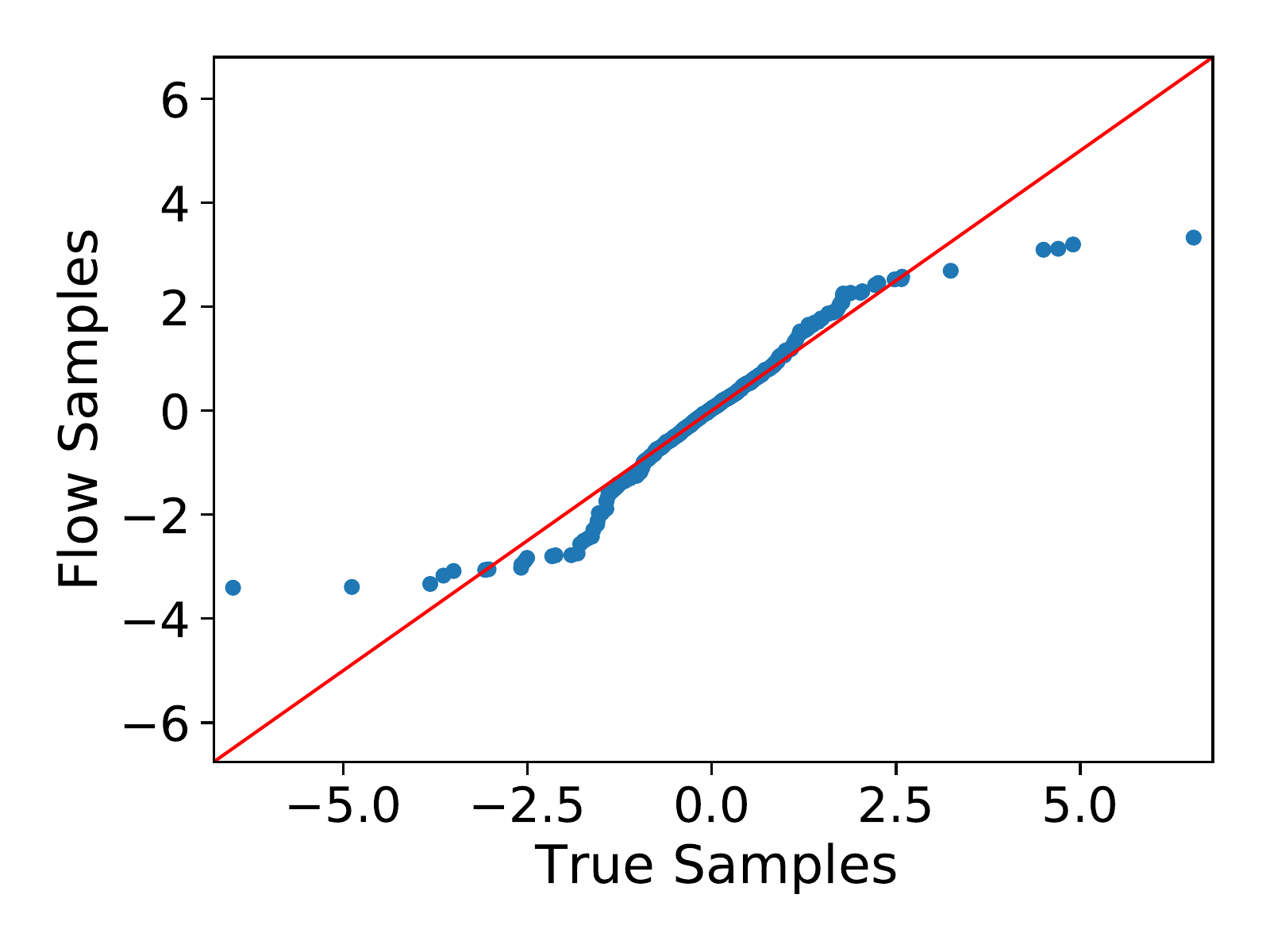}}
\end{minipage}
\begin{minipage}{0.24\textwidth}
\centerline{\includegraphics[width=\columnwidth]{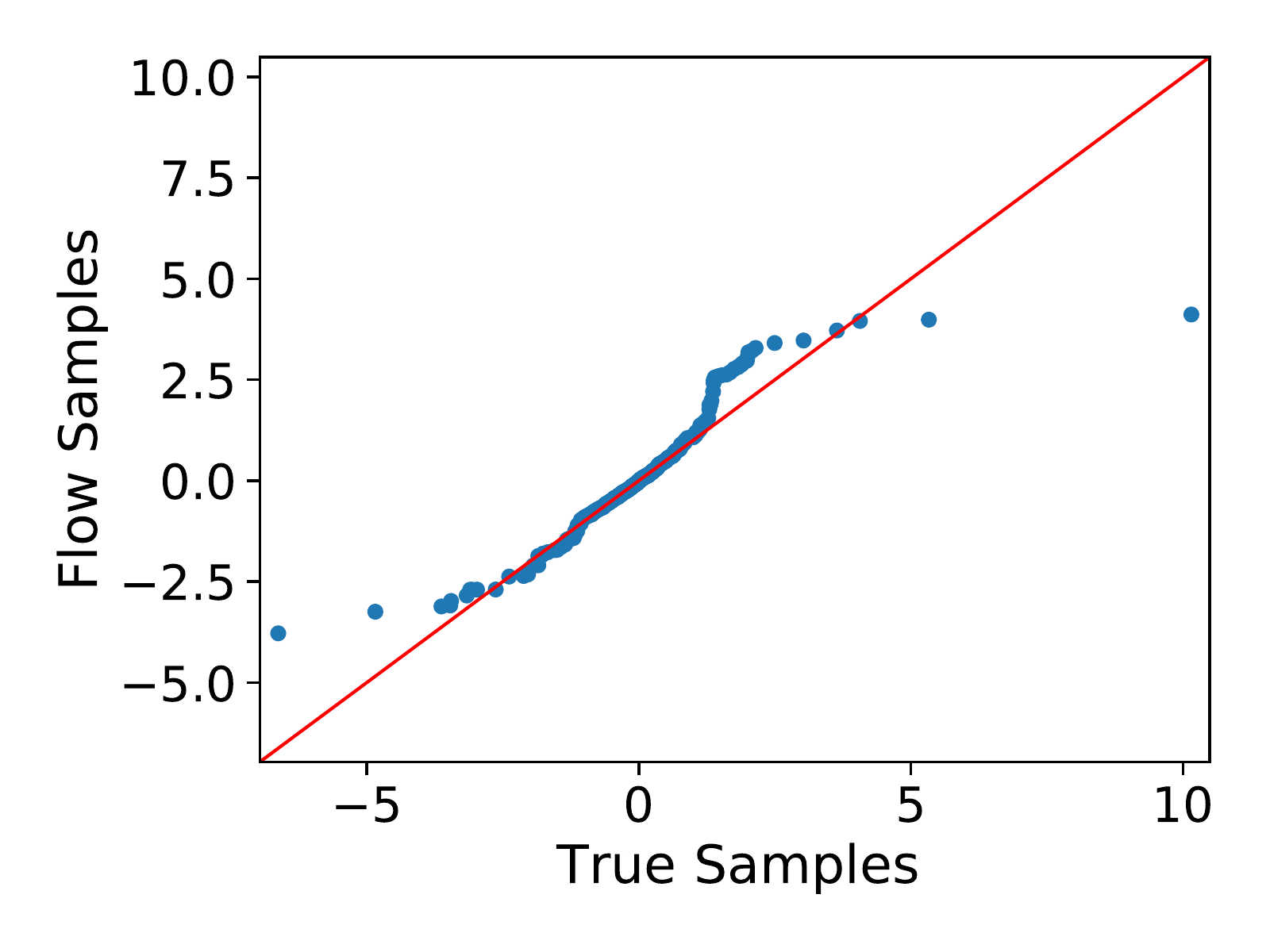}}
\end{minipage}
\begin{minipage}{0.24\textwidth}
\centerline{\includegraphics[width=\columnwidth]{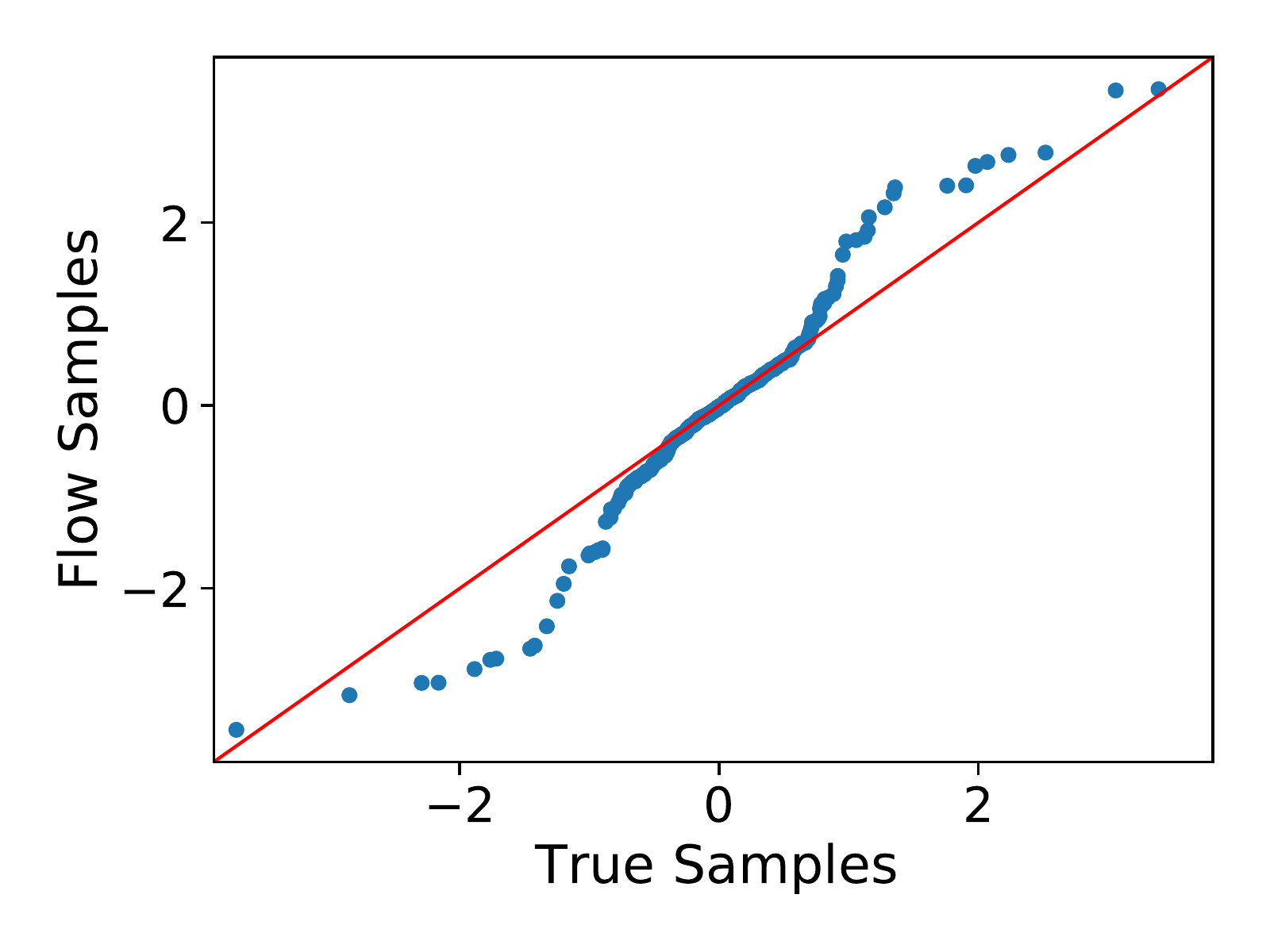}}
\end{minipage}

\begin{minipage}{0.24\textwidth}
\centerline{\includegraphics[width=\columnwidth]{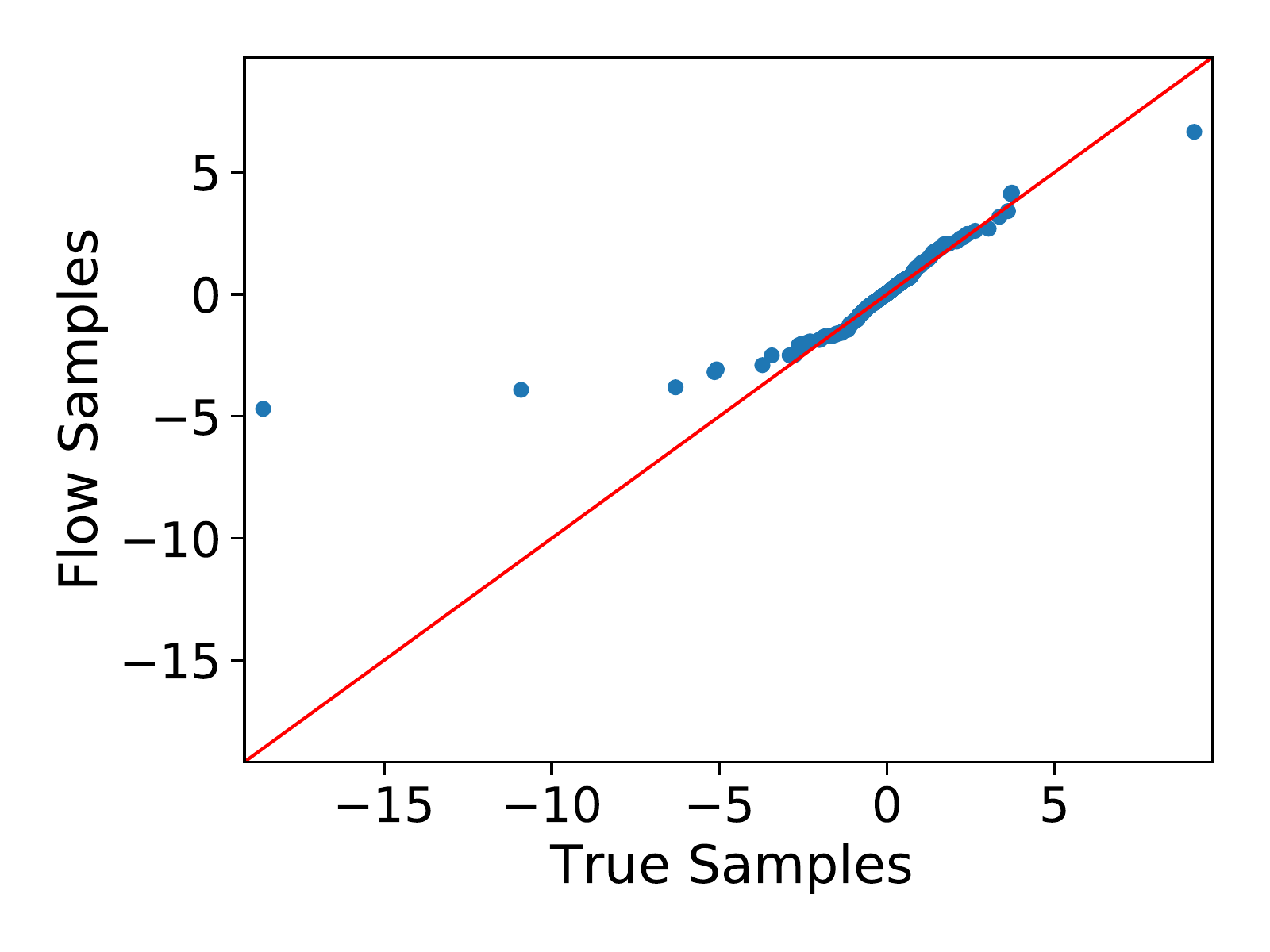}}
\end{minipage}
\begin{minipage}{0.24\textwidth}
\centerline{\includegraphics[width=\columnwidth]{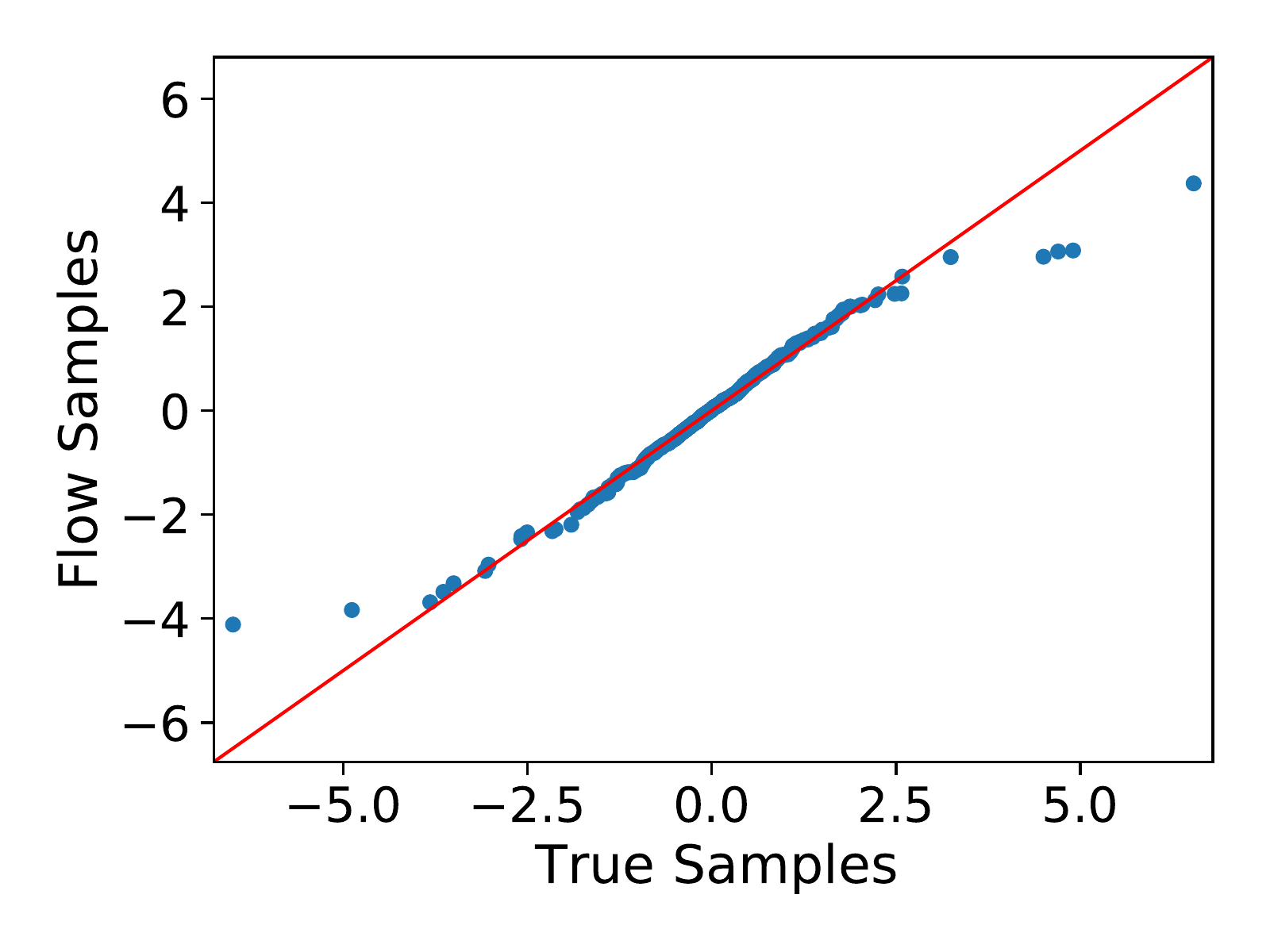}}
\end{minipage}
\begin{minipage}{0.24\textwidth}
\centerline{\includegraphics[width=\columnwidth]{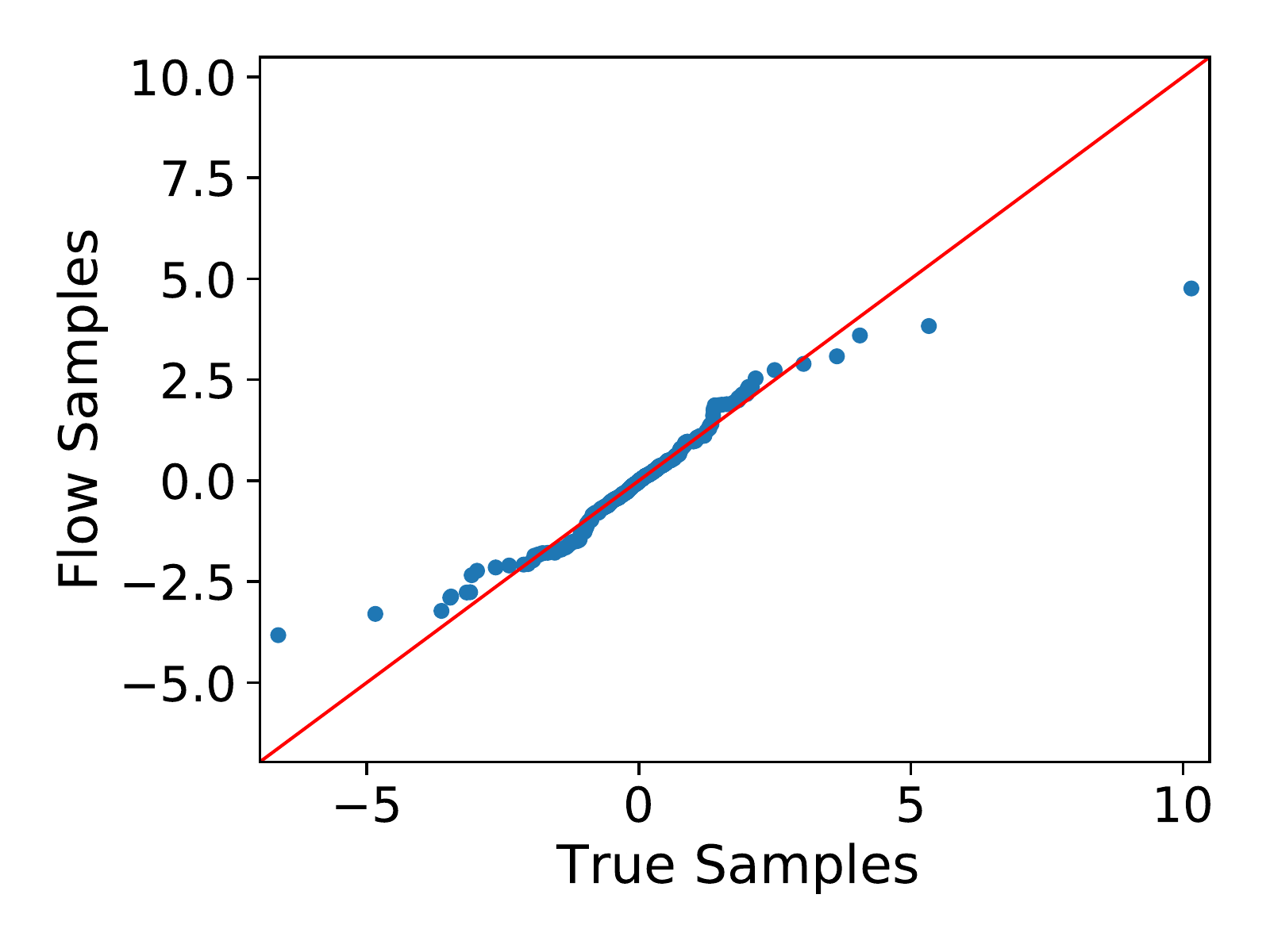}}
\end{minipage}
\begin{minipage}{0.24\textwidth}
\centerline{\includegraphics[width=\columnwidth]{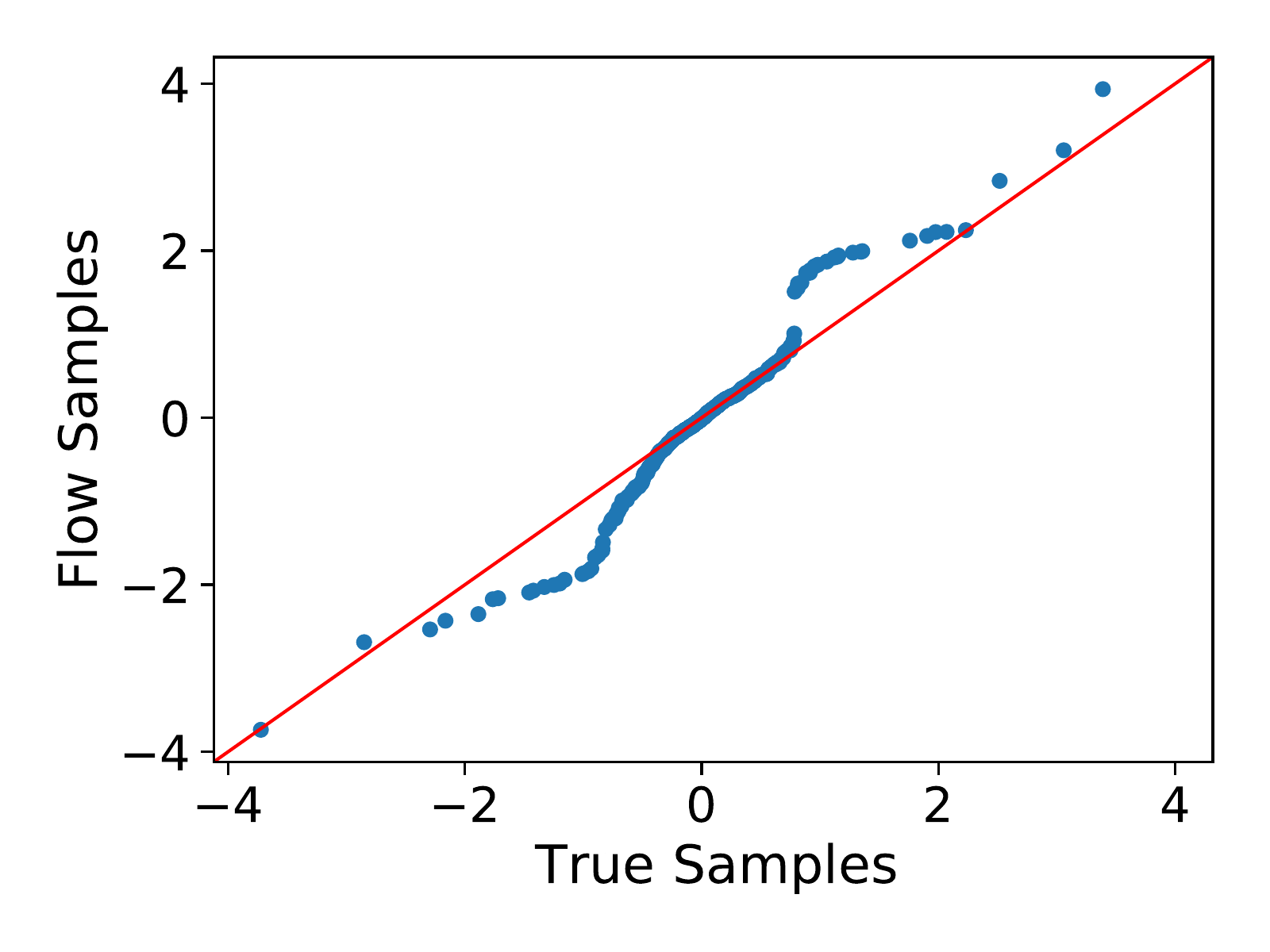}}
\end{minipage}

\begin{minipage}{0.24\textwidth}
\centerline{\includegraphics[width=\columnwidth]{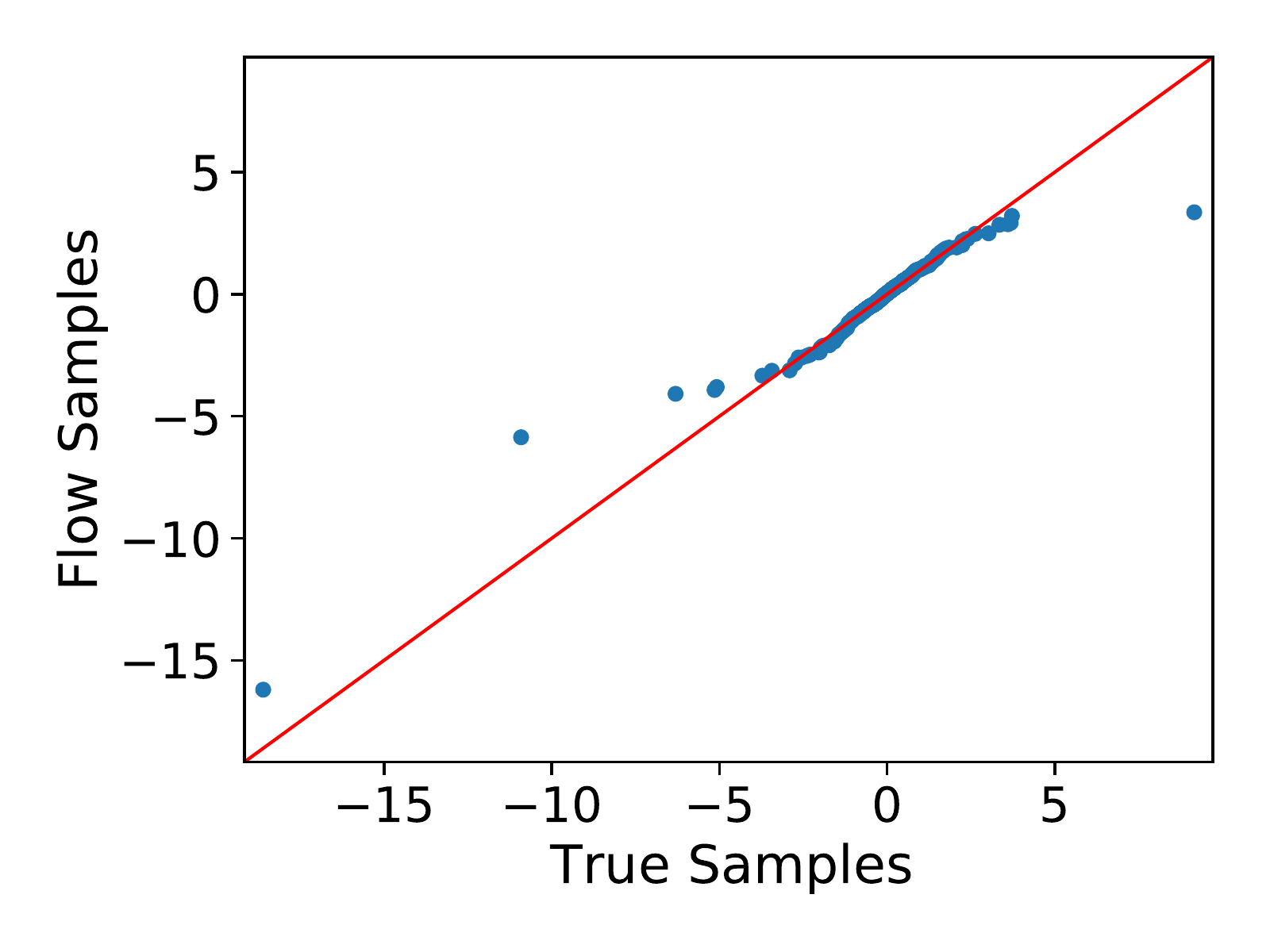}}
\end{minipage}
\begin{minipage}{0.24\textwidth}
\centerline{\includegraphics[width=\columnwidth]{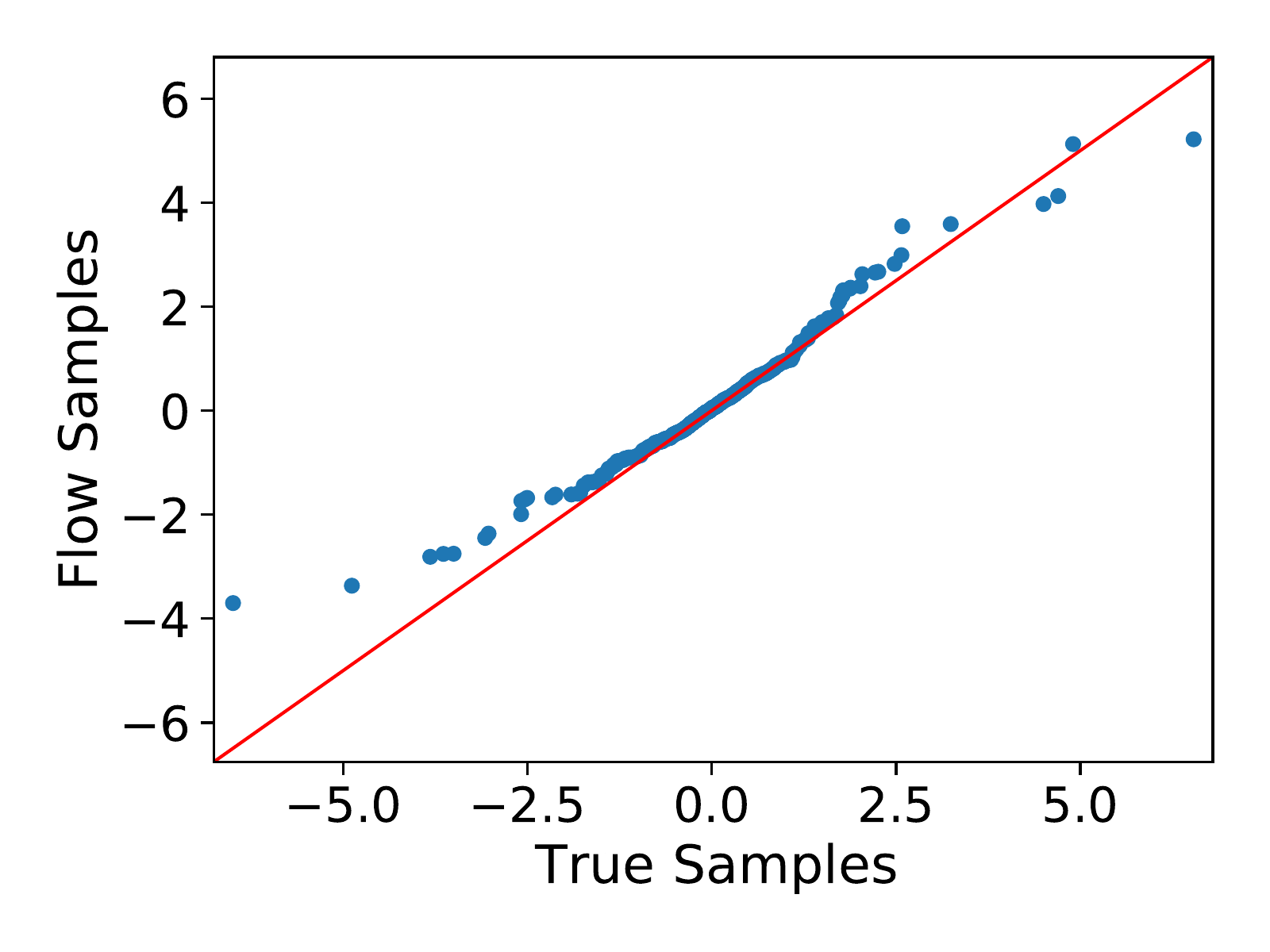}}
\end{minipage}
\begin{minipage}{0.24\textwidth}
\centerline{\includegraphics[width=\columnwidth]{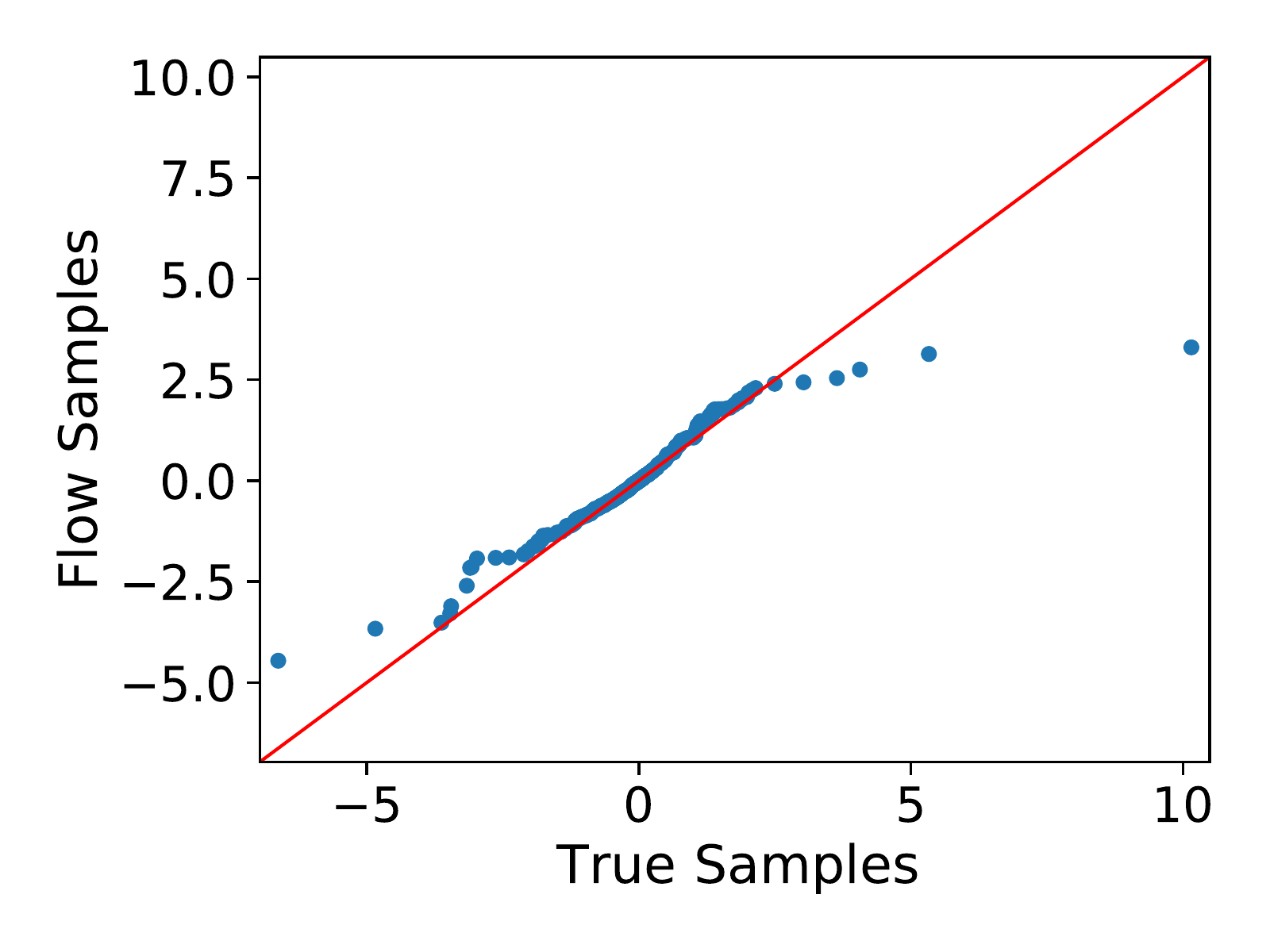}}
\end{minipage}
\begin{minipage}{0.24\textwidth}
\centerline{\includegraphics[width=\columnwidth]{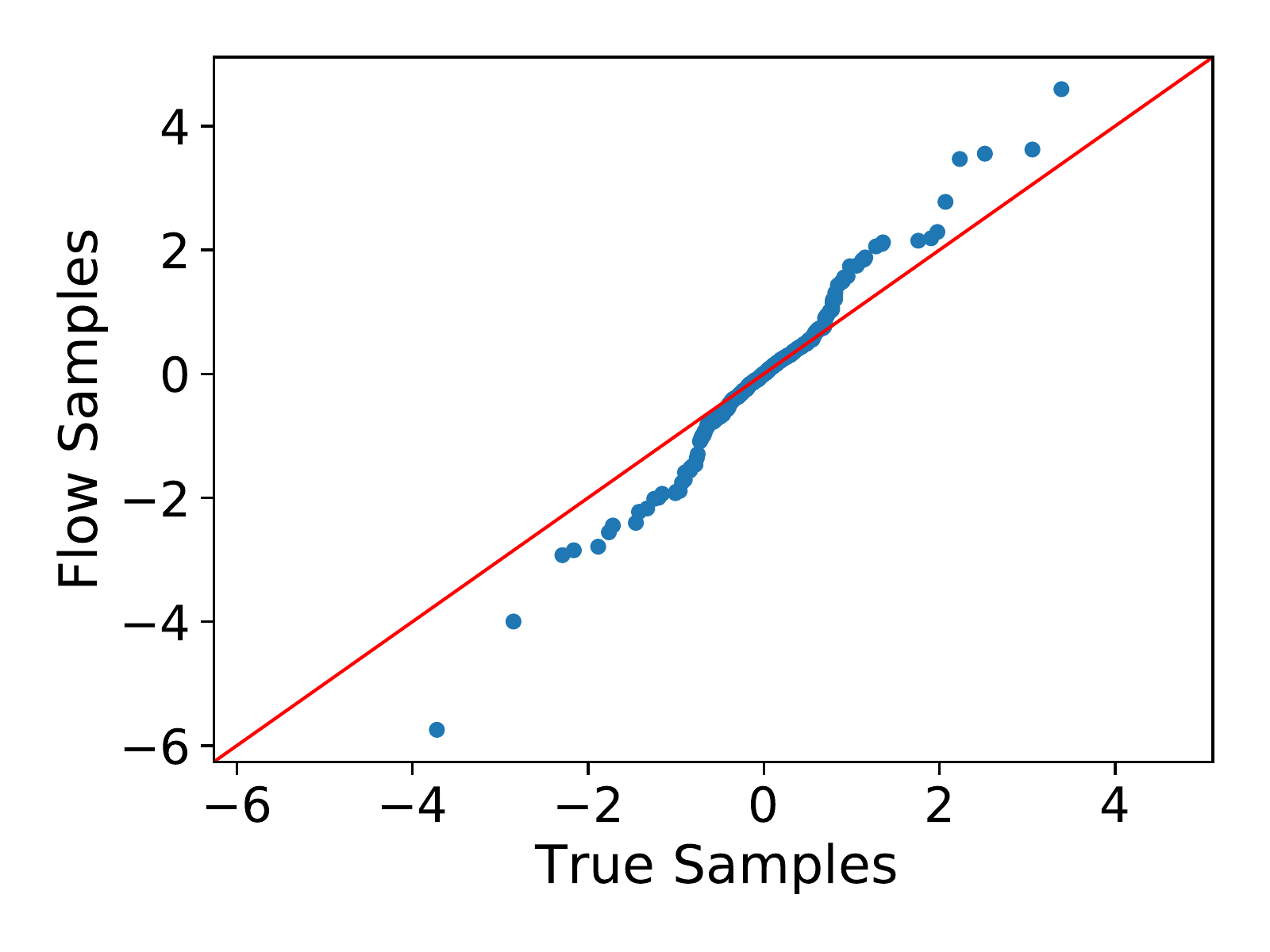}}
\end{minipage}
\end{center}
\vskip -0.2in
\caption{QQ-plots for the last $4$ heavy-tailed marginals in the setting $\nu=2$ and $d_h =4$. The rows of QQ-plots correspond to samples generated by vanilla, TAF, gTAF, and \name{}, respectively. Each marginal in depicted in one column.}
\label{fig:qq_plots_df2h4}
\end{figure}

\subsection{Climate Data}\label{sec:supp_climate}
In this section, we provide more details on the employed architectures of \name{} on the NWP-SAF dataset, which we visualize in Figure~\ref{fig:weather_true}. Furthermore, we present a more in-depth discussion about the results.   

We consider each quantity (i.e.~dry-bulb air temperature, atmospheric pressure, and cloud optical depth) at each atmospheric level as one component, leading to a $412$-dimensional dataset. Recall that \name{} requires a classification into light- and heavy-tailed marginals that lead to a reordering of the initial marginals, which we do as follows. 
This dataset---similar to other time-series-like datasets---gives us a natural autoregressive ordering, which we make use of in our permutation step. This leads to the initial permutation (compare with Step~2 in Figure~\ref{fig:overview} and Section~\ref{sec:method}), in which the first components are given by light-tailed components of the dry-bulb air temperature, followed by the reversed atmospheric pressure and the cloud-optical depth. However, in contrast to our synthetic experiments, where the selection of the heavy-tailed components was more or less trivial, this task is more complicated in a time series with highly dependent features. What we found works best in practice, is to deliberately choose a large set of heavy-tailed components according to Table~\ref{table:tails_weather}, while making the degree of freedom learnable.
Furthermore, we implemented all NFs---vanilla, TAF, gTAF, and \name{}---using $5$ autoregressive NSF layers with LU-linearities and their modified versions from Section~\ref{sec:mtaf_nsf}. The conditioner networks in the NSF-layers have $2$ hidden layers with $100$ hidden neurons in each layer, we set the tail-bounds to $2.5$, and each spline uses $3$ bins. We apply Batch-Norm after each NSF-layer. We optimize for $20\, 000$ steps using the Adam optimizer with a learning rate of $1$e-$4$ and a learning rate of $0.01$ for the tail indices and scheduled the rates using cosine annealing.   

We plot synthetic samples from the remaining flows in Figure~\ref{fig:weather_synthsamps}.

\begin{figure}[ht]
\vskip 0.2in
\begin{center}
\centerline{\includegraphics[width=0.5\columnwidth]{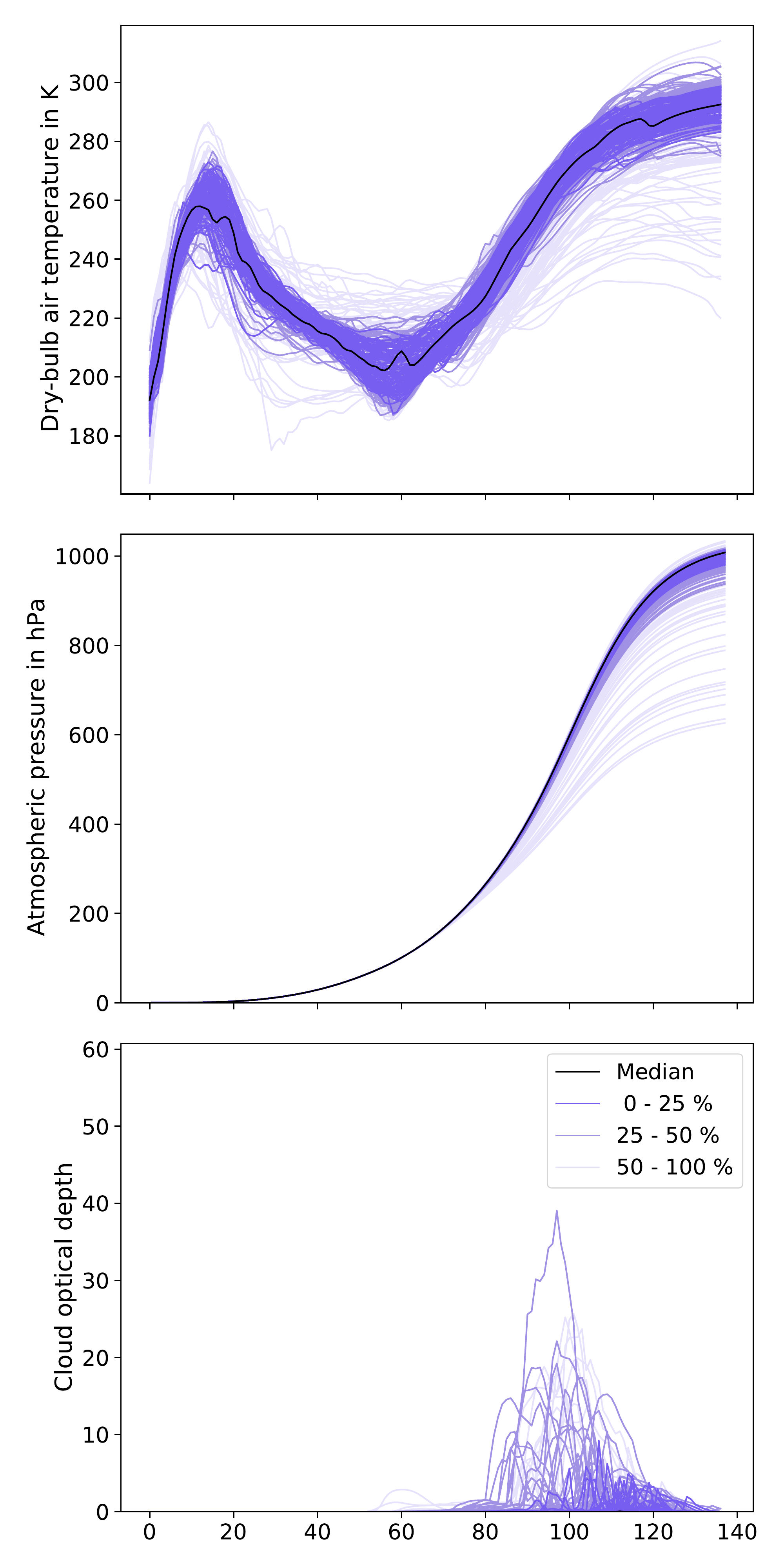}}
\caption{Real profiles from the NWP-SAF dataset. We used the implementation by \citet{meyer2021copula} to generate the figure. The profiles are ordered using band depth statistics \citep{depth_statistics}.} 
\label{fig:weather_true}
\end{center}
\vskip -0.2in
\end{figure}

\begin{figure}[ht]
\vskip 0.2in
\begin{center}
\centerline{\includegraphics[width=0.3\columnwidth]{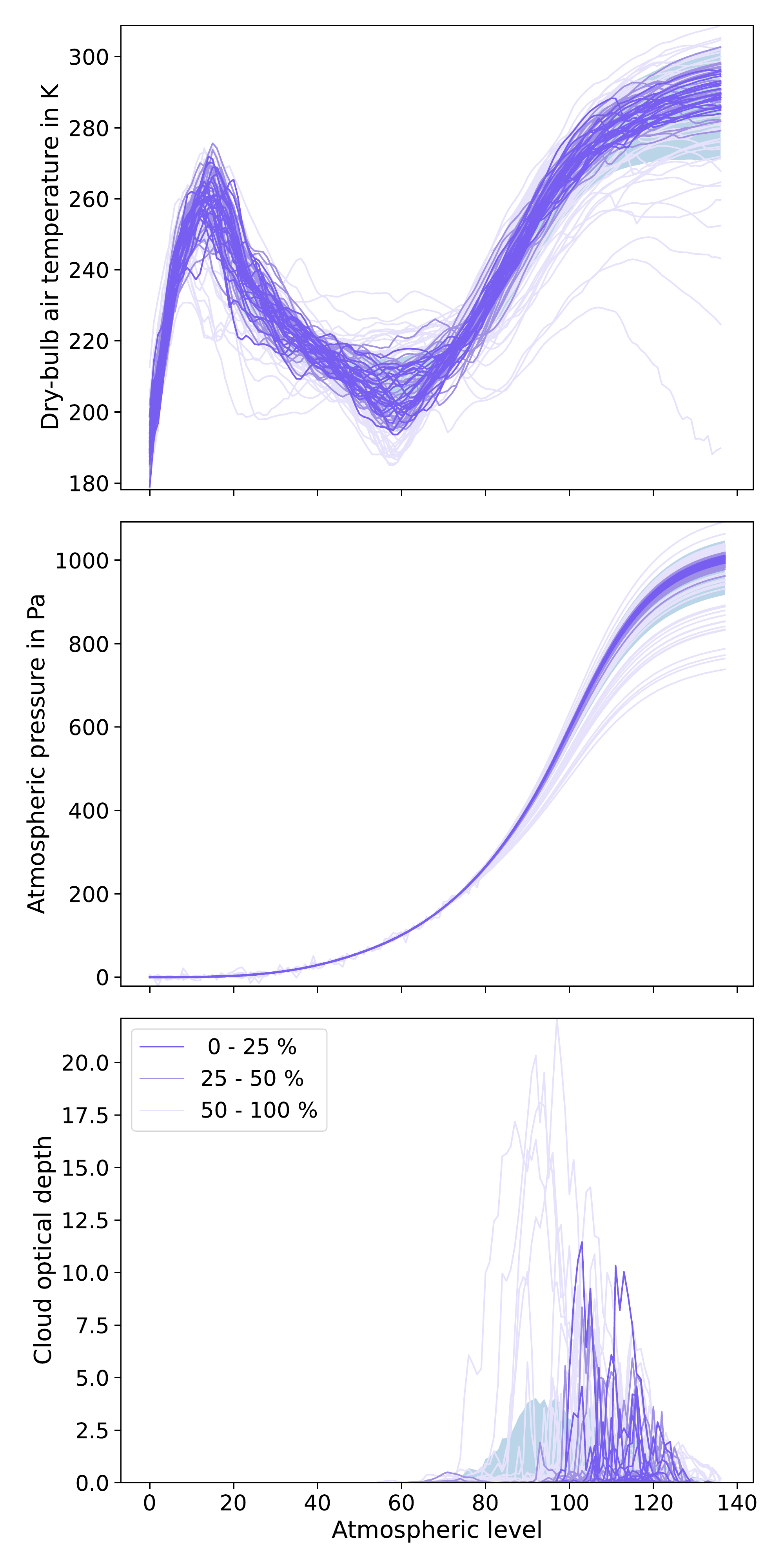}
\includegraphics[width=0.3\columnwidth]{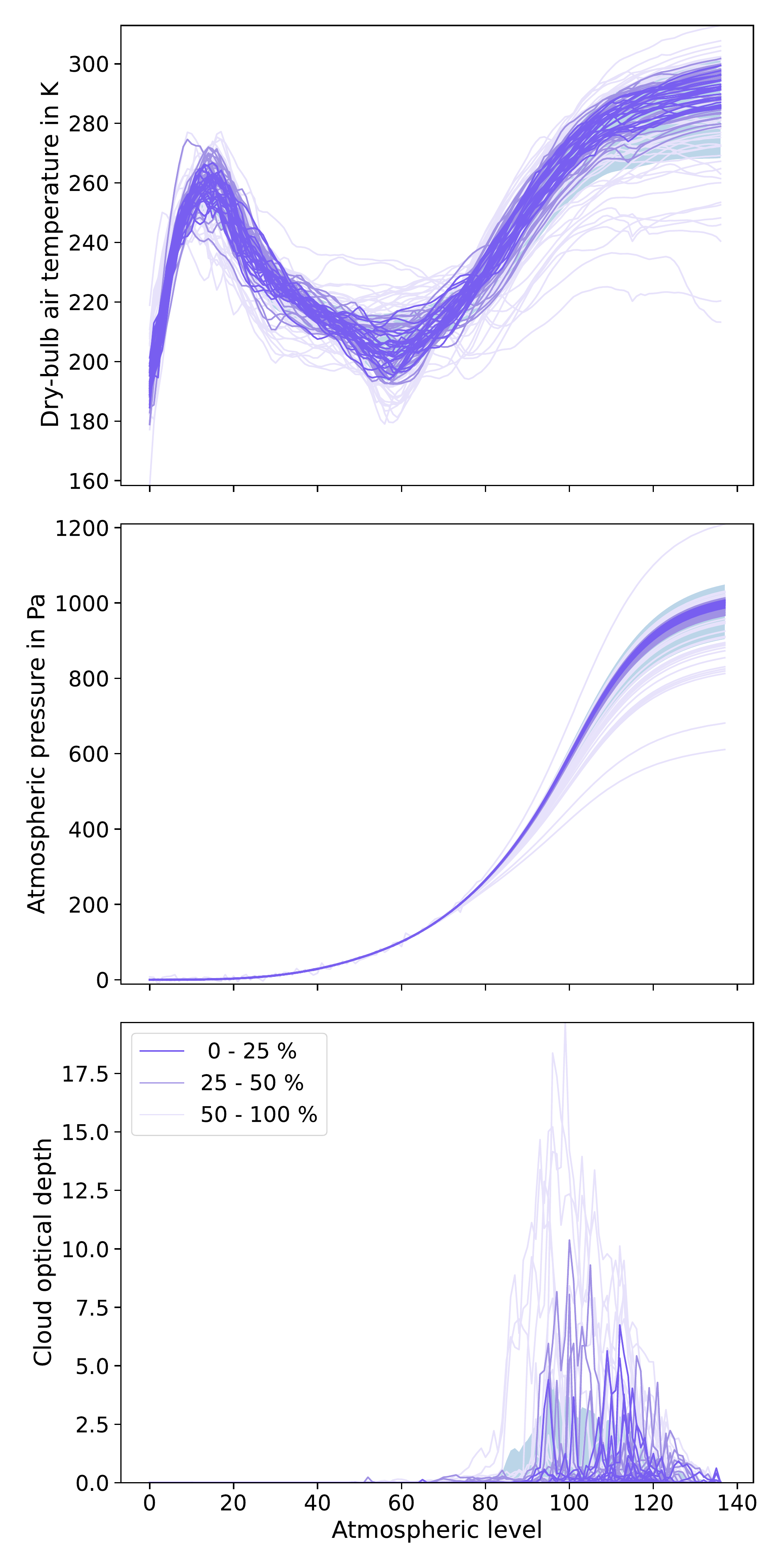}
\includegraphics[width=0.3\columnwidth]{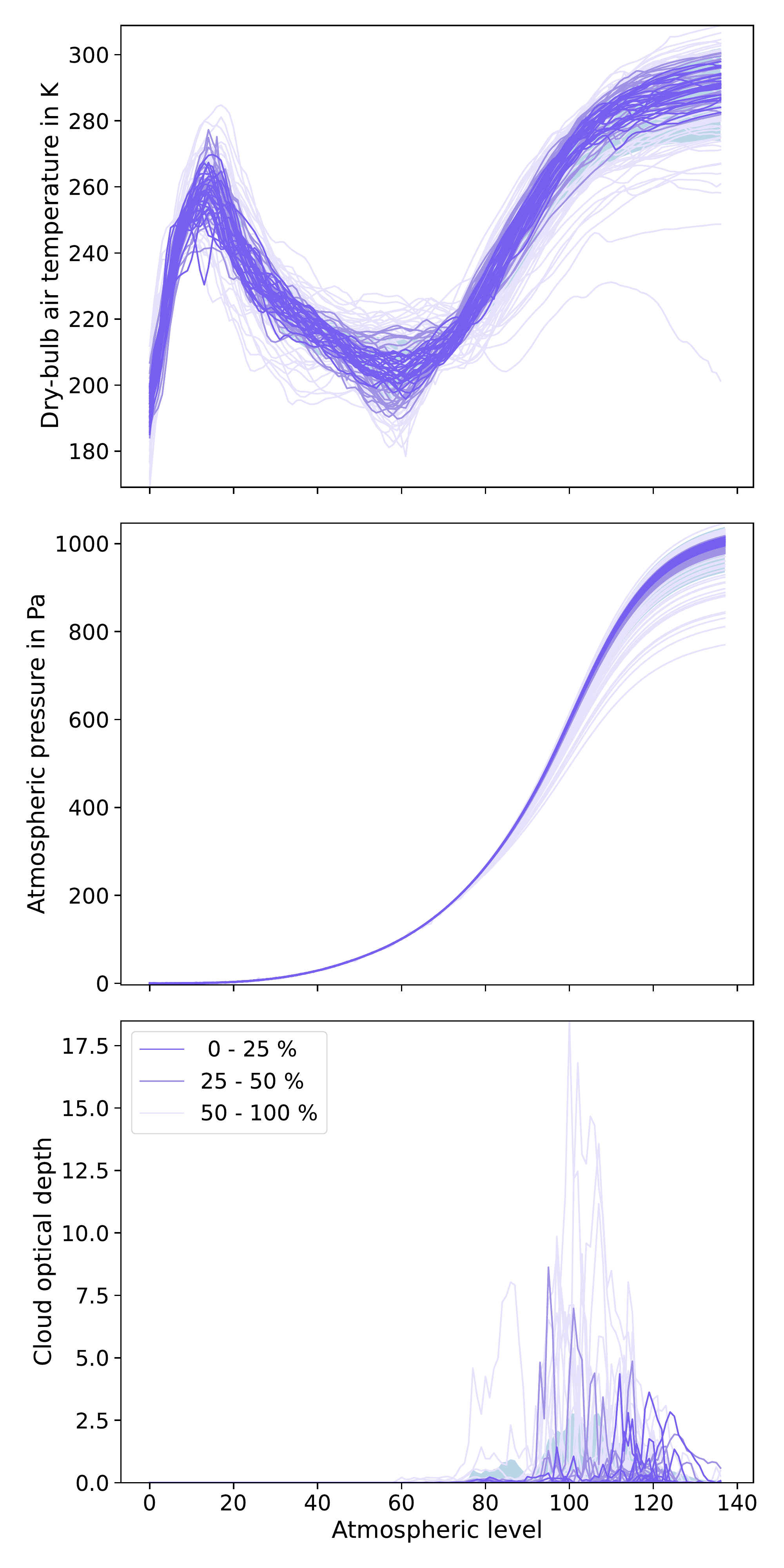}}
\caption{Synthetic flow samples using vanilla, TAF, and gTAF (from left to right), where we clipped the lower-values of the cloud-optical depth at $0$. The corresponding negative log-likelihoods are $-2094.35,\; -2117.48, \;-2121.65$, respectively. We used the implementation by \citet{meyer2021copula} to generate the figure. The profiles are ordered using band depth statistics \citep{depth_statistics} and the shaded areas represent standard deviations.} 
\label{fig:weather_synthsamps}
\end{center}
\vskip -0.2in
\end{figure}

\begin{table}[t]
\caption{Components (i.e.~measurements at a specific atmospheric level), which we manually select as heavy-tailed based on Figure~\ref{fig:weather_true}.}
\label{table:tails_weather}
\vskip 0.15in
\begin{center}
\begin{small}
\begin{sc}
\begin{tabular}{lcc}
\toprule
Measurement & Light-tailed & Heavy-tailed \\
\midrule
Dry-bulb air temperature in K & 1 - 79 & 80 - 137\\
Atmospheric pressure in hPa & 1 - 99 & 100 - 137 \\
Cloud optical depth & 1 - 57 & 58 - 137 \\ 
\bottomrule
\end{tabular}
\end{sc}
\end{small}
\end{center}
\vskip -0.1in
\end{table}


\end{document}